\documentclass[11pt]{article}

\usepackage[preprint]{acl}

\usepackage{times}
\usepackage{latexsym}

\usepackage[T1]{fontenc}

\usepackage[utf8]{inputenc}

\usepackage{microtype}

\usepackage{inconsolata}

\usepackage{graphicx}

\usepackage{adjustbox}
\usepackage{subcaption}
\usepackage{booktabs}
\usepackage{multirow}

\usepackage{amsmath}
\usepackage{amssymb}
\usepackage{natbib}
\usepackage{amsthm}

\newtheorem{proposition}{Proposition}

\usepackage{hyperref}
\usepackage{xcolor}
\definecolor{mycitecolor}{RGB}{180,40,40}

\usepackage{tcolorbox}
\definecolor{lightgraybg}{RGB}{254,250,246}

\title{MiRD: Reliable Set-Valued Prediction for Open-Ended Question Answering via Miscoverage Risk Decomposition}

\author{
  Anqi Hu$^{1\dagger}$, Zhiyuan Wang$^{1\dagger *}$, Zijun Jia$^2$, Bo Fu$^{1*}$\\ \\
  $^1$University of Electronic Science and Technology of China\\
  $^2$Beihang University\\
  \small{
   \textbf{Correspondence:} \texttt{yhzywang@gmail.com,fubo@uestc.edu.cn}
 }
}

\begin{document}
\maketitle
\begin{abstract}
Reliable set-valued prediction provides a principled way to mitigate hallucinations in open-ended question answering (QA), yet existing conformal approaches typically rely on a fragile premise: finite sampling must already produce at least one admissible candidate, or calibration examples violating this condition are discarded. 
In this paper, we introduce \textbf{MiRD}, a two-stage framework that decomposes overall miscoverage into \emph{sampling failure} and \emph{conditional selection failure}. 
In Stage I, MiRD establishes an expectation-level marginal upper bound on the probability that finite sampling produces no admissible answer under a fixed budget. 
In Stage II, conditioned on sampling success, MiRD calibrates a conformal selection threshold using admission-correlated nonconformity scores defined over the full calibration set, thereby preserving calibration-set integrity. 
Across three open-ended QA datasets and eight models, MiRD controls sampling risk, conditional selection risk, and overall miscoverage, while yielding tighter first-stage bounds than PAC-style alternatives and more adaptive prediction sets than successful-only calibration.
\end{abstract}

\section{Introduction}
\label{sec: Introduction}
Large language models (LLMs) have become a central component of open-ended question answering (QA) systems~\citep{sui2025can,duan2025guidellm}, yet their generated answers remain unreliable~\citep{wang2025word,huang2024position}: even strong LLMs can produce hallucinated or semantically misaligned responses. 
A natural route toward reliability is split conformal prediction (SCP), which replaces a single answer with a prediction set that is ensured to contain at least one admissible candidate with high probability~\citep{campos2024conformal,zhou2025conformal}. 
However, such a guarantee implicitly presumes that the candidate pool to be filtered already contains an admissible answer. 
This premise is nontrivial in open-ended QA, where the output space is unbounded, correctness must be defined semantically, and a finite number of stochastic samples may fail to produce any admissible answer at all~\citep{wang2025sample,wang2026safer}.

This observation suggests that reliability in open-ended QA should not be viewed merely as a post-hoc filtering problem over a pre-existing candidate set. 
Even when an admissible answer exists in principle, finite sampling may fail to generate it~\citep{wang2025copu}; even when such an answer is sampled, subsequent conformalized selection may remove it. 
Existing conformal methods typically focus on only the latter source of error: they either assume that every sampled candidate set contains at least one admissible answer~\citep{wang-etal-2024-conu,wang2025sample}, or discard calibration data points that violate this condition~\citep{kaur2024addressing}, thereby restricting calibration to a success-conditioned subset of the data distribution. 
More recent sample-then-filter approaches relax this assumption using confidence-interval-based procedures~\citep{wang2026safer}, but establish PAC-style guarantees with an additional confidence parameter rather than a direct marginal conformal guarantee. 
These limitations motivate a framework that explicitly separates \emph{finite-sampling uncertainty} from \emph{conditional post-selection uncertainty}, while preserving the integrity of the full calibration set~\citep{wang2025sconu}. 

\begin{figure*}[!t]
    \centering
    \includegraphics[width=\linewidth]{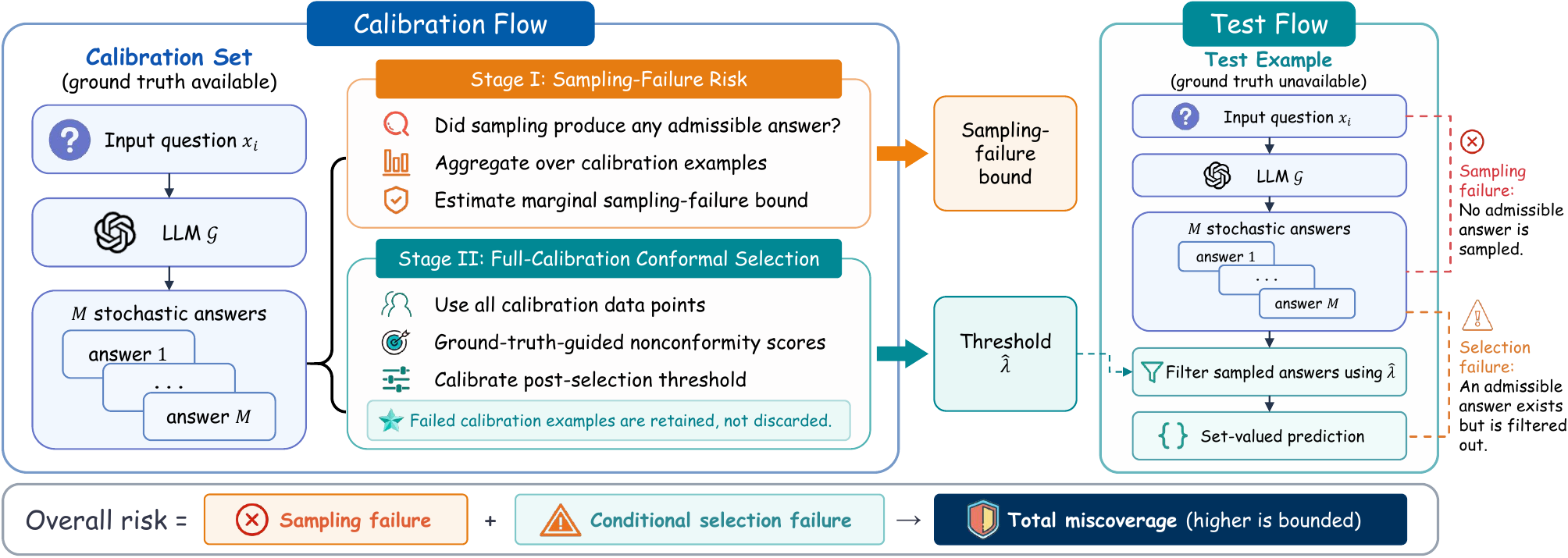}
    \vspace{-6mm}
    \caption{Overview of two-stage miscoverage risk decomposition.}
    \vspace{-4mm}
    \label{fig:overview}
\end{figure*}

In this work, we propose \textbf{MiRD}, a \textbf{Mi}scoverage \textbf{R}isk \textbf{D}ecomposition framework for reliable set-valued prediction in open-ended QA. 
The key idea is to decompose overall miscoverage into two statistically distinct sources: 
(i) a \emph{marginal sampling-failure risk}, where finite stochastic sampling fails to produce any admissible answer, and 
(ii) a \emph{conditional selection-failure risk}, where conformalized filtering removes all admissible answers despite their presence in the sampled candidate set. 
This decomposition naturally leads to a two-stage procedure. 
In Stage I, we derive an expectation-level marginal upper bound on the finite-sampling failure risk under a fixed sampling budget. 
In Stage II, conditional on sampling success, we calibrate a conformal selection threshold using admission-correlated nonconformity scores defined over the full calibration set. 
Crucially, MiRD does not discard calibration data whose sampled candidates contain no admissible answer. 
Instead, it leverages the availability of ground-truth answers during calibration to construct score information for all calibration examples, thereby preserving calibration-set integrity while enabling valid threshold calibration. 
An overview of MiRD is shown in Figure~\ref{fig:overview}.

This decomposition offers both conceptual and practical benefits. 
Conceptually, it separates open-ended miscoverage into finite-sampling failure and post-selection failure. 
Practically, it controls sampling failure explicitly and avoids the distributional distortion introduced by successful-only calibration. 
Although the selection stage relies on a score-level conditional compatibility assumption, we further show that, under our uncertainty construction, full-calibration thresholding can be formulated as a monotone enlargement of successful-only calibration: retaining sampling-failed calibration data yields a weakly more conservative threshold and thus a superset predictor on sampling-successful test data. 
This explains why MiRD preserves conformal validity while using the full calibration set.

We evaluate MiRD on three QA datasets with eight LLMs from four widely utilized model families. 
MiRD consistently controls finite-sampling risk, conditional selection risk, and overall miscoverage risk. 
Compared with PAC-style confidence bounds~\citep{bates2021distribution}, MiRD provides substantially tighter first-stage bounds while maintaining empirical validity. 
Compared with successful-only calibration~\citep{wang-etal-2024-conu,wang2025sample,kaur2024addressing}, MiRD yields prediction sets that better adapt to example difficulty. 
Overall, these results show that miscoverage risk decomposition benefits both statistical validity and the practical utility of set-valued prediction in open-ended generation.

In summary, our contributions are threefold:
\vspace{-0.2cm}
\begin{itemize}
    \item We reframe reliable open-ended QA as a miscoverage risk decomposition problem, separating overall failure into marginal sampling failure and conditional selection failure.
    \vspace{-0.2cm}
    \item We introduce MiRD, a two-stage calibration framework that combines an expectation-level marginal upper bound for finite-sampling risk with full-calibration conformalized selection for conditional post-selection risk control.
    \vspace{-0.2cm}
    \item We validate MiRD across multiple LLM families and QA benchmarks, achieving tighter sampling-risk bounds, more adaptive prediction sets, and competitive end-to-end miscoverage compared with prior alternatives.
\end{itemize}

\section{Related Work}
\label{sec: Related Work}

\noindent \textbf{\textit{Split Conformal Prediction.}} 
SCP converts arbitrary nonconformity scores into prediction sets with finite-sample marginal coverage guarantees under exchangeability~\citep{papadopoulos2008inductive,angelopoulos2023conformal,tan2025conformal}. 
Given a held-out calibration set, SCP calibrates a threshold from empirical nonconformity scores and applies it to future test examples according to a user-specified risk level. 
Recent work has adapted SCP to LLMs, but largely focuses on closed-ended scenarios~\citep{kumar2023conformal,cresswell2024conformal,ye2024benchmarking} or output modification~\citep{cherian2024large}. 
Reliable set-valued prediction for open-ended QA remains less settled, because admissible answers must be obtained from a finite sampled candidate pool rather than selected from a fixed label space.

\noindent \textbf{\textit{SCP in Open-Ended QA.}} 
For open-ended generation, \citet{kaur2024addressing} defines nonconformity as the uncertainty of admissible semantics within sampled answers~\citep{kuhn2023semantic}. 
ConU targets API-only settings and constructs self-consistency-based nonconformity, but assumes that every sampled set contains admissible answers~\citep{wang-etal-2024-conu}. 
TRON further develops a two-stage risk control framework that combines sampling-size calibration with conformal selection, while relying on the same sampling-success premise for both calibration and test data~\citep{wang2025sample}. 
\citet{quach2024conformal} and SAFER~\citep{wang2026safer} relax this premise through confidence-interval-based sample-then-filter procedures, but offer PAC-style guarantees with an additional confidence parameter $\delta$. 
In contrast, MiRD explicitly decomposes overall miscoverage into marginal finite-sampling failure and conditional post-selection failure, and performs full-calibration conformalized filtering without discarding sampling-failed calibration points. 

See additional related work in Appendix~\ref{sec: Additional Related Work}.






\section{Methodology}
\label{sec: Methodology}
\subsection{Notations and Problem Formulation}
\label{sec:notation_problem}

Let $\mathcal{G}: \mathcal{X}\to\mathcal{Y}$ be a pretrained LLM that maps an input prompt $x\in\mathcal{X}$ to a textual answer $\hat{y} \in \mathcal{Y}$. 
Following standard split conformal protocols~\citep{angelopoulos2024theoretical}, we hold out a calibration set $\mathcal{D}_{\mathrm{cal}}=\{(x_i,y_i^*)\}_{i=1}^{N}$, where $x_i\in\mathcal{X}$ is the $i$-th prompt and $y_i^*\in\mathcal{Y}$ is the corresponding ground truth. 
We consider one test example $(x_{N+1}, y_{N+1}^*)$, where $y_{N+1}^*$ is unavailable to the user. 
We assume that the calibration examples together with the test point are exchangeable. 
For each input $x$, we draw $M$ stochastic answers from $\mathcal{G}$ to form a finite candidate set $\mathcal{G}_M(x)=\{\hat y_m\}_{m=1}^{M}$. 
For calibration and test examples, we write $\mathcal{G}_M(x_i)=\{\hat y_m^{(i)}\}_{m=1}^{M}$ for all $i\in\{1, \cdots,N+1 \}$. 
To evaluate answer correctness, let $A:\mathcal{Y}\times\mathcal{Y}\to\{0,1\}$ be a binary admission function, where $A(y^*,\hat y)=1$ indicates that $\hat y$ is admissible with respect to the ground-truth answer $y^*$, and $A(y^*,\hat y)=0$ otherwise. 
Note that the admission rule can be instantiated by any task-specific semantic equivalence criterion (e.g., bi-entailment), and our framework is agnostic to its specific form.

A central challenge in open-ended generation is that finite sampling does not ensure the appearance of an admissible answer. 
To capture this irreducible source of error, we define the {sampling failure} indicator for the test example as
\vspace{-0.2cm}
\begin{equation}
Z_{N+1}
\!=\!
\mathbf{1}\!\left(
\forall \hat y \!\in\! \mathcal{G}_M(x_{N+1}), A(y_{N+1}^*, \hat y)\!=\! 0
\right),
\end{equation}
which equals 1 if no admissible answer is observed among the $M$ candidates for the future test input $x_{N+1}$, and 0 otherwise. We denote the corresponding sampling-failure probability by
\vspace{-0.2cm}
\begin{equation}
p_{\mathrm{fail}}:=\Pr(Z_{N+1}=1).
\end{equation}
\textbf{Our first objective} is to characterize this quantity through a marginal upper bound in expectation. 

Conditioned on the event $\{Z_{N+1}=0\}$, i.e., the sampling set containing at least one admissible answer, we then perform conformalized selection. 
Let $U_{\mathcal{G}}(\hat y\mid x)\in\mathbb{R}$ be an uncertainty score assigned by model $\mathcal{G}$ to the candidate $\hat y$ under input $x$. 
Given a threshold $\lambda$, we define the filtered prediction set as
\vspace{-0.2cm}
\begin{equation}
\mathcal{C}_{\lambda}(x_{N+1})
\!=\!
\left\{
\hat y \!\in\! \mathcal{G}_M(x_{N+1})\!:\!
U_{\mathcal{G}}(\hat y \!\mid\!  x_{N+1}) \!\le\! \lambda
\right\}.
\label{eq:pred_set_main}
\end{equation}
To formalize failure after filtering, we then define the miscoverage indicator as 
\vspace{-0.2cm}
\begin{equation}
R_{N+1}(\lambda)
\!=\!
\mathbf{1}\!\left(
\forall \hat y \!\in\! \mathcal{C}_{\lambda}(x_{N+1}),
A(y_{N+1}^*,\hat y)\!=\!0
\right).
\label{eq:sj_def_main}
\end{equation}
\textbf{Our second objective} is to calibrate a threshold $\hat\lambda$ such that
\vspace{-0.2cm}
\begin{equation}
\Pr\!\left(
R_{N+1}(\hat\lambda)=1 \mid Z_{N+1}=0
\right)
\le \alpha,
\label{eq:conditional_goal_main}
\end{equation}
guided by a user-specified risk level $\alpha\in(0,1)$.

Given that event $\{Z_{N+1}=0\}$ is unobservable at test time, constraining the conditional risk in Eq.~\eqref{eq:conditional_goal_main} is insufficient to guarantee the reliability of the final prediction sets. 
\textbf{Our overall goal} is therefore to construct coverage-valid set-valued predictions for future test inputs by explicitly decomposing the overall miscoverage risk into two sources: 
(i) the marginal risk that finite sampling produces no admissible answer at all, and (ii) the conditional risk that conformalized selection removes all admissible answers when such answers do exist.

To this end, we decompose the overall risk:
\vspace{-0.2cm}
\begin{equation}
\begin{split}
    \Pr&\!\left(R_{N+1}(\lambda)\!=\!1\right)
\!=\!
\Pr(Z_{N+1}=1)+\\
&\Pr\!\left(R_{N+1}(\lambda)\!=\!1\!\mid \!Z_{N+1}\!=\!0\right)\! \Pr(Z_{N+1}\!=\!0).
\end{split}
\label{eq:risk_decomposition_main}
\end{equation}
Eq.~\eqref{eq:risk_decomposition_main} motivates our two-stage framework: we first upper-bound the marginal sampling risk induced by finite generation, and then calibrate $\hat\lambda$ to control the conditional selection risk given $Z_{N+1}=0$.

\subsection{Bounding Sampling-Failure Probability}
\label{sec:bound_sampling_failure}

The conditional guarantee in Eq.~\eqref{eq:conditional_goal_main} only applies to test instances for which finite sampling has already produced at least one admissible answer. 
We therefore first quantify the marginal probability that a fixed sampling budget $M$ fails to generate any admissible answer at all. 
Since our goal is to control the average failure rate over future test data, rather than to identify which particular instances will fail, we target a marginal guarantee in expectation. 

To make the dependence on the sampling budget explicit, for each data point $(x_i,y_i^*)$, we define
\vspace{-0.2cm}
\begin{equation}
Z_i(M)
=
\mathbf{1}\!\left(
\forall \hat y\in\mathcal{G}_M(x_i),\;
A(y_i^*,\hat y)=0
\right).
\label{eq:ziM_main}
\end{equation}
Thus, $\{Z_i(M)=1\}$ indicates that no admissible answer is observed within $M$ stochastic sampling. 
We assume that the candidate sets are generated sequentially, so that $\mathcal{G}_{M_1}(x)\subseteq\mathcal{G}_{M_2}(x)$ whenever $M_1\le M_2$. 
Under the construction, $Z_i(M)$ is non-increasing in $M$ and bounded in $[0,1]$.

Given a fixed sampling budget $M$, we define the empirical sampling-failure rate on average over the calibration set as
\vspace{-0.2cm}
\begin{equation}
\widehat{R}_N(M)
=
\frac{1}{N}\sum_{i=1}^{N} Z_i(M).
\label{eq:emp_sampling_failure_main}
\end{equation}
Since $Z_i(M)\in\{0,1\}$, the sampling-failure probability for future test points satisfies
\vspace{-0.2cm}
\begin{equation}
p_{\mathrm{fail}}(M)\!
=\!
\Pr\!\big(Z_{N+1}(M)\!=\!1\big)
\!=\!
\mathbb{E}[Z_{N+1}(M)],
\label{eq:pfail_mean_equiv_main}
\end{equation}
which means, for this binary loss, the failure probability and the expected loss coincide~\citep{angelopoulos2023conformal,angelopoulos2024theoretical}. 

The following proposition connects the test-time failure probability to the empirical sampling-failure rate measured on the calibration set.

\begin{proposition}[Marginal upper bound for a fixed sampling budget]
\label{prop:fixedM_marginal_bound_main}
Assume that the $N$ calibration data points and one future test data are exchangeable. 
Then, for any fixed sampling budget $M$,
\vspace{-0.2cm}
\begin{equation}
\begin{split}
    p_{\mathrm{fail}}(M)
&=
\Pr\!\big(Z_{N+1}(M)=1\big)\\
&\le
\mathbb{E}\!\left[
\frac{N}{N+1}\widehat{R}_N(M)+\frac{1}{N+1}
\right].
\end{split}
\label{eq:fixedM_marginal_bound_main}
\end{equation}
\end{proposition}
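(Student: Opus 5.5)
The argument rests on exchangeability alone. I treat $Z_i(M)$, for $i=1,\dots,N+1$, as a fixed measurable function of the triple $(x_i,y_i^*,\mathcal{G}_M(x_i))$. The sampling randomness is generated independently per example, for instance through per-example seeds drawn i.i.d. and appended to the data point. Exchangeability of the $N+1$ data points therefore carries over to the augmented triples, and hence to the vector $(Z_1(M),\dots,Z_{N+1}(M))$. The first step is to make this explicit: the joint law of the $Z_i(M)$ is invariant under permutations of the indices.

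The second step is to show that every $Z_i(M)$ has the same mean. Exchangeability gives $\mathbb{E}[Z_{N+1}(M)]=\mathbb{E}[Z_i(M)]$ for every $i$. Averaging over all $N+1$ indices then gives
\[
p_{\mathrm{fail}}(M)=\mathbb{E}[Z_{N+1}(M)]=\mathbb{E}\Bigl[\tfrac{1}{N+1}\textstyle\sum_{i=1}^{N+1}Z_i(M)\Bigr]
=\mathbb{E}\Bigl[\tfrac{N}{N+1}\widehat{R}_N(M)+\tfrac{1}{N+1}Z_{N+1}(M)\Bigr],
\]
where the last equality uses the definition of $\widehat{R}_N(M)$ in Eq.~\eqref{eq:emp_sampling_failure_main}. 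The first equality is Eq.~\eqref{eq:pfail_mean_equiv_main}.

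The third step uses the fact that $Z_{N+1}(M)\in\{0,1\}$, so $Z_{N+1}(M)\le 1$. Substituting this into the last expectation gives exactly the bound in Eq.~\eqref{eq:fixedM_marginal_bound_main}. In the language of conformal risk control with a loss bounded by $B=1$, this is the usual $\frac{N}{N+1}\widehat R_N+\frac{B}{N+1}$ bound at a fixed, non-data-dependent parameter. Because $M$ is fixed, no monotonicity in $M$ and no infimum over a data-dependent choice are needed.

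The main point requiring care is the first step: justifying that the Monte Carlo sampling does not break exchangeability. The calibration and test samples are independent draws from the same generator $\mathcal{G}$ with budget $M$, so I would state this as part of the exchangeability assumption, applied to the augmented data $(x_i,y_i^*,\mathcal{G}_M(x_i))$. Everything after that is linearity of expectation.
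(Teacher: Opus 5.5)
Your proposal is correct and follows the paper's proof essentially step for step: exchangeability gives equal means for the $Z_i(M)$, you average over all $N+1$ indices to split off $\frac{N}{N+1}\widehat{R}_N(M)$, and then bound $Z_{N+1}(M)\le 1$. You also note that the per-example sampling randomness has to be folded into the exchangeable data so that exchangeability carries over to the $Z_i(M)$; the paper only asserts this step ``by exchangeability,'' and your remark makes it precise.
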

See the proof of Proposition~\ref{prop:fixedM_marginal_bound_main} in Appendix~\ref{sec: proofs}.

Proposition~\ref{prop:fixedM_marginal_bound_main} establishes a marginal upper bound for the first-stage sampling risk under the fixed budget $M$. 
Importantly, the guarantee is in expectation over both the calibration instances and the future test data point. 
Therefore, unlike prior PAC-style approaches~\citep{wang2026safer}, the quantity
\vspace{-0.2cm}
\begin{equation}
\widetilde{\beta}(M)
:=
\frac{N}{N+1}\widehat{R}_N(M)+\frac{1}{N+1}
\label{eq:beta_tilde_main}
\end{equation}
can be interpreted as a calibration-dependent proxy whose expectation upper-bounds test-time marginal failure probability, rather than as a high-probability (i.e., $\geq 1-\delta$) upper confidence bound on a fixed but unknown parameter~\citep{clopper1934use}. 

At test time, we cannot determine which particular data satisfy $Z_{N+1}(M)=0$. 
However, Proposition~\ref{prop:fixedM_marginal_bound_main} guarantees that, \emph{on average over future test points}, the fraction of instances for which finite-sampling fails to yield an admissible answer is controlled by the exchangeability-corrected empirical risk in Eq.~\eqref{eq:beta_tilde_main}. 
In the subsequent section, conditioned on the complementary event $Z_{N+1}=0$, we calibrate the conformalized selection threshold to control the second-stage conditional risk.

\subsection{Conformalized Selection}
\label{sec:conformal_selection_main}

Even when finite generation fails to produce an admissible answer, the ground-truth answer remains available for every calibration data point. 
Given a calibration input, we therefore estimate the model uncertainty assigned to the ground truth to define the nonconformity score (NS). 
For each calibration example $(x_i,y_i^*)$, we formulate
\vspace{-0.2cm}
\begin{equation}
u_i := U_{\mathcal{G}}(y_i^* \mid x_i), \qquad i=1,\dots,N.
\label{eq:golden_ns_main}
\end{equation}
In this way, we establish the admission-correlated nonconformity for \emph{all} calibration data.
This avoids discarding calibration instances whose candidate sets fail to cover an admissible answer~\citep{wang-etal-2024-conu,wang2025sample,kaur2024addressing}, thereby preserving the integrity (distribution) of calibration data. 

Let $\{u_{(1)} \le \cdots \le u_{(N)}\}$ represent the ordered calibration NSs, and define
\vspace{-0.2cm}
\begin{equation}
k_\alpha := \left\lceil (N+1)(1-\alpha)\right\rceil.
\label{eq:kalpha_main}
\end{equation}
We calibrate the selection threshold as the conformal quantile
\vspace{-0.2cm}
\begin{equation}
\hat{\lambda} := u_{(k_\alpha)},
\label{eq:lambda_hat_main}
\end{equation}
with the standard convention that $u_{(N+1)}:=+\infty$.

Now consider a test data point $x_{N+1}$ satisfying $Z_{N+1}=0$, i.e., its candidate set contains at least one admissible answer. 
Among all admissible candidates, we define the reference admissible answer:
\vspace{-0.2cm}
\begin{equation}
\hat y_{N+1}^{(\mathrm{ref})}
\in
\operatorname*{\arg\min}_{\hat y\in\mathcal{G}_M(x_{N+1}):\,A(y_{N+1}^*,\hat y)=1}
U_{\mathcal{G}}(\hat y\mid x_{N+1}),
\label{eq:ref_answer_main}
\end{equation}
and let its associated admission-correlated NS be
\vspace{-0.2cm}
\begin{equation}
u_{N+1} := U_{\mathcal{G}}(\hat y_{N+1}^{(\mathrm{ref})}\mid x_{N+1}).
\label{eq:test_ns_main}
\end{equation}
By construction, on event $\{Z_{N+1}=0\}$, we have
\vspace{-0.2cm}
\begin{equation}
\left\{\!
\exists \hat y \!\in\! \mathcal{C}_{\hat\lambda}(x_{N+1}), A(y_{N+1}^*,\hat y)\!=\! 1\! 
\right\}
\!=\!
\{ u_{N+1} \!\le\! \hat\lambda \}.
\label{eq:event_equiv_main}
\end{equation}

We further impose the exchangeability requirement for this stage: conditional on $Z_{N+1}=0$, the augmented score sequence $\{u_1,\dots,u_N,u_{N+1}\}$ is exchangeable. 
That is, the admission-correlated NS of the test admissible answer is distributed symmetrically with the calibration-side ground-truth NSs. 
We discuss this assumption in Appendix~\ref{app:conditional_exchangeability}. 
Under this formulation, the standard split conformal quantile argument yields
\vspace{-0.2cm}
\begin{equation}
\begin{split}
    &\quad \, \Pr\left(
u_{N+1}\le \hat\lambda  \mid Z_{N+1}=0
\right)\\
&=
\Pr\left(
u_{N+1}\le u_{(k_\alpha)} \mid Z_{N+1}=0
\right)\\
&\ge
\frac{k_\alpha}{N+1}
\ge
1-\alpha.
\end{split}
\label{eq:quantile_guarantee_main}
\end{equation}
Combining Eqs.~\eqref{eq:event_equiv_main} and \eqref{eq:quantile_guarantee_main}, we obtain
\vspace{-0.2cm}
\begin{equation}
\begin{split}
\Pr&\!\left(
\exists \hat y \!\in\! \mathcal{C}_{\hat\lambda}(x_{N+1}), A(y_{N+1}^*,\hat y)\!=\!1
\middle|
Z_{N+1}\!=\!0
\right)\\
&\qquad \qquad \qquad \ge 1-\alpha,
\end{split}
\label{eq:conditional_coverage_main}
\end{equation}
and equivalently,
\vspace{-0.2cm}
\begin{equation}
\Pr\!\left(
R_{N+1}(\hat\lambda)=1 \mid Z_{N+1}=0
\right)
\le
\alpha.
\label{eq:conditional_miscoverage_main}
\end{equation}
Therefore, Eq.~\eqref{eq:conditional_goal_main} holds with the threshold $\hat\lambda$.

\subsection{Overall Coverage Guarantee}
\label{sec:overall_guarantee_main}
We combine the marginal sampling-failure bound in Proposition~\ref{prop:fixedM_marginal_bound_main} with the conditional guarantee in Eq.~\eqref{eq:conditional_miscoverage_main}. 
For a future test example, the decomposition in Eq.~\eqref{eq:risk_decomposition_main} gives
\vspace{-0.2cm}
\begin{equation}
\begin{split}
    \Pr\!\left(
R_{N+1}(\hat\lambda)=1
\right)
=
\Pr\!\left(
Z_{N+1}(M)=1
\right)&
+\\
\Pr\!\left(
R_{N+1}(\hat\lambda)=1 \mid Z_{N+1}(M)=0
\right)\cdot&\\
\Pr\!\left(
Z_{N+1}(M)=0
\right)&.
\end{split}
\label{eq:overall_decomp_main}
\end{equation}
Using Proposition~\ref{prop:fixedM_marginal_bound_main} and Eq.~\eqref{eq:conditional_miscoverage_main}, we obtain
\vspace{-0.2cm}
\begin{equation}
\begin{split}
    \Pr\!\left(
R_{N+1}(\hat\lambda)=1
\right)
&\le
p_{\mathrm{fail}}(M)+\alpha\\
&\le
\mathbb{E}\!\left[\widetilde{\beta}(M)\right]+\alpha
\end{split}.
\label{eq:overall_bound_main}
\end{equation}
Therefore, MiRD establishes an expectation-level marginal coverage guarantee for open-ended set-valued prediction. 
Eq.~\eqref{eq:overall_bound_main} is conservative; Appendix~\ref{app:tighter_bound} gives a tighter population-level refinement by exploiting the complementarity between sampling failure and conditional selection failure.

\begin{figure*}[!t]
  \centering
  \begin{subfigure}[b]{0.245\textwidth}
    \centering
    \includegraphics[width=\textwidth]{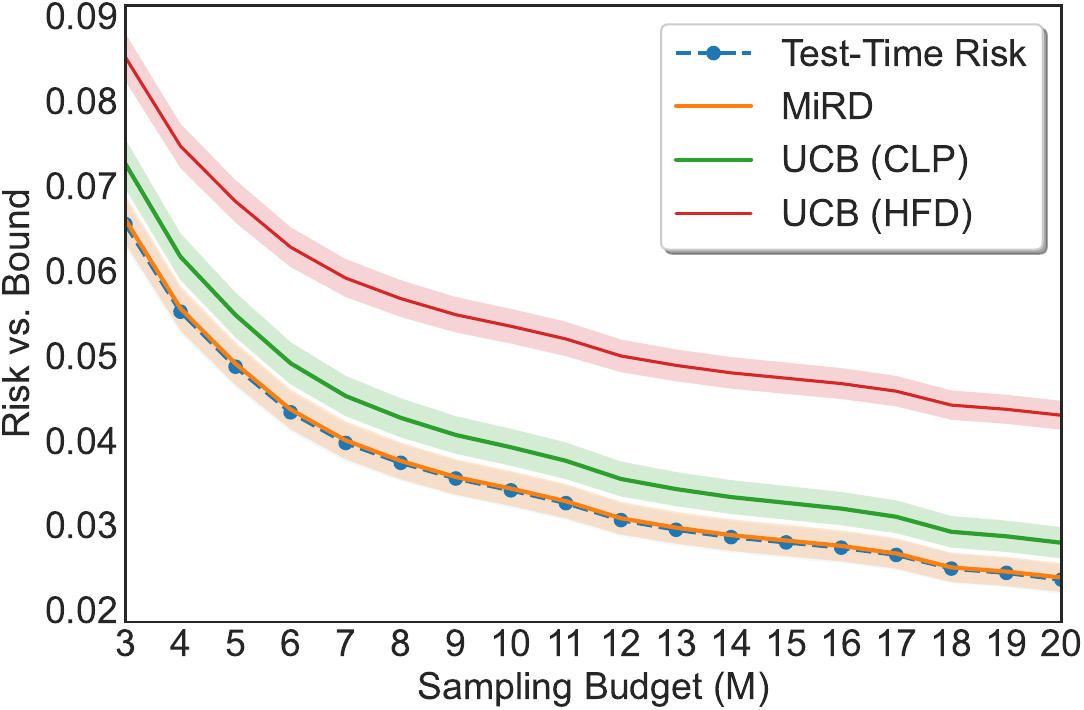}
    \caption{LLaMA-3.1-8B-Instruct.}
  \end{subfigure}
  \hfill
  \begin{subfigure}[b]{0.245\textwidth}
    \centering
    \includegraphics[width=\textwidth]{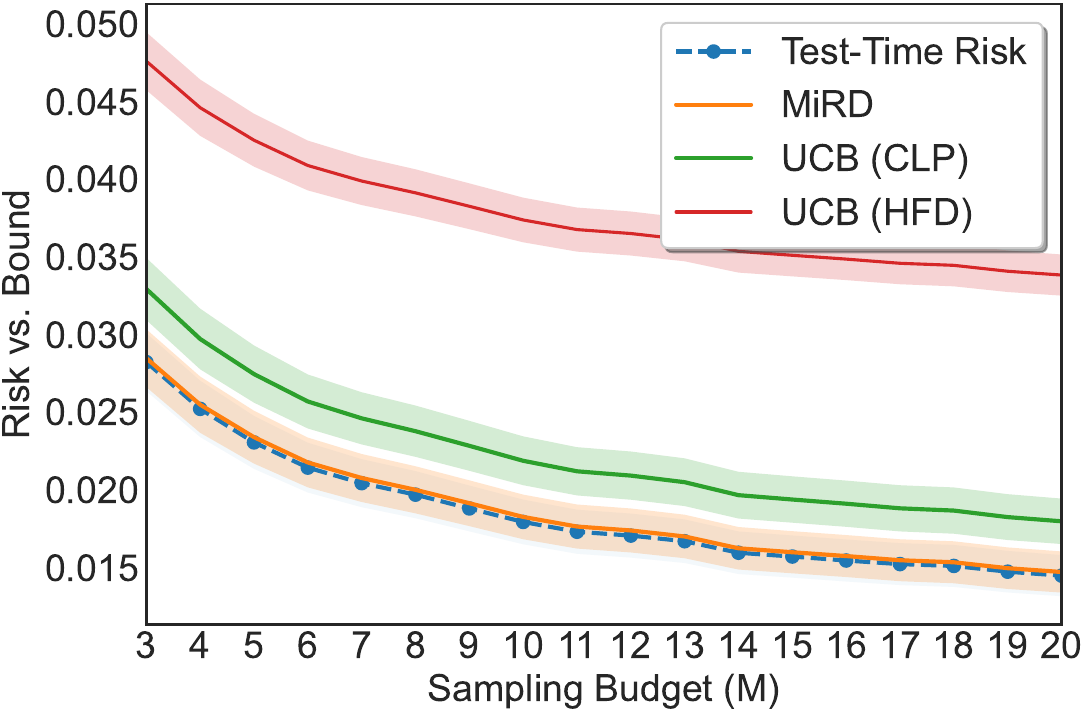}
    \caption{LLaMA-3.1-70B-Instruct.}
  \end{subfigure}
  \hfill
  \begin{subfigure}[b]{0.245\textwidth}
    \centering
    \includegraphics[width=\textwidth]{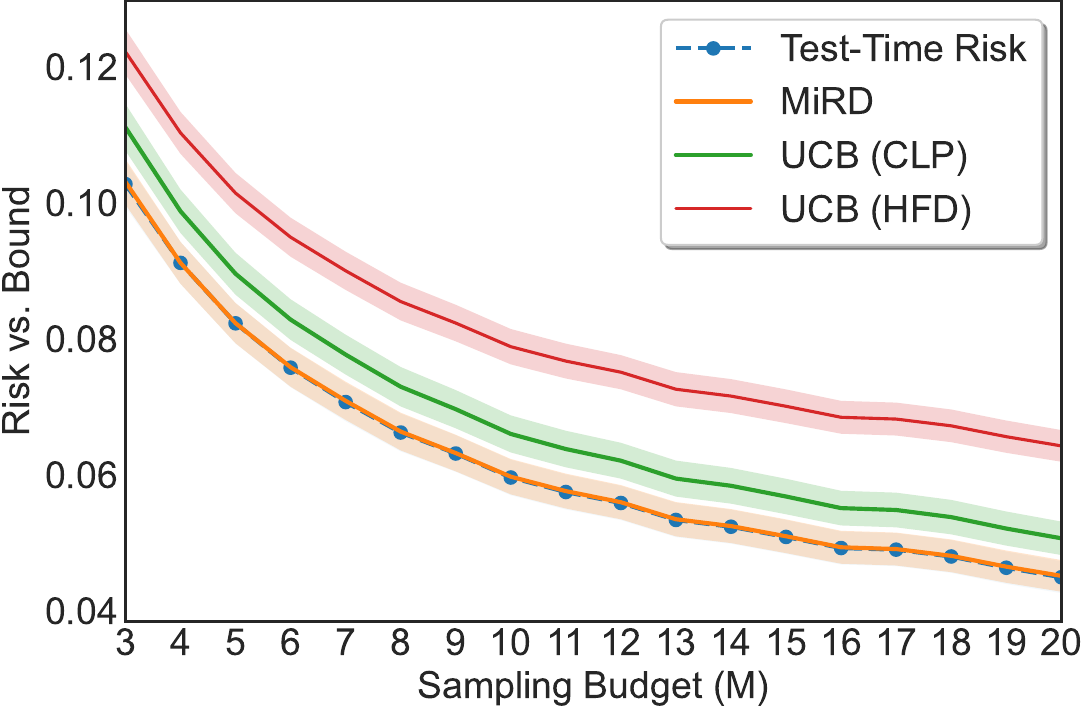}
    \caption{Vicuna-7B-Instruct.}
  \end{subfigure}
  \hfill
  \begin{subfigure}[b]{0.245\textwidth}
    \centering
    \includegraphics[width=\textwidth]{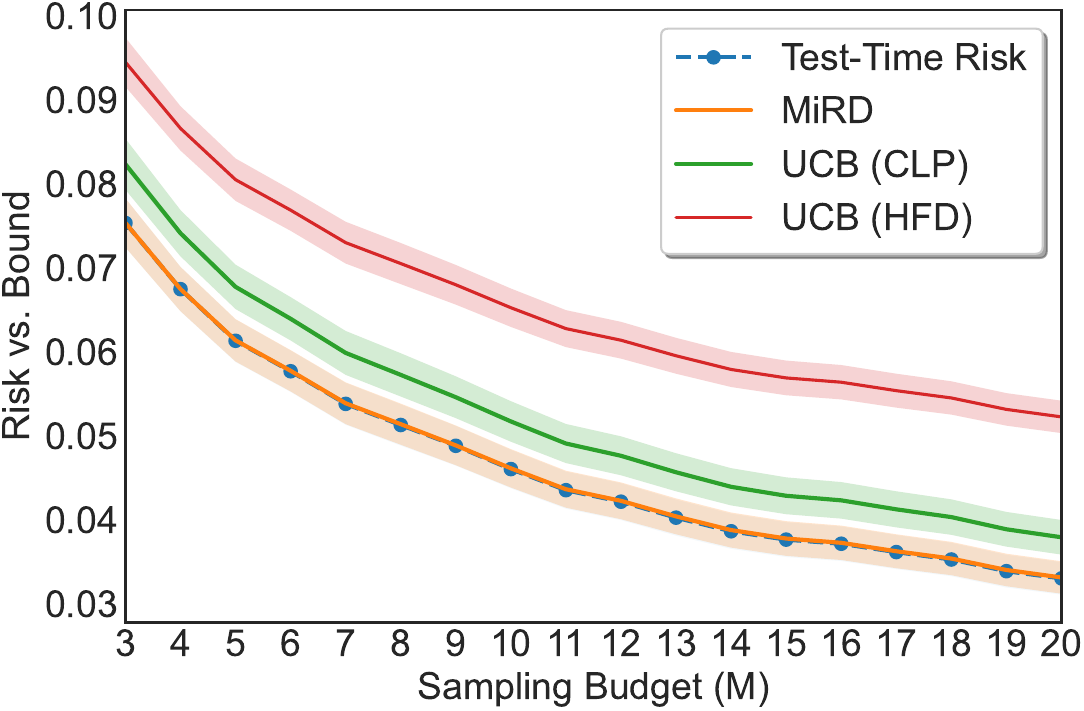}
    \caption{Vicuna-13B-Instruct.}
  \end{subfigure}

  \begin{subfigure}[b]{0.245\textwidth}
    \centering
    \includegraphics[width=\textwidth]{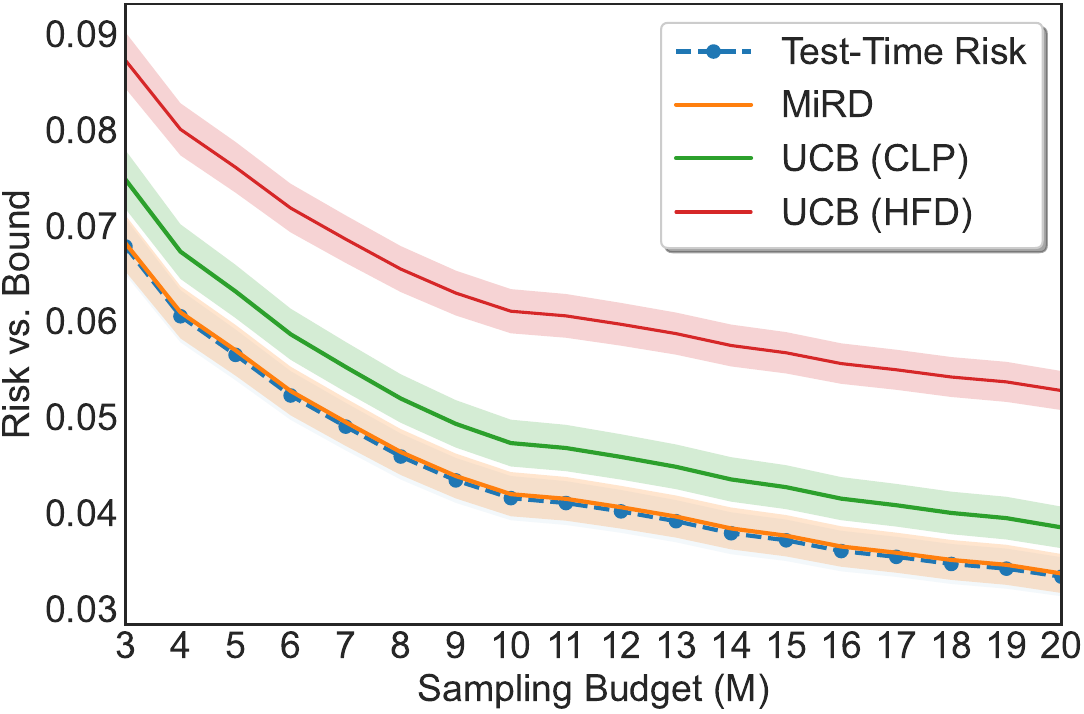}
    \caption{OpenChat-3.5 (7B).}
  \end{subfigure}
  \hfill
  \begin{subfigure}[b]{0.245\textwidth}
    \centering
    \includegraphics[width=\textwidth]{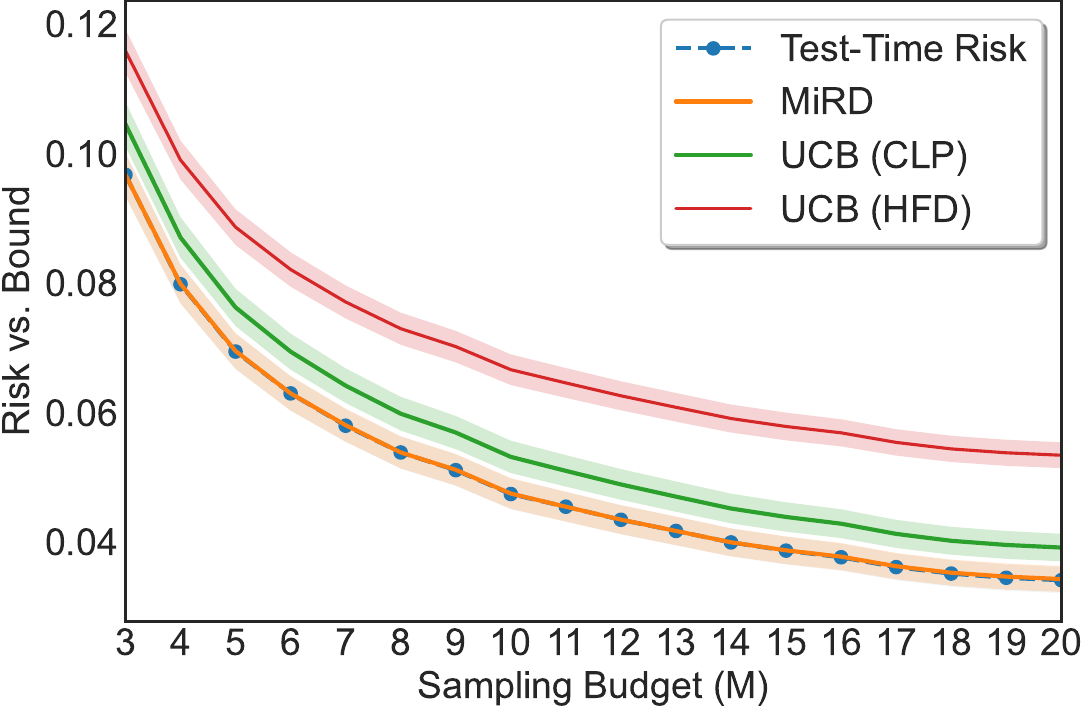}
    \caption{Qwen2.5-3B-Instruct.}
  \end{subfigure}
  \hfill
  \begin{subfigure}[b]{0.245\textwidth}
    \centering
    \includegraphics[width=\textwidth]{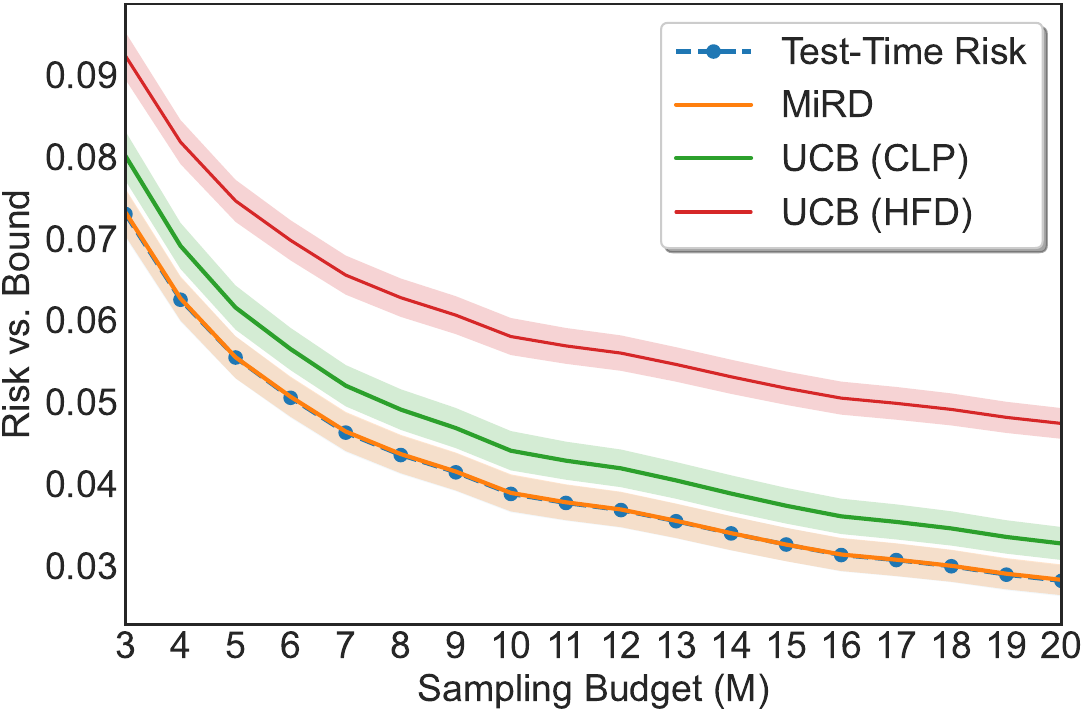}
    \caption{Qwen2.5-7B-Instruct.}
  \end{subfigure}
  \hfill
  \begin{subfigure}[b]{0.245\textwidth}
    \centering
    \includegraphics[width=\textwidth]{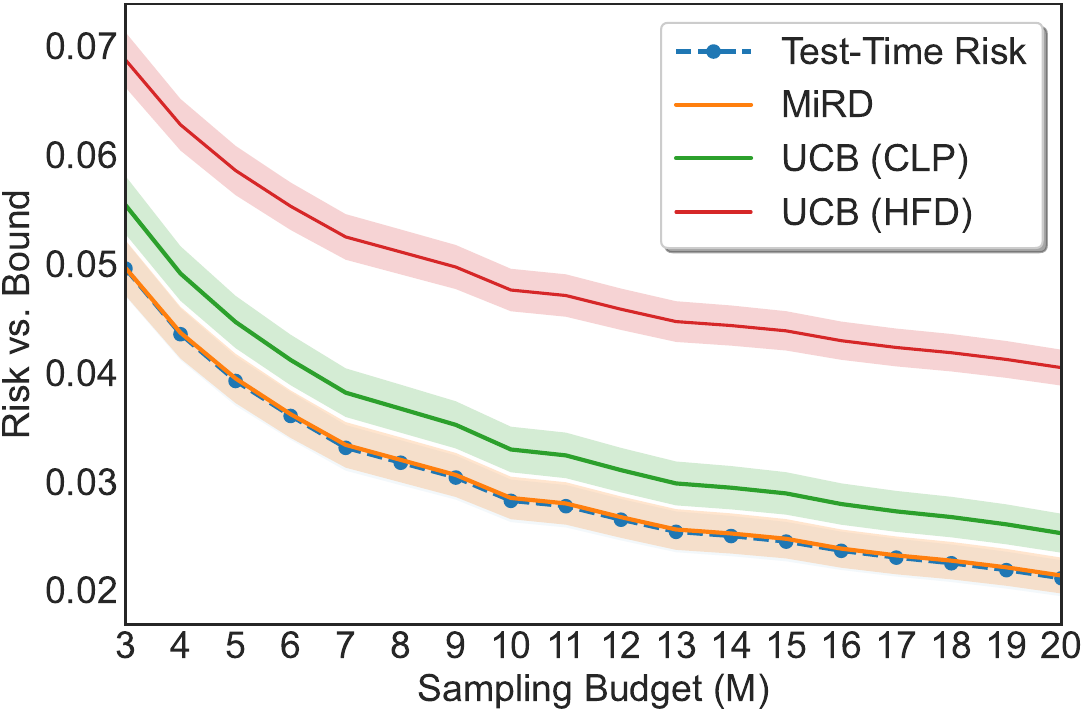}
    \caption{Qwen2.5-14B-Instruct.}
  \end{subfigure}
      \caption{Sampling risk vs. three upper bounds at various sampling budgets ($M$) on TriviaQA with eight LLMs.}
  \label{fig: sampling risk control triviaqa (similairty-0.6).}
  \vspace{-5mm}
\end{figure*}

\section{Experiments}
\label{sec: Experiments}
\subsection{Experimental Setup}
\label{sec: Experimental Setup}
\noindent \textbf{Datasets.} 
We use the closed-book TriviaQA~\citep{joshi2017triviaqa} and open-book CoQA~\citep{reddy2019coqa} datasets, covering both knowledge-intensive and context-supported generation settings to test the robustness of MiRD. 
We also consider more challenging Natural Questions (NQ)~\citep{kwiatkowski2019natural}. 
See details in Appendix~\ref{app:Details of Experimental Setup}. 

\noindent \textbf{Models.} 
We use eight open-source LLMs, including four popular families: LLaMA~\citep{touvron2023llama} (LLaMA-3.1-8B-Instruct and LLaMA-3.1-70B-Instruct), Qwen~\citep{bai2023qwen} (Qwen2.5-3B-Instruct, Qwen2.5-7B-Instruct, and Qwen2.5-14B-Instruct), Vicuna~\citep{zheng2023judging} (Vicuna-7B-Instruct and Vicuna-13B-Instruct), and OpenChat~\citep{wang2024openchat} (OpenChat-3.5).

\noindent \textbf{Baselines.} 
In the sampling stage, previous studies typically construct a $(1-\delta)$-level upper confidence bound (UCB) on the sampling-size-conditioned system failure rate~\citep{wang2026safer}. 
We compare our marginal upper bound in Proposition~\ref{prop:fixedM_marginal_bound_main} with both the Clopper--Pearson-style~\citep{clopper1934use} and Hoeffding-style~\citep{hoeffding1963probability} UCBs, denoted as UCB-CLP and UCB-HFD, respectively. 
In the conformal selection stage, we compare against ConU~\cite{wang-etal-2024-conu}, which discards calibration examples whose sampling sets fail to contain admissible answers and calibrates the threshold only on the remaining subset. 
This comparison directly tests the benefit of preserving full calibration-set integrity in MiRD.

\noindent \textbf{Correctness Criteria.} 
We use sentence similarity by default. 
To examine admission-sensitivity, we also use bi-entailment. 
See details in Appendix~\ref{app:Details of Experimental Setup}.

\noindent \textbf{Uncertainty Measure.} 
Following previous studies~\cite{wang-etal-2024-conu,wang2025sample,wang2026safer}, we perform semantic clustering over the candidate set~\citep{kuhn2023semantic} and define the uncertainty score of each candidate as one minus its cluster frequency. 
This yields a normalized uncertainty measure $U(\cdot)\in[0,1]$, where larger estimates represent weaker self-consistency and therefore lower reliability.

\begin{table}[!t]
\centering
\caption{Sampling risk vs. bound on TriviaQA with the LLaMA-3.1-8B-Instruct model, using sentence similarity for correctness evaluation across various thresholds.}
\vspace{-1mm}
\label{tab: sampling risk control admission ablation 1}
\adjustbox{max width=\linewidth}{
    \begin{tabular}{cccccc}
        \toprule
        \multicolumn{2}{c}{\textbf{Threshold /} $\boldsymbol{M}$}  & \textbf{5} & \textbf{10} & \textbf{15} & \textbf{20}\\
        \midrule    
        \multirow{2}{*}{0.5} & Risk & 0.0107 &  0.0067 &  0.0058 &  0.0044\\
        {} & Bound & \textbf{0.0110} &  \textbf{0.0069} &  \textbf{0.0061} &  \textbf{0.0047}\\
        
        \multirow{2}{*}{0.55} & Risk & 0.0249 &  0.0165 &  0.0138 &  0.0112\\
        {} & Bound & \textbf{0.0252} &  \textbf{0.0167} &  \textbf{0.0141} &  \textbf{0.0114}\\
        
        \multirow{2}{*}{0.6} & Risk & 0.0486 &  0.0340 &  0.0279 &  0.0235\\
        {} & Bound & \textbf{0.0491} &  \textbf{0.0344} &  \textbf{0.0282} &  \textbf{0.0239}\\
        
        \multirow{2}{*}{0.65} & Risk & 0.0876 &  0.0632 &  0.0533 &  0.0468\\
        {} & Bound & \textbf{0.0880} &  \textbf{0.0635} &  \textbf{0.0536} &  \textbf{0.0471}\\
        
        \multirow{2}{*}{0.7} & Risk & 0.1349 &  0.1006 &  0.0855 &  0.0757\\
        {} & Bound & \textbf{0.1355} &   \textbf{0.1010} &   \textbf{0.0858} &   \textbf{0.0760}\\
        \bottomrule
    \end{tabular}
}
\vspace{-2mm}
\end{table}

\noindent \textbf{Evaluation Criteria.} 
Following prior studies~\citep{angelopoulos2023conformal,quach2024conformal,wang2025sample,wang2026safer}, we randomly split the calibration and test set 500 times, and calculate the mean of test-time failure (miscoverage) proportion as the Monte Carlo approximation of failure (miscoverage) rate. 
As long as the means do not exceed the corresponding upper bounds, statistical validity holds. 
See \citet{angelopoulos2024theoretical} for details.

\noindent \textbf{Evaluation Protocol.} 
Our theory targets \emph{marginal} validity, so all empirical risks are evaluated by repeated random calibration-test splits~\citep{angelopoulos2023conformal,quach2024conformal,wang2025sample,wang2026safer}. 
We randomly split the data into calibration and test sets 500 times and report the mean test-time risk across splits as a Monte Carlo approximation of the corresponding population quantity. 
We evaluate three risks corresponding to the three layers of our framework. 
In the sampling stage, we report the \emph{sampling-failure risk}, i.e., the average proportion of test examples for which no admissible answer is obtained within a fixed sampling budget. 
In the conformalized selection stage, we report the \emph{conditional selection risk}, i.e., the average miscoverage rate on the subset of test data satisfying $Z_{N+1}(M)=0$. 
For the final prediction sets, we report the \emph{overall miscoverage risk}, i.e., the average rate at which the final set fails to contain any admissible answer over all test data points. 
We regard a statistical guarantee as empirically validated when the corresponding Monte Carlo risk estimate does not exceed its theoretical upper bound. 

\begin{table}[!t]
\centering
\caption{Sampling risk vs. bound on TriviaQA with the LLaMA-3.1-Instruct family of two model sizes, using bi-entailment for correctness evaluation.}
\vspace{-1mm}
\label{tab: sampling risk control admission ablation 2}
\adjustbox{max width=\linewidth}{
    \begin{tabular}{cccccc}
        \toprule
        \multicolumn{2}{c}{\textbf{Model Size /} $\boldsymbol{M}$}  & \textbf{5} & \textbf{10} & \textbf{15} & \textbf{20}\\
        \midrule    
        \multirow{2}{*}{8B} & Risk & 0.1779 & 0.1381 & 0.1196 & 0.1073\\
        {} & Bound & \textbf{0.1782} & \textbf{0.1383} & \textbf{0.1198} & \textbf{0.1074}\\

        \multirow{2}{*}{70B} & Risk & 0.0693 & 0.0557 & 0.0475 & 0.0450\\
        {} & Bound & \textbf{0.0696} & \textbf{0.0560} & \textbf{0.0479} & \textbf{0.0454}\\
        \bottomrule
    \end{tabular}
}
\vspace{-2mm}
\end{table}


\begin{figure*}[!t]
  \centering
  \begin{subfigure}[b]{0.245\textwidth}
    \centering
    \includegraphics[width=\textwidth]{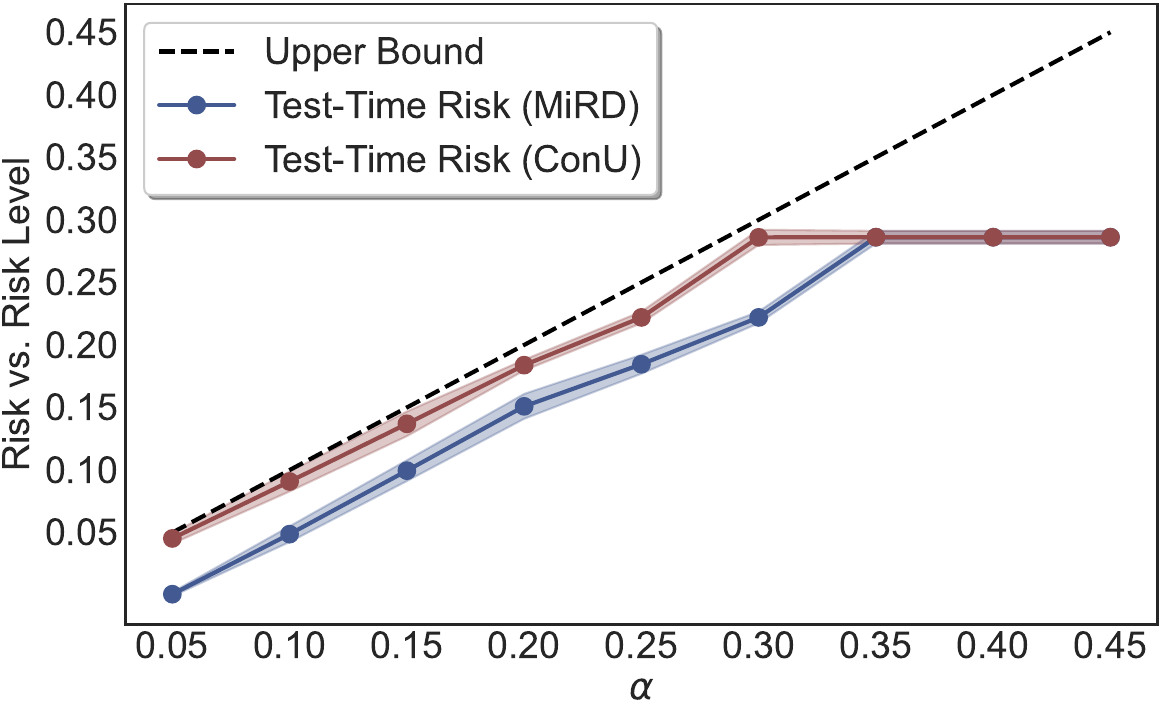}
    \caption{LLaMA-3.1-8B-Instruct.}
  \end{subfigure}
  \hfill
  \begin{subfigure}[b]{0.245\textwidth}
    \centering
    \includegraphics[width=\textwidth]{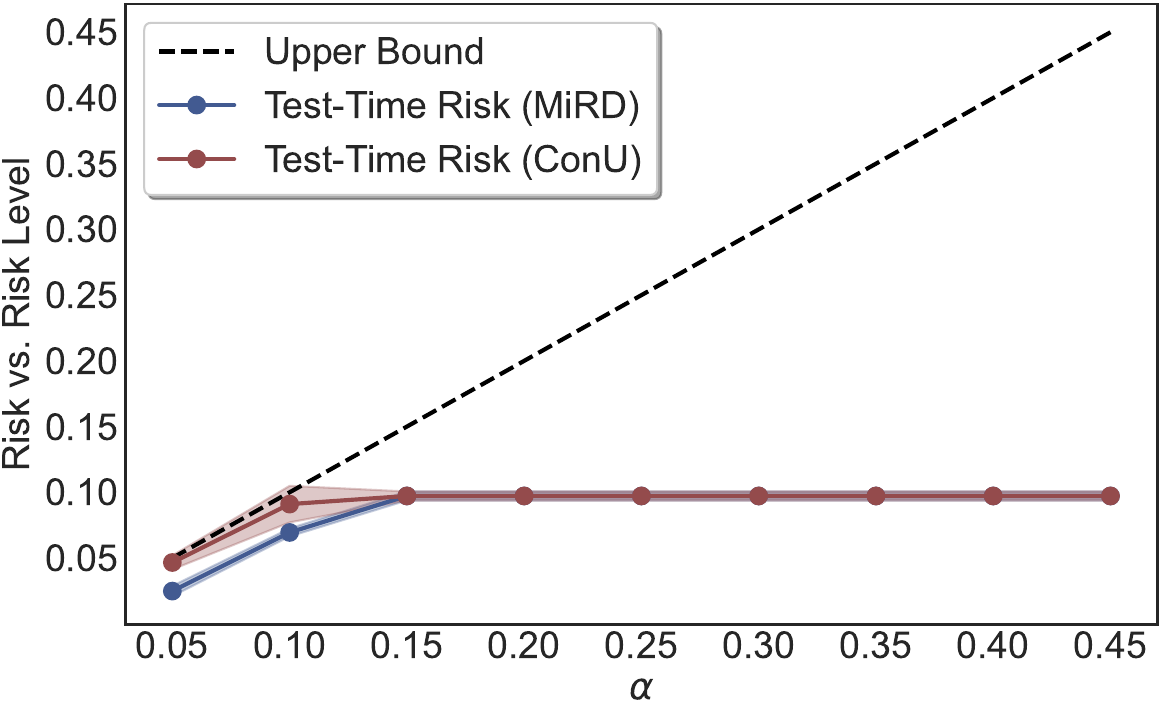}
    \caption{LLaMA-3.1-70B-Instruct.}
  \end{subfigure}
  \hfill
  \begin{subfigure}[b]{0.245\textwidth}
    \centering
    \includegraphics[width=\textwidth]{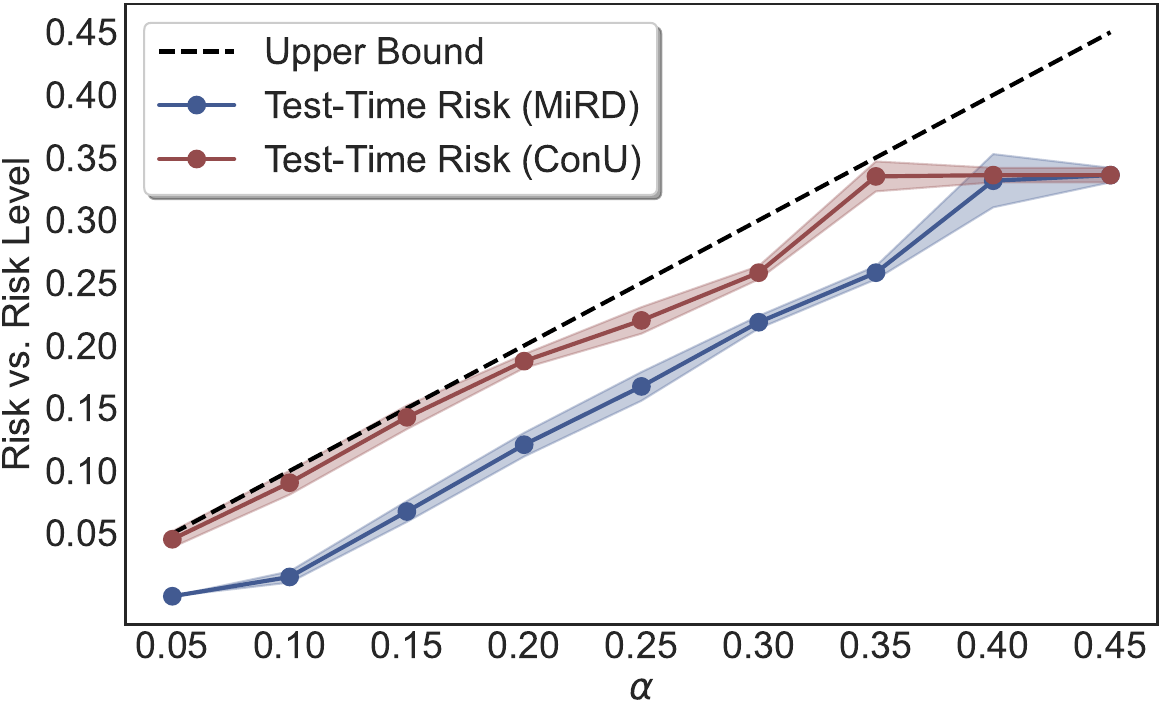}
    \caption{Vicuna-7B-Instruct.}
  \end{subfigure}
  \hfill
  \begin{subfigure}[b]{0.245\textwidth}
    \centering
    \includegraphics[width=\textwidth]{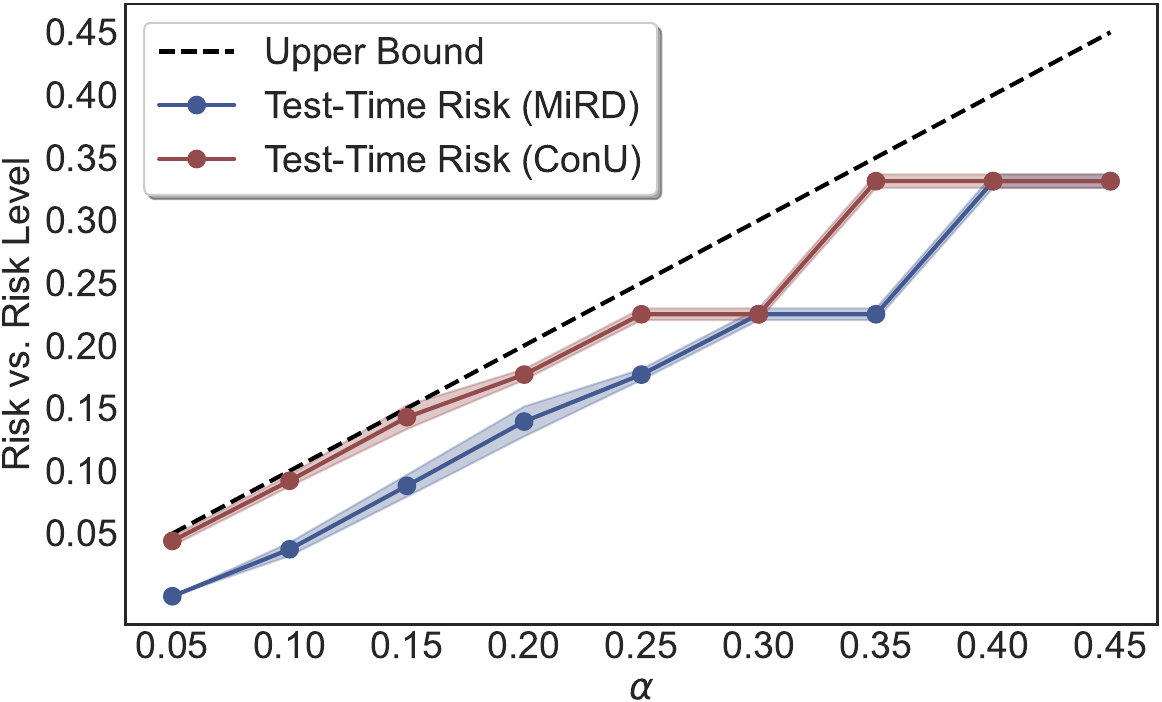}
    \caption{Vicuna-13B-Instruct.}
  \end{subfigure}

  \begin{subfigure}[b]{0.245\textwidth}
    \centering
    \includegraphics[width=\textwidth]{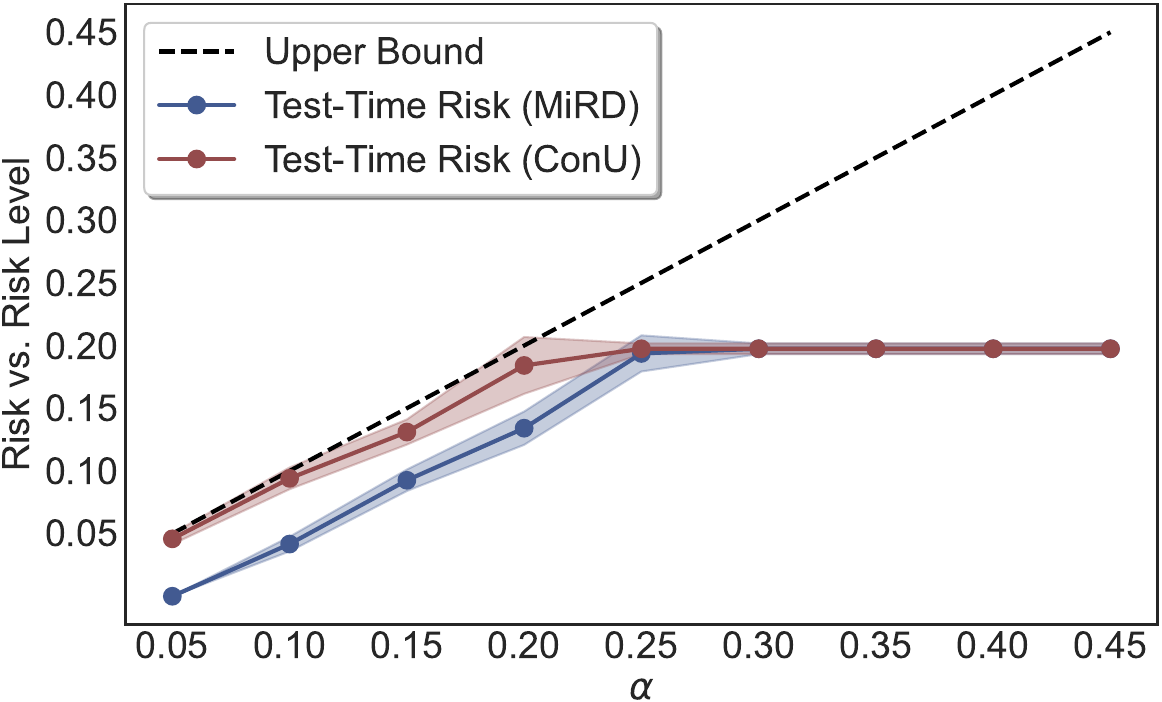}
    \caption{OpenChat-3.5 (7B).}
  \end{subfigure}
  \hfill
  \begin{subfigure}[b]{0.245\textwidth}
    \centering
    \includegraphics[width=\textwidth]{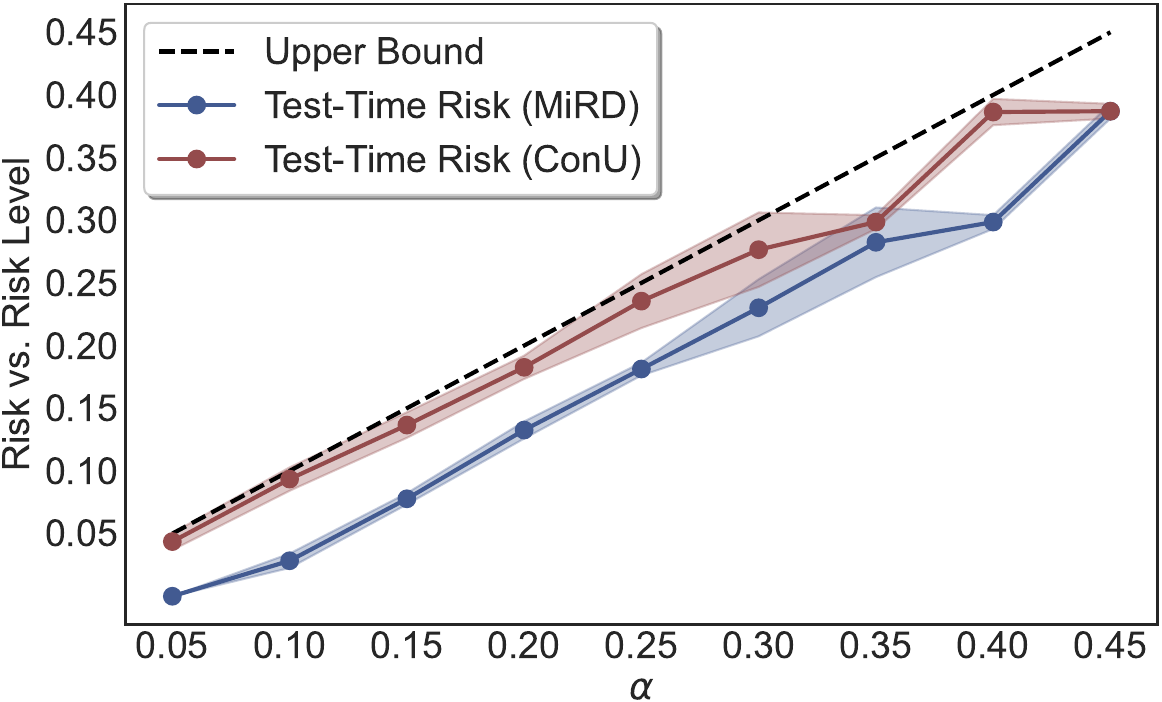}
    \caption{Qwen2.5-3B-Instruct.}
  \end{subfigure}
  \hfill
  \begin{subfigure}[b]{0.245\textwidth}
    \centering
    \includegraphics[width=\textwidth]{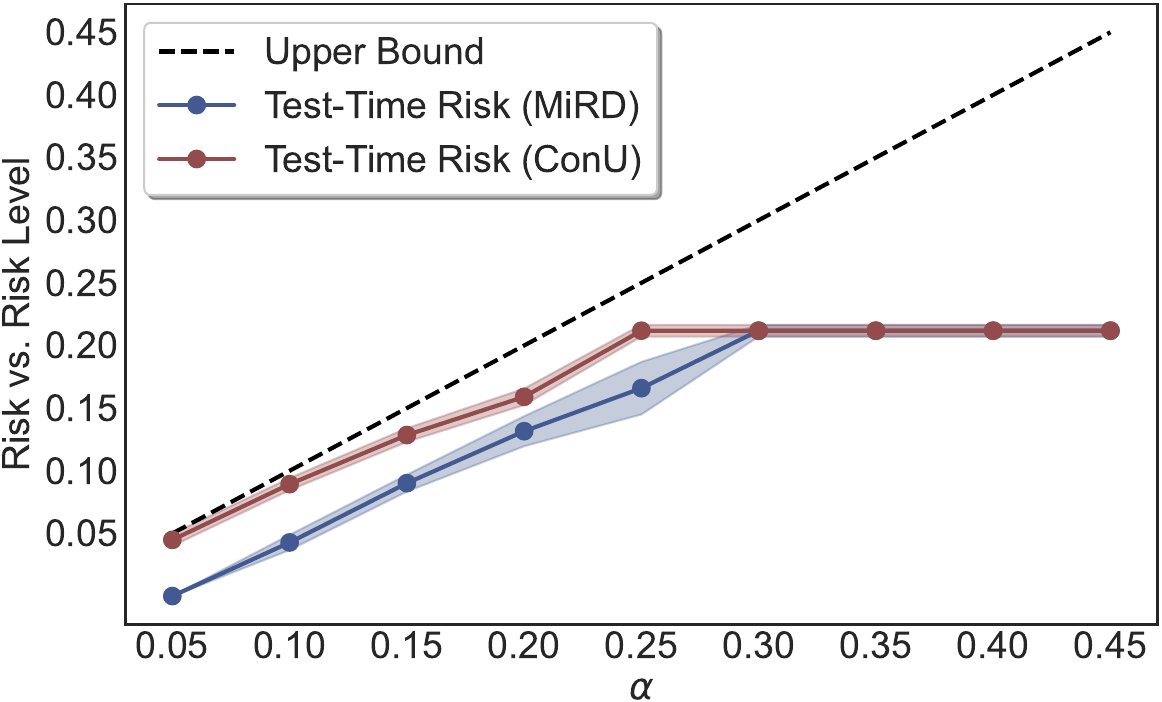}
    \caption{Qwen2.5-7B-Instruct.}
  \end{subfigure}
  \hfill
  \begin{subfigure}[b]{0.245\textwidth}
    \centering
    \includegraphics[width=\textwidth]{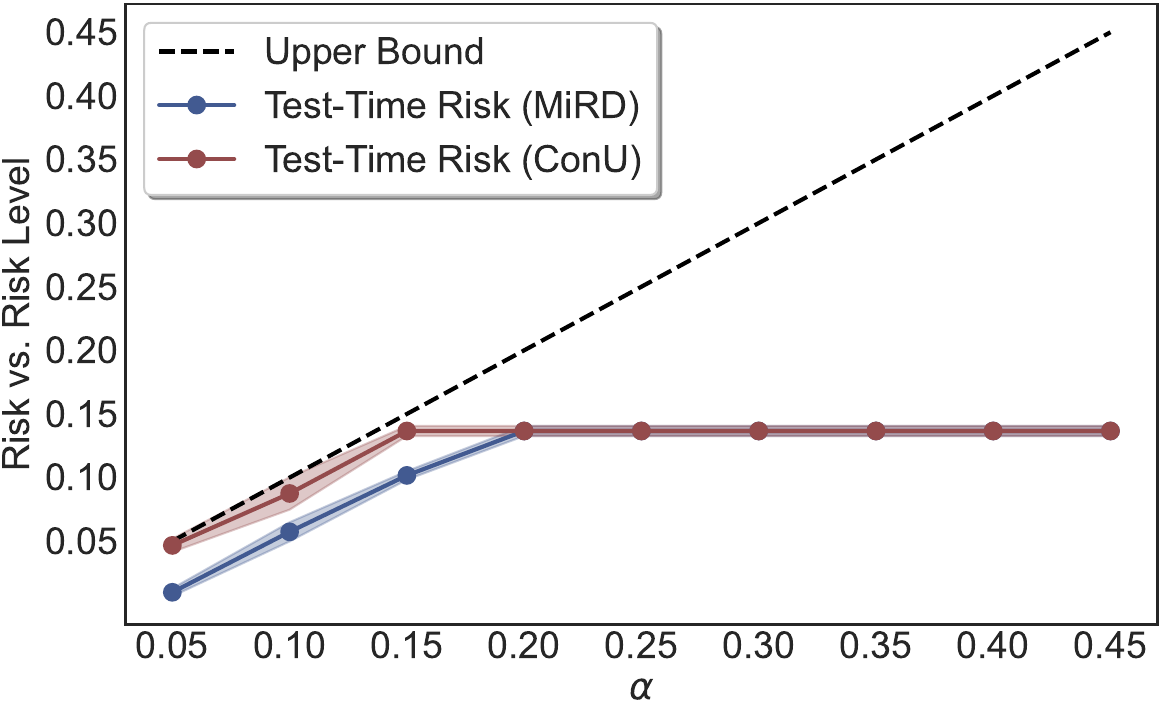}
    \caption{Qwen2.5-14B-Instruct.}
  \end{subfigure}
      \caption{Conditional selection risk vs. upper bound at various risk levels on TriviaQA with eight LLMs ($M=5$).}
  \label{fig: conditional selection risk control triviaqa (similairty-0.6, M=5).}
  \vspace{-5mm}
\end{figure*}

\subsection{Statistical Validity of the Marginal Upper Bound for Finite-Sampling Risk Control } 

We first evaluate the statistical validity of Proposition~\ref{prop:fixedM_marginal_bound_main}. 
Since our first-stage guarantee is a \emph{marginal upper bound in expectation}, rather than a high-probability confidence bound on a fixed parameter, we follow the repeated-splitting protocol in Section~\ref{sec: Experimental Setup} and estimate both the test-time sampling-failure risk and the bound by Monte Carlo averaging over 500 random calibration-test splits. Concretely, for each split and each fixed sampling budget $M$, we compute the test-time proportion of examples whose candidate sets fail to contain any admissible answer, together with the exchangeability-corrected empirical proxy $\widetilde{\beta}(M)$, and then report their averages across splits as approximations of $p_{\mathrm{fail}}(M)$ and $\mathbb{E}[\widetilde{\beta}(M)]$, respectively. 

\begin{figure}[!t]
    \centering
    \includegraphics[width=\linewidth]{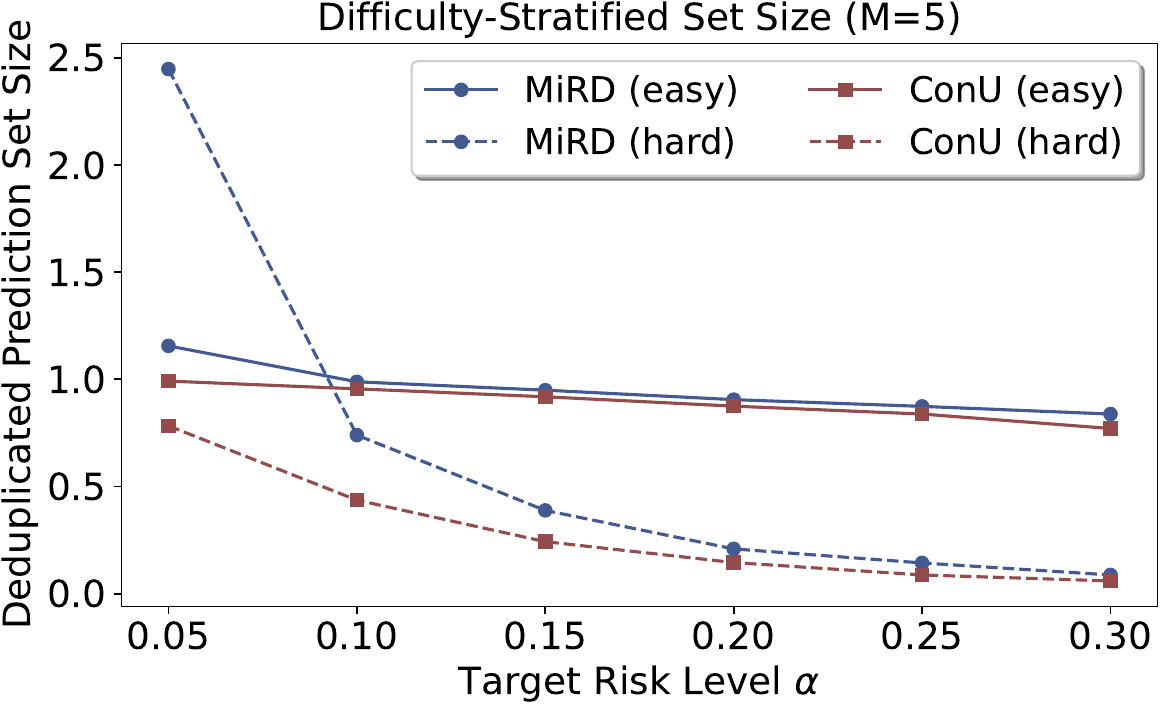}
    \caption{Difficulty-stratified deduplicated prediction set size on TriviaQA with Vicuna-7B-Instruct. Test samples are partitioned into easy and hard groups according to whether the top-1 candidate is admissible.}
    \label{fig:difficulty_stratified_set_size_5}
    \vspace{-5mm}
\end{figure}

As illustrated in Figure~\ref{fig: sampling risk control triviaqa (similairty-0.6).}, across all eight LLMs and all sampling budgets considered, the estimated MiRD bound consistently stays above the empirical sampling-failure risk on TriviaQA, confirming the statistical validity of our marginal upper bound in practice. At the same time, the MiRD bound is uniformly below both UCB-CLP and UCB-HFD, indicating that it is substantially tighter than PAC-style confidence bounds while still remaining valid. This is precisely the advantage of the expectation-level formulation in Proposition~\ref{prop:fixedM_marginal_bound_main}: it yields a non-vacuous and empirically sharp characterization of finite-sampling risk without introducing an additional (external) confidence parameter $\delta$.

Tables~\ref{tab: sampling risk control admission ablation 1} and~\ref{tab: sampling risk control admission ablation 2} further demonstrate that this validity is robust to the admissibility criterion. Under sentence similarity, the bound continues to closely track the empirical sampling-failure risk across a range of semantic thresholds; under bi-entailment, the same behavior is observed for both 8B and 70B backbone models. These results suggest that the first-stage validity of MiRD is stable not only across model families and sampling budgets, but also across different notions of answer correctness.

\begin{figure}[!t]
    \centering
    \includegraphics[width=\linewidth]{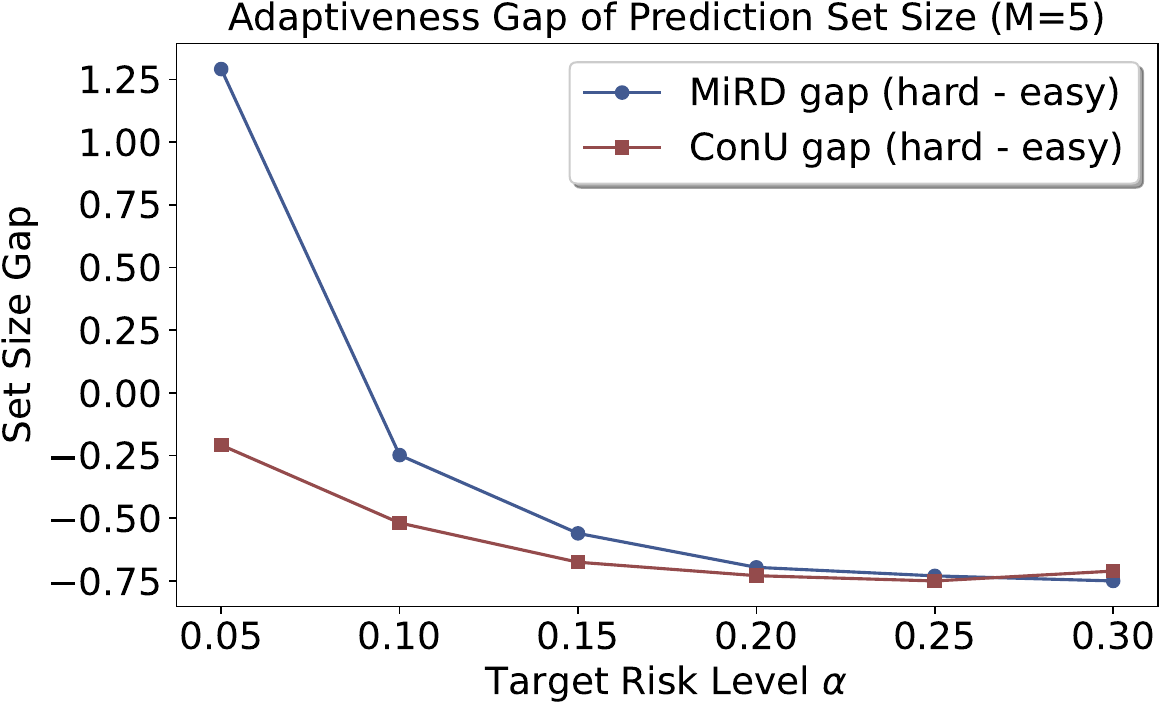}
    \caption{Adaptiveness gap of prediction set size on TriviaQA with Vicuna-7B-Instruct, defined as the difference between the average deduplicated prediction set sizes of hard and easy examples.}
    \label{fig:difficulty_stratified_gap_5}
    \vspace{-5mm}
\end{figure}

\begin{figure*}[!t]
  \centering
  \begin{subfigure}[b]{0.245\textwidth}
    \centering
    \includegraphics[width=\textwidth]{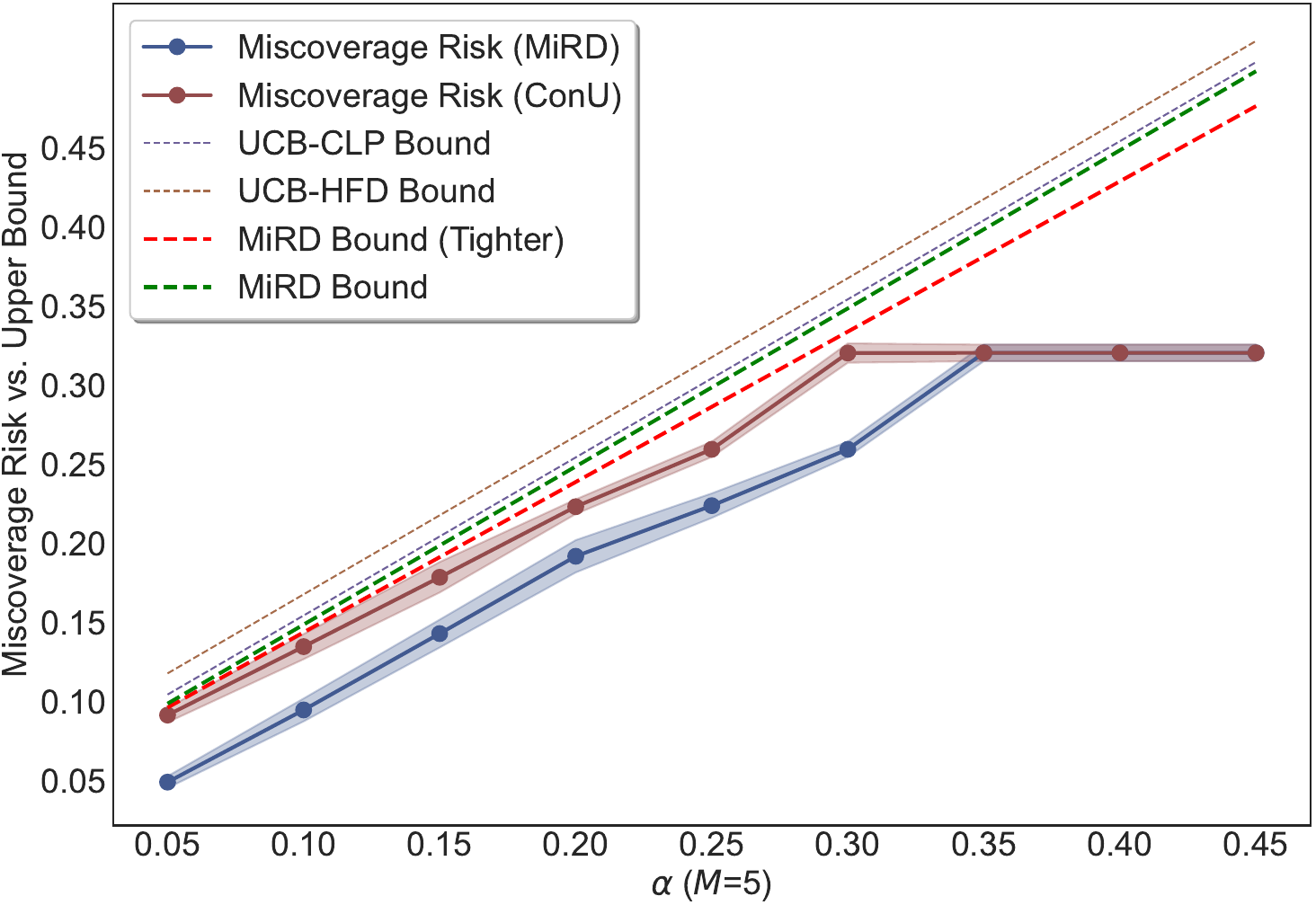}
    \caption{LLaMA-3.1-8B-Instruct.}
  \end{subfigure}
  \hfill
  \begin{subfigure}[b]{0.245\textwidth}
    \centering
    \includegraphics[width=\textwidth]{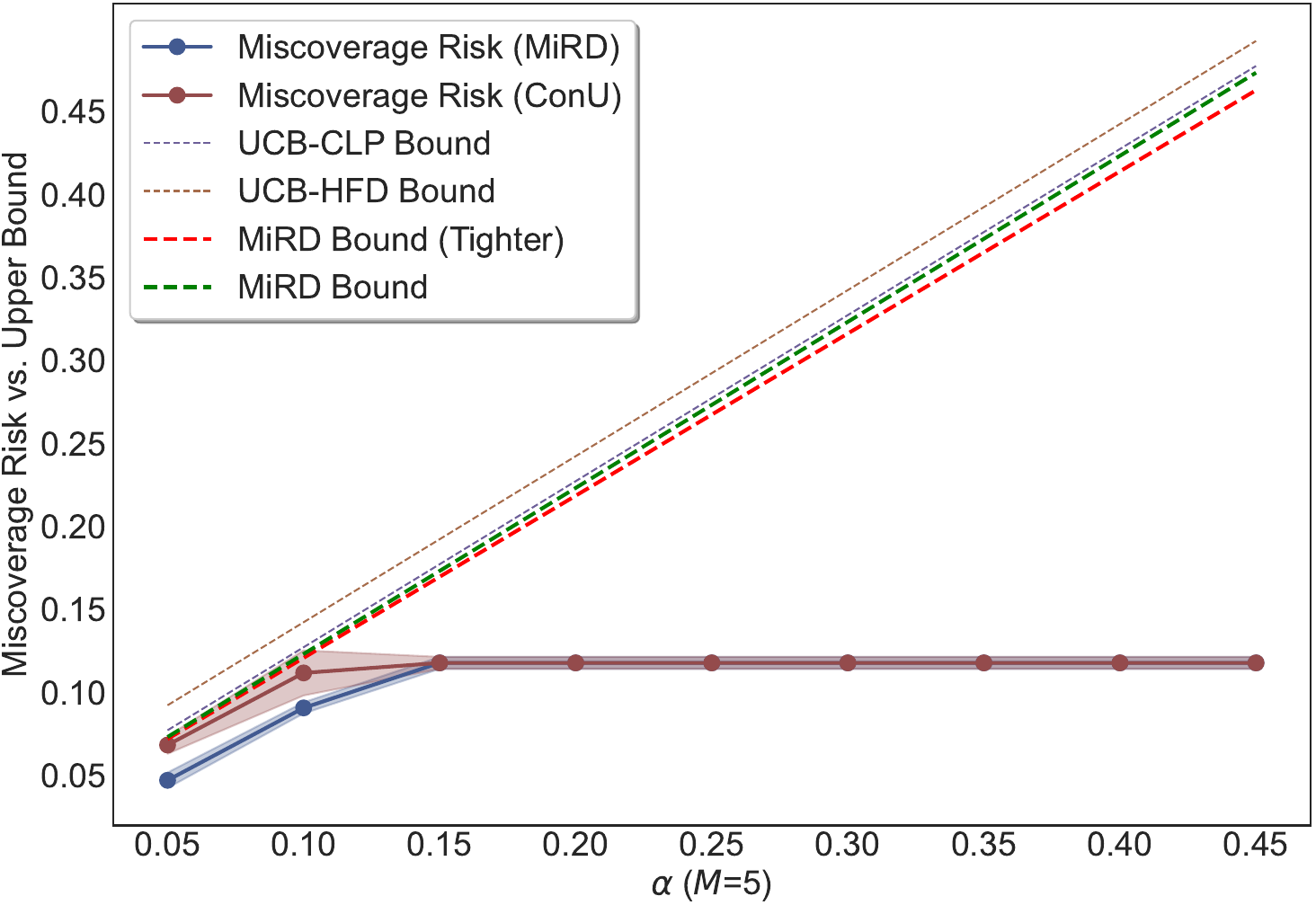}
    \caption{LLaMA-3.1-70B-Instruct.}
  \end{subfigure}
  \hfill
  \begin{subfigure}[b]{0.245\textwidth}
    \centering
    \includegraphics[width=\textwidth]{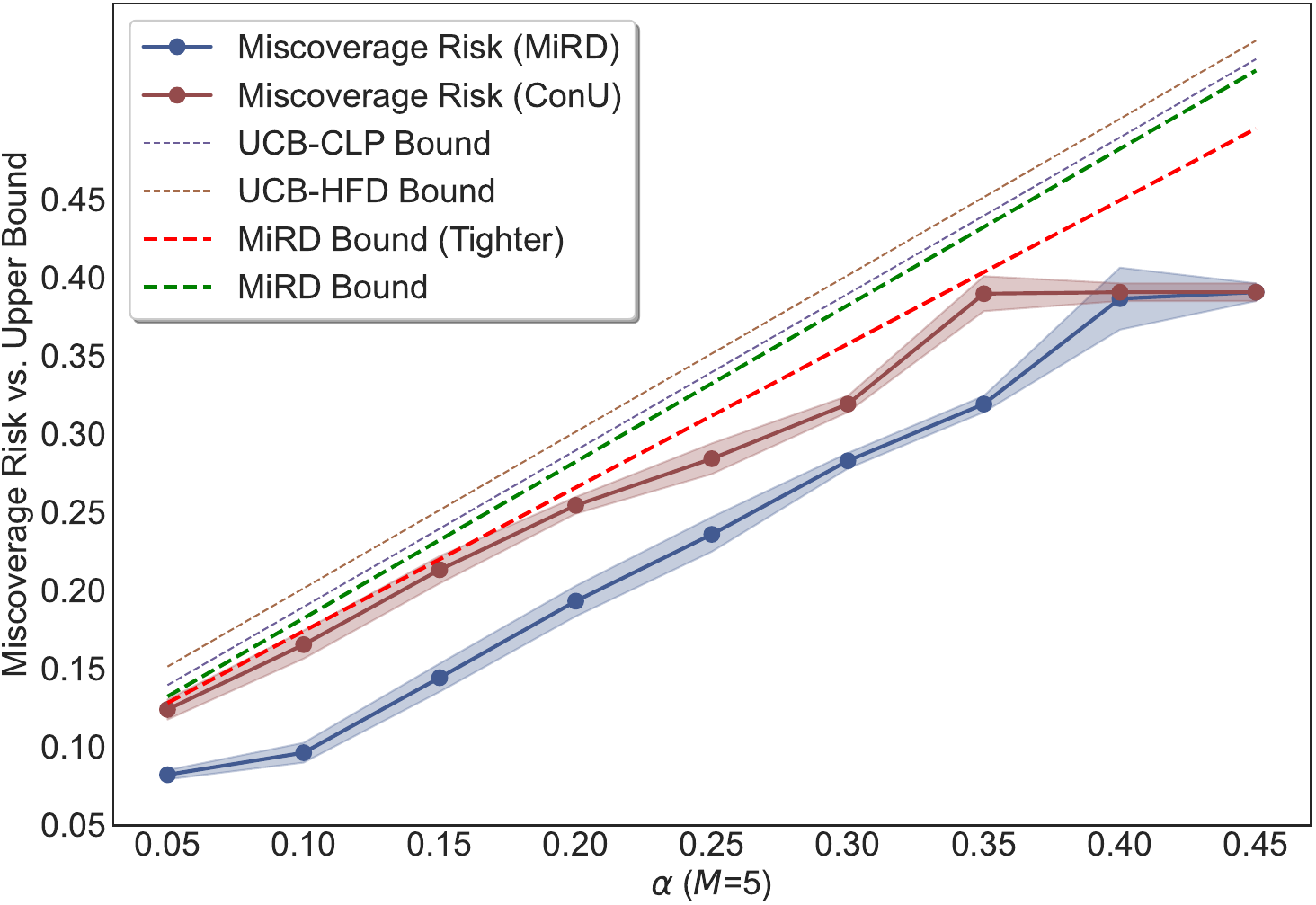}
    \caption{Vicuna-7B-Instruct.}
  \end{subfigure}
  \hfill
  \begin{subfigure}[b]{0.245\textwidth}
    \centering
    \includegraphics[width=\textwidth]{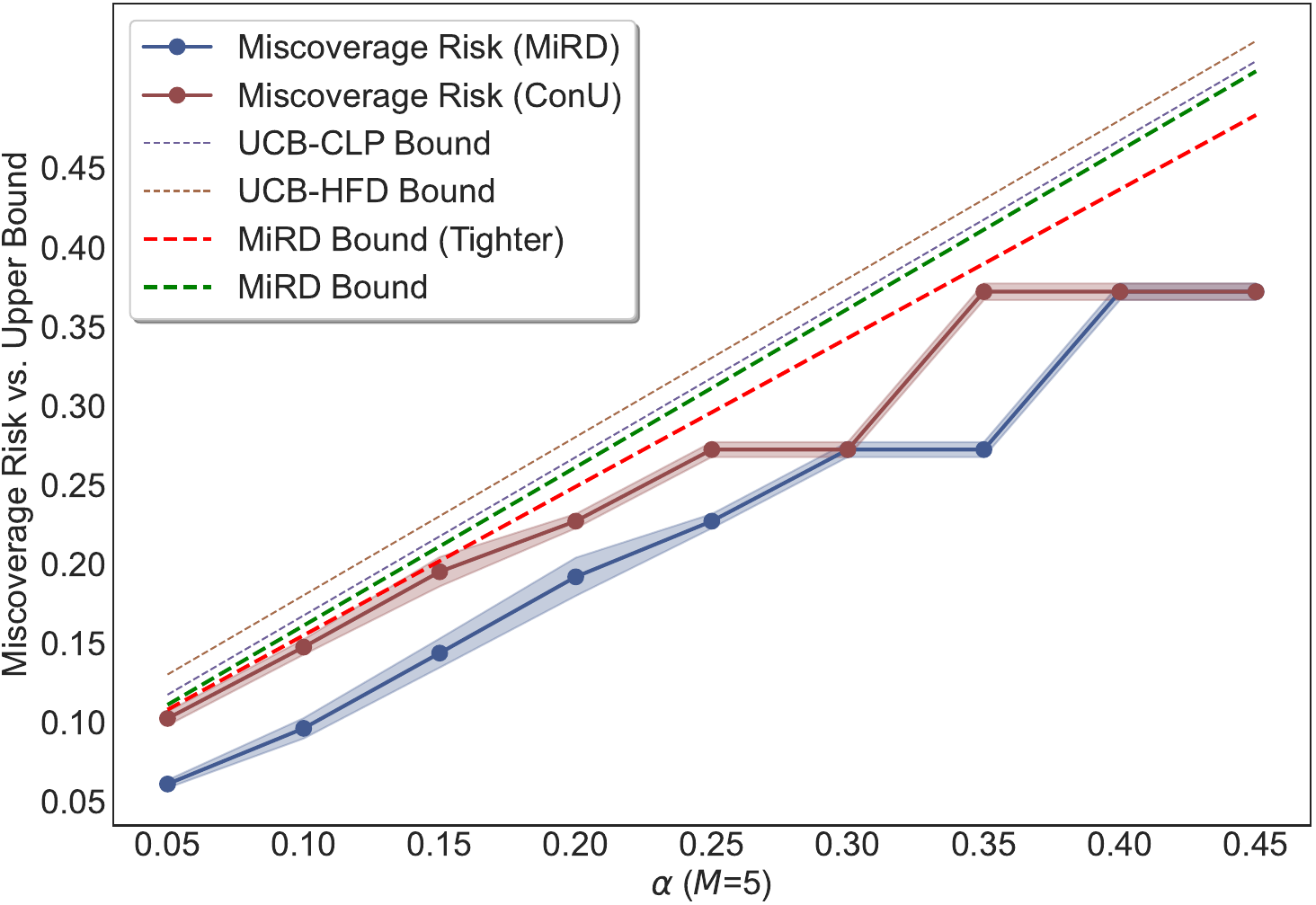}
    \caption{Vicuna-13B-Instruct.}
  \end{subfigure}

  \begin{subfigure}[b]{0.245\textwidth}
    \centering
    \includegraphics[width=\textwidth]{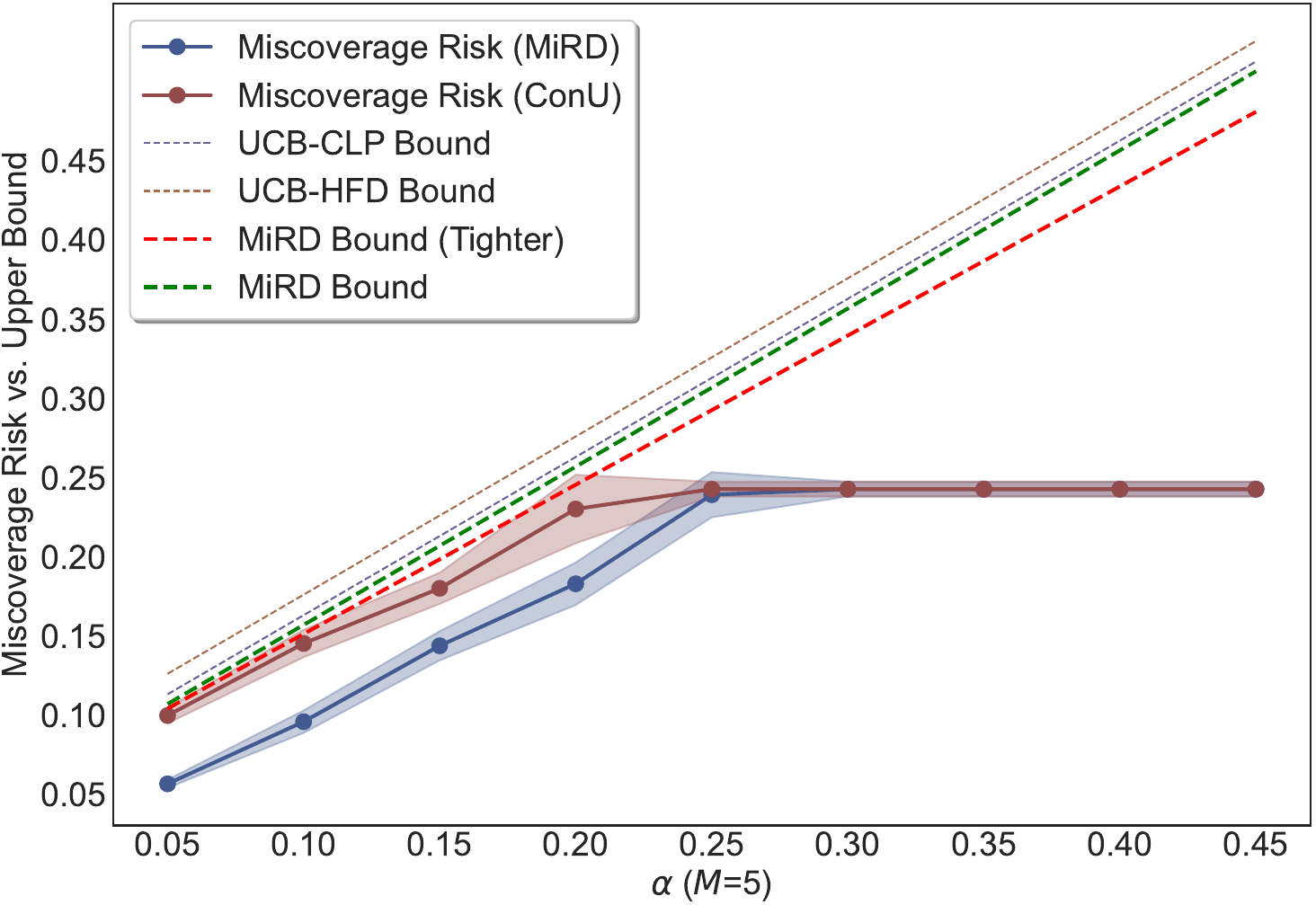}
    \caption{OpenChat-3.5 (7B).}
  \end{subfigure}
  \hfill
  \begin{subfigure}[b]{0.245\textwidth}
    \centering
    \includegraphics[width=\textwidth]{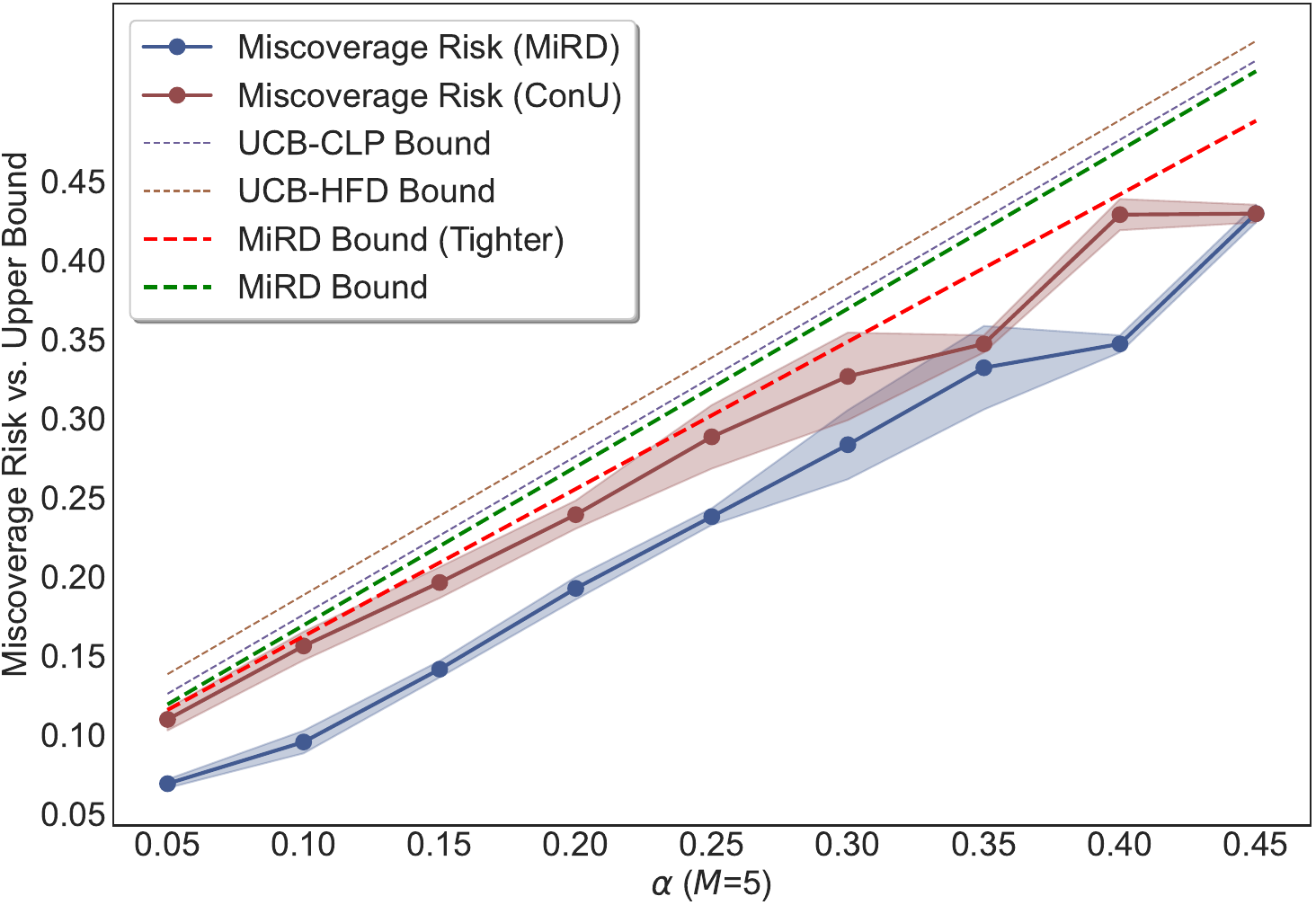}
    \caption{Qwen2.5-3B-Instruct.}
  \end{subfigure}
  \hfill
  \begin{subfigure}[b]{0.245\textwidth}
    \centering
    \includegraphics[width=\textwidth]{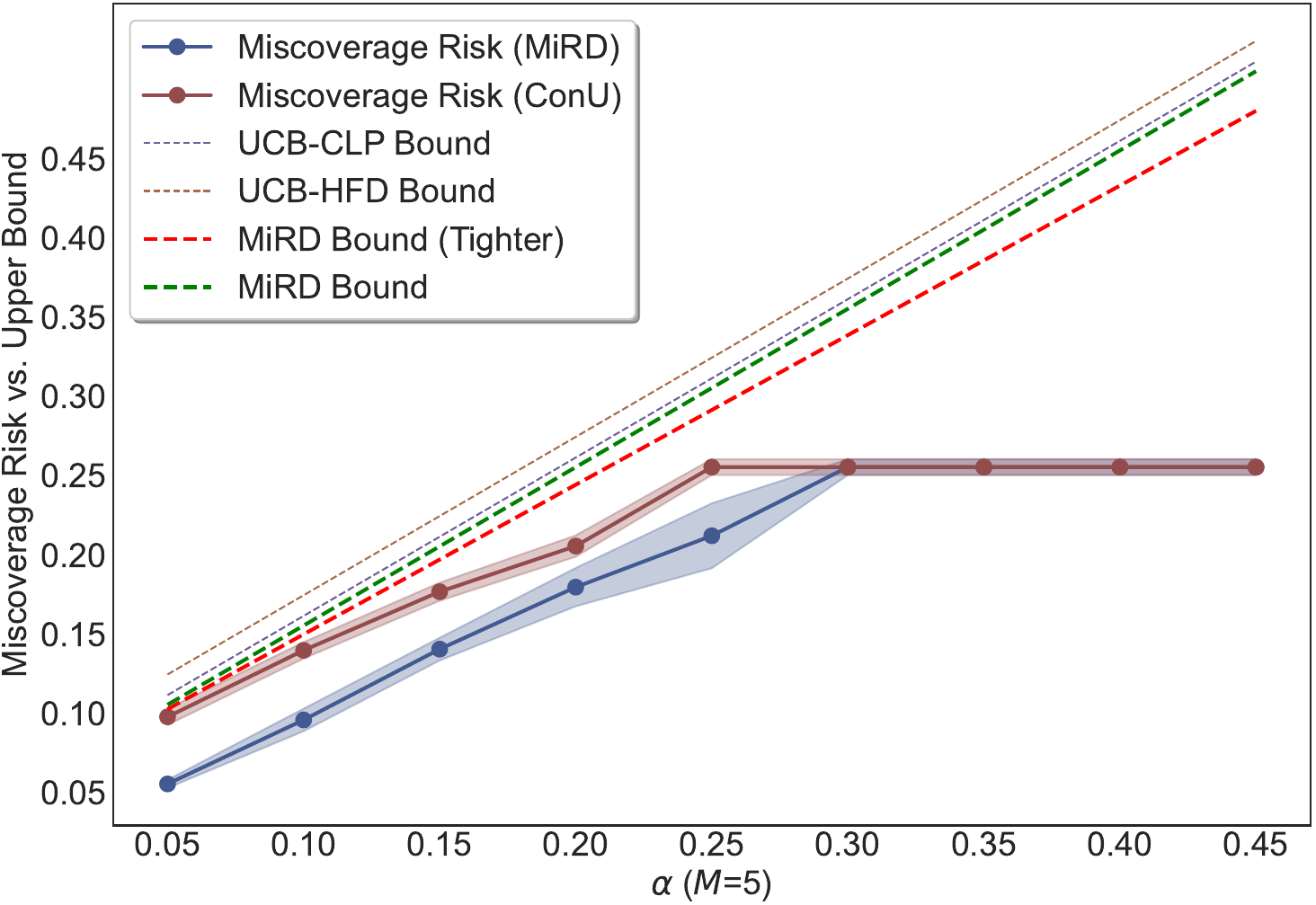}
    \caption{Qwen2.5-7B-Instruct.}
  \end{subfigure}
  \hfill
  \begin{subfigure}[b]{0.245\textwidth}
    \centering
    \includegraphics[width=\textwidth]{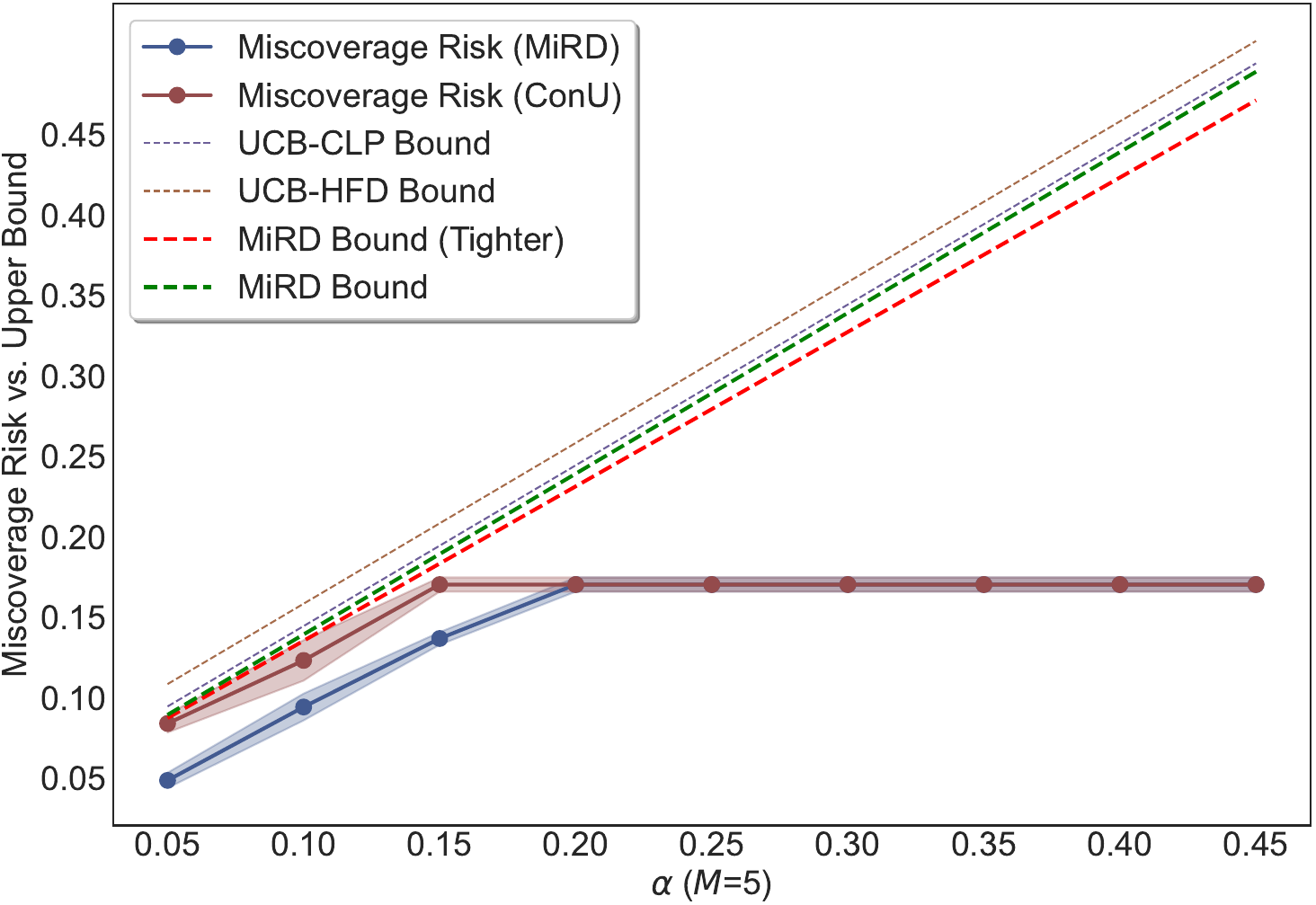}
    \caption{Qwen2.5-14B-Instruct.}
  \end{subfigure}

      \caption{Overall miscoverage risk vs. upper bound at various risk levels on TriviaQA with eight LLMs ($M=5$).}
  \label{fig: miscoverage risk control triviaqa (similairty-0.6, M=5).}
  \vspace{-5mm}
\end{figure*}

\subsection{Conditional Selection Risk Control under Full-Calibration Thresholding}

We next evaluate the statistical validity of Stage II, fixing $M=5$ on TriviaQA. 
As shown in Figure~\ref{fig: conditional selection risk control triviaqa (similairty-0.6, M=5).}, MiRD consistently controls the conditional selection risk below the target $\alpha$ across all eight LLMs, validating the Stage-II guarantee in practice. 
Moreover, MiRD achieves uniformly lower test-time conditional miscoverage than ConU under the same risk level. 
This behavior is expected: by retaining the full calibration set, MiRD calibrates its threshold against a larger admission-correlated score distribution, which results in a weakly larger threshold than successful-only calibration and therefore a weakly larger prediction set on sampling-successful test examples. 
As discussed in Appendix~\ref{app:conditional_exchangeability}, this monotone enlargement preserves coverage and explains why MiRD is empirically more conservative than ConU in the selection stage.

However, lower conditional miscoverage alone does not fully characterize the utility of a set-valued predictor, because a larger threshold typically leads to a larger prediction set. This should not be viewed as a weakness by default. In set-valued prediction, the role of prediction-set size is not to be minimized uniformly, but to adapt to sample difficulty and reflect model uncertainty. A useful predictor should allocate larger sets to more ambiguous examples, rather than collapsing to uniformly small sets.

To examine this behavior, we analyze deduplicated prediction set size using Vicuna-7B-Instruct on TriviaQA, stratifying sampling-successful test examples into easy and hard groups according to whether the top-1 candidate answer is admissible. As demonstrated in Figure~\ref{fig:difficulty_stratified_set_size_5}, MiRD yields consistently larger and more stable prediction sets than ConU for both groups, avoiding the rapid shrinkage observed under successful-only calibration. More importantly, Figure~\ref{fig:difficulty_stratified_gap_5} shows that MiRD preserves a larger adaptiveness gap between hard and easy examples across risk levels. This indicates that the additional set size introduced by MiRD is not merely uniform inflation; rather, it better retains difficulty-dependent variation and makes prediction-set size a more informative indicator of sample uncertainty.

Taken together, these results suggest that preserving full calibration-set integrity improves not only conditional risk control, but also the adaptiveness of the resulting set-valued predictions.

\subsection{Overall Miscoverage Risk Control}

We finally evaluate the end-to-end reliability. 
Since Stage I controls the marginal risk of finite-sampling failure, and Stage II controls conditional selection risk given sampling success, the resulting overall prediction set is expected to satisfy the population-level guarantees in Section~\ref{sec:overall_guarantee_main} and Appendix~\ref{app:tighter_bound}. 
Figure~\ref{fig: miscoverage risk control triviaqa (similairty-0.6, M=5).} confirms this empirically on TriviaQA: across all eight LLMs and all target risk levels considered, the test-time overall miscoverage of MiRD remains below both the conservative main-text bound and the tighter refinement in Appendix~\ref{app:tighter_bound}.

Moreover, the tighter MiRD bound consistently tracks the empirical overall miscoverage at test time more closely than the conservative bound, confirming that the overlap correction in Appendix~~\ref{app:tighter_bound} is meaningful in practice rather than merely algebraic. In contrast, UCB-CLP and UCB-HFD remain substantially looser, reflecting the conservativeness of PAC-style confidence bounds when propagated through the two-stage pipeline. 
Meanwhile, compared with ConU, MiRD achieves competitive or lower overall miscoverage while additionally accounting for finite-sampling failure, showing that preserving full calibration-set integrity improves end-to-end reliability rather than merely enlarging intermediate prediction sets. 

Additional results on the CoQA and NQ datasets, and broader settings are provided in Appendix~\ref{app:Additional Experimental Results}.

\section{Conclusion}

We present MiRD, a two-stage framework for reliable set-valued prediction in open-ended QA. 
The central idea is to explicitly decompose overall miscoverage into two sources: marginal sampling failure and conditional selection failure. Based on this decomposition, MiRD first establishes a marginal upper bound in expectation for the finite-sampling risk, and further calibrates a conformalized selection threshold using the full calibration set through ground-truth-guided nonconformity estimates. 
Empirically, MiRD achieves valid risk control in both stages and for the overall prediction set across diverse LLM families, while producing tighter first-stage bounds than PAC-style confidence intervals and more adaptive prediction sets than successful-only calibration. 
These results demonstrate that preserving calibration-set integrity is beneficial not only for statistical validity, but also for the utility of set-valued prediction in open-ended generation. 

\section*{Limitations}

MiRD inherits several standard assumptions from conformal prediction and open-ended generation. 
First, its guarantees rely on the exchangeability between calibration and test examples, as in standard split conformal prediction. 
This assumption is appropriate for the marginal validity studied in this work, but substantial distribution shifts may require additional adaptation or recalibration mechanisms. 
Second, the selection-stage guarantee uses a score-level conditional compatibility assumption. 
We make this assumption explicit and provide a discussion in the appendix; developing weaker or empirically testable variants of this condition is an interesting direction for future work. 
Finally, MiRD studies risk control under a fixed finite sampling budget. 
This setting allows a clean decomposition of sampling failure and post-selection failure, while adaptive budget allocation could further improve the cost--risk trade-off in future deployments.

\bibliography{custom}

\newpage
\appendix

\section{Proofs and Discussions}
\label{sec: proofs}
\subsection{Proof of Eq.~\eqref{eq:risk_decomposition_main}}
By the law of total probability,
\[
\begin{split}
&\!\Pr(R_{N+1}(\lambda)=1)=\\
&\!\Pr(R_{N+1}(\lambda)=1\mid Z_{N+1}=1)\!\Pr(Z_{N+1}=1)+\\
&\!\Pr(R_{N+1}(\lambda)=1\mid Z_{N+1}=0)\!\Pr(Z_{N+1}=0).
\end{split}
\]
When $Z_{N+1}=1$, no admissible answer appears in the candidate set $\mathcal{G}_M(x_{N+1})$. Since $\mathcal{C}_\lambda(x_{N+1})\subseteq \mathcal{G}_M(x_{N+1})$, it follows that $R_{N+1}(\lambda)=1$ almost surely, so
\[
\Pr(R_{N+1}(\lambda)=1\mid Z_{N+1}=1)=1.
\]
This yields Eq.~\eqref{eq:risk_decomposition_main}.

\subsection{Proof of Proposition~\ref{prop:fixedM_marginal_bound_main}}

\begin{proof}
By exchangeability, the random variables
\vspace{-0.2cm}
\[
Z_1(M),\ldots,Z_N(M),Z_{N+1}(M)
\]
have the same marginal distribution. Hence,
\vspace{-0.2cm}
\begin{equation}
\begin{split}
    \mathbb{E}[Z_{N+1}(M)]
&=
\frac{1}{N+1}\sum_{i=1}^{N+1}\mathbb{E}[Z_i(M)]\\
&=
\mathbb{E}\!\left[
\frac{1}{N+1}\sum_{i=1}^{N+1} Z_i(M)
\right].
\end{split}
\label{eq:exchangeability_avg_main}
\end{equation}
Using $Z_{N+1}(M)\le 1$, we obtain
\vspace{-0.2cm}
\[
\begin{split}
    &\quad \frac{1}{N+1}\sum_{i=1}^{N+1} Z_i(M)\\
&=\!
\frac{1}{N+1}\sum_{i=1}^{N} Z_i(M)
\!+\!
\frac{1}{N+1}Z_{N+1}(M)\\
&\le\!
\frac{1}{N+1}\sum_{i=1}^{N} Z_i(M)
\!+\!
\frac{1}{N+1}.
\end{split}
\]
Taking expectations on both sides and using Eq.~\eqref{eq:emp_sampling_failure_main} gives
\vspace{-0.2cm}
\[
\mathbb{E}[Z_{N+1}(M)]
\le
\mathbb{E}\!\left[
\frac{N}{N+1}\widehat{R}_N(M)+\frac{1}{N+1}
\right].
\]
Combining this with Eq.~\eqref{eq:pfail_mean_equiv_main} proves the statistical validity of Proposition~\ref{prop:fixedM_marginal_bound_main}.
\end{proof}

\subsection{Conditional Exchangeability Assumption}
\label{app:conditional_exchangeability}

The second-stage guarantee in Section~\ref{sec:conformal_selection_main} requires that, conditional on $\{Z_{N+1}=0\}$, the augmented admission-correlated score sequence
\[
\{u_1,\dots,u_N,u_{N+1}\}
\]
is exchangeable, a score-level assumption tailored to our construction and stronger than the exchangeability of the underlying data points alone.

The requirement for this assumption arises from an intrinsic asymmetry in our two-stage framework. In Stage I, the sampling failure is defined with respect to the finite candidate set generated by the model. 
In Stage II, however, conformalized selection is only meaningful on test data points for which sampling has already succeeded (i.e., $Z_{N+1}=0$). By contrast, at calibration time, the ground-truth answer is always available, which allows us to define an admission-correlated score for \emph{every} calibration data point, regardless of whether its sampled candidate set contains an admissible answer. Consequently, the calibration-side score distribution is constructed on the full calibration set, while the test-side score is only employed on the sampling-successful subset of the test distribution.

This asymmetry is inherent to the proposed risk decomposition, but rather a direct consequence of the risk decomposition in Eq.~\eqref{eq:risk_decomposition_main}: the first-stage term accounts for whether an admissible answer appears at all, while the second-stage term only concerns whether such an answer is retained after conformalized filtering. 
Since Stage II is conditioned on $\{Z_{N+1}=0\}$, its theoretical validity does not require the calibration and test candidate sets themselves to be symmetric in a pointwise sense. Instead, what is required is that the calibration-time ground truth scores provide a valid surrogate for the test-time admission-correlated reference (admissible) score under the same conditional regime.

More concretely, the calibration scores are defined as
\[
u_i = U_G(y_i^* \mid x_i),
\]
whereas the test score is defined using the lowest-uncertainty admissible sampled answer,
\[
u_{N+1} = U_G(\hat y^{(\mathrm{ref})}_{N+1}\mid x_{N+1}),
\]
where
\[
\hat y^{(\mathrm{ref})}_{N+1}
\in
\operatorname*{\arg\min}_{\hat y \in G_M(x_{N+1}):\,A(y_{N+1}^*,\hat y)=1}
U_G(\hat y \mid x_{N+1}).
\]
Therefore, exchangeability required in Section~\ref{sec:conformal_selection_main} is not implied automatically by the exchangeability of the raw data points $(x_i,y_i^*)$. Rather, it is a compatibility assumption asserting that, conditional on sampling success, ground truth score distribution on calibration data is symmetric with reference admissible-score distribution on the future test data. 
Thus, the assumption formalizes the precise role of  ground-truth answers: they are not injected into the candidate sets, nor do they alter the first-stage sampling event. Instead, they are used only to construct admission-correlated NSs on the calibration side, so as to retain all calibration examples while avoiding the exchangeability mismatch induced by augmentation-based designs. The validity of Stage II therefore hinges on score-level conditional compatibility, rather than candidate-set-level identity between calibration and test examples.

Then, we note that this assumption is empirically testable in a limited but informative sense. On calibration examples with $Z_i(M)=0$, one may compare the ground truth score $U_G(y_i^*\mid x_i)$ against the corresponding admissible sampled-answer reference score
\[
\min_{\hat y \in G_M(x_i):\,A(y_i^*,\hat y)=1} U_G(\hat y\mid x_i),
\]
to evaluate whether the two scores are well aligned in distribution or rank. Such analyses do not prove the assumption, but they offer evidence for the plausibility of using ground truth scores as surrogates for test-time admission-correlated sampled-answer scores.

\noindent \textbf{A simpler interpretation based on nested conformal sets.} 
For the specific nonconformity construction considered here, the Stage-II validity of MiRD also admits a simpler interpretation through threshold monotonicity. 
Previous methods such as ConU~\citep{wang-etal-2024-conu} assume that both calibration and test sampling sets contain at least one admissible candidate. Under this assumption, the admission-correlated NSs on calibration and test examples are exchangeable, and the resulting conformal threshold is therefore statistically valid.

MiRD differs from this successful-only calibration view only in that it retains the full calibration set. Concretely, for calibration data points whose sampling sets already contain admissible answers, MiRD uses the same admission-correlated nonconformity scores as ConU; for calibration examples whose sampling sets fail, MiRD effectively adds additional maximal scores equal to $1$ (let $U(\cdot)\leq1$). Therefore, under the same target risk level $\alpha$, the empirical score distribution used by MiRD is obtained by augmenting the successful-only score distribution with extra values at the upper endpoint. As a result, the calibrated quantile cannot decrease:
\[
\hat\lambda_{\mathrm{MiRD}} \ge \hat\lambda_{\mathrm{ConU}}.
\]

Since the conformal prediction sets are nested in the threshold,
\[
\lambda_1 \le \lambda_2
\quad\Longrightarrow\quad
C_{\lambda_1}(x)\subseteq C_{\lambda_2}(x),
\]
we further have
\[
C_{\hat\lambda_{\mathrm{ConU}}}(x)\subseteq C_{\hat\lambda_{\mathrm{MiRD}}}(x)
\qquad \text{for every } x.
\]
Hence, whenever the successful-only conformal predictor achieves coverage, the MiRD prediction set, being a superset under the same $\alpha$, also preserves that coverage. In this sense, for the present nonconformity design, MiRD can be viewed as a monotone enlargement of the successful-only conformal predictor, which offers an intuitive explanation for its Stage-II statistical validity.

This interpretation is specific to the current nonconformity construction, where failed calibration examples contribute maximal scores, and should be understood as a complementary justification rather than a replacement for the more general score-level conditional exchangeability formulation above.

\subsection{A Tighter Population-Level Miscoverage Bound}
\label{app:tighter_bound}

In the main text, we present a conservative marginal guarantee
\begin{equation}
\Pr\!\left(R_{N+1}(\hat{\lambda})=1\right)
\le
\alpha+\mathbb{E}\!\left[\widetilde{\beta}(M)\right],
\label{eq:app_loose_bound}
\end{equation}
where
\[
\widetilde{\beta}(M)
=
\frac{N}{N+1}\widehat{R}_N(M)+\frac{1}{N+1}
\]
is the exchangeability-corrected empirical proxy introduced in Section~\ref{sec:bound_sampling_failure}. 
This bound is simple and sufficient for our main result, but it is conservative because it upper-bounds the second-stage contribution by $\alpha$ ($\Pr\!\left(
Z_{N+1}(M)=0\right)\le 1$) without exploiting the fact that selection failure can only occur when finite sampling has already succeeded. 

To sharpen this characterization, let
\begin{equation}
\begin{split}
    &p_{\mathrm{fail}}(M):=\Pr\!\left(Z_{N+1}(M)=1\right),\\
&p_{\mathrm{sel}}:=\Pr\!\left(R_{N+1}(\hat{\lambda})=1 \mid Z_{N+1}(M)=0\right).
\end{split}
\label{eq:app_pfail_psel}
\end{equation}
Then the decomposition in Section~\ref{sec:overall_guarantee_main} can be written as
\begin{equation}
\begin{split}
\Pr\!\left(R_{N+1}(\hat{\lambda})=1\right)
&=
p_{\mathrm{fail}}(M)\\&+p_{\mathrm{sel}}\bigl(1-p_{\mathrm{fail}}(M)\bigr).
\end{split}
\label{eq:app_exact_decomp}
\end{equation}
This formulation makes explicit that the second-stage risk is incurred only on the subset of test data for which the first-stage finite-sampling step has already produced at least one admissible answer.

\begin{proposition}[A tighter population-level miscoverage bound]
\label{prop:app_tighter_bound}
Under the assumptions of Sections~\ref{sec:bound_sampling_failure}--\ref{sec:overall_guarantee_main},
\begin{equation}
\begin{split}
    \Pr\!\left(R_{N+1}(\hat{\lambda})=1\right)
&\le
\alpha+(1-\alpha)p_{\mathrm{fail}}(M)\\
&\le
\alpha+(1-\alpha)\mathbb{E}\!\left[\widetilde{\beta}(M)\right].
\end{split}
\label{eq:app_tight_bound}
\end{equation}
\end{proposition}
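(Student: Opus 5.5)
Starting from the exact decomposition in Eq.~\eqref{eq:app_exact_decomp}, write
\[
\Pr\!\left(R_{N+1}(\hat{\lambda})=1\right)=p_{\mathrm{fail}}(M)+p_{\mathrm{sel}}\bigl(1-p_{\mathrm{fail}}(M)\bigr).
\]
This is simply the law of total probability already established in the proof of Eq.~\eqref{eq:risk_decomposition_main}, using $\Pr(R_{N+1}=1\mid Z_{N+1}=1)=1$ and $\Pr(Z_{N+1}(M)=0)=1-p_{\mathrm{fail}}(M)$. The conditional guarantee in Eq.~\eqref{eq:conditional_miscoverage_main}, valid under the Stage-II conditional exchangeability assumption, gives $p_{\mathrm{sel}}\le\alpha$. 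Because the coefficient $1-p_{\mathrm{fail}}(M)$ is nonnegative, we can substitute this bound to get
\[
\Pr\!\left(R_{N+1}(\hat{\lambda})=1\right)\le p_{\mathrm{fail}}(M)+\alpha\bigl(1-p_{\mathrm{fail}}(M)\bigr)=\alpha+(1-\alpha)p_{\mathrm{fail}}(M).
\]
This is the first inequality. One subtlety is the conditioning event when $p_{\mathrm{fail}}(M)=1$. In that case the second term vanishes, so the bound holds trivially, and I would mention this degenerate case separately.

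For the second inequality, I will note that $t\mapsto\alpha+(1-\alpha)t$ is nondecreasing because $\alpha\in(0,1)$. Proposition~\ref{prop:fixedM_marginal_bound_main} gives $p_{\mathrm{fail}}(M)\le\mathbb{E}[\widetilde{\beta}(M)]$, with $\widetilde{\beta}(M)$ as defined in Eq.~\eqref{eq:beta_tilde_main}. Substituting this yields $\alpha+(1-\alpha)\mathbb{E}[\widetilde{\beta}(M)]$.

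The main care point is not algebraic but concerns the probability spaces. $p_{\mathrm{sel}}$ is a probability over the joint randomness of calibration and test data, since $\hat\lambda$ is random. $p_{\mathrm{fail}}(M)$ does not depend on $\hat\lambda$. I will therefore make sure both quantities in Eq.~\eqref{eq:app_pfail_psel} are read marginally over the same joint distribution, so that the total-probability decomposition and the Stage-II bound refer to the same measure. I will also remark that the result is indeed tighter than Eq.~\eqref{eq:app_loose_bound}, since $(1-\alpha)\mathbb{E}[\widetilde\beta]\le\mathbb{E}[\widetilde\beta]$.
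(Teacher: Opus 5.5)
Your proposal is correct and follows the paper's proof step for step: it substitutes $p_{\mathrm{sel}}\le\alpha$ into the exact decomposition, using that $1-p_{\mathrm{fail}}(M)\ge 0$, and then applies Proposition~\ref{prop:fixedM_marginal_bound_main} together with the monotonicity of $t\mapsto\alpha+(1-\alpha)t$. Your added remarks on the degenerate case $p_{\mathrm{fail}}(M)=1$ and on reading all probabilities under the same joint measure are harmless refinements that the paper leaves implicit.
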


\begin{proof}
Starting from Eq.~\eqref{eq:app_exact_decomp}, we have
\[
\Pr\!\left(R_{N+1}(\hat{\lambda})\!=\!1\right)
\!=\!
p_{\mathrm{fail}}(M)\!+\!p_{\mathrm{sel}}\bigl(1\!-\!p_{\mathrm{fail}}(M)\bigr).
\]
By Eq.~(23) in the main text, $p_{\mathrm{sel}}\le \alpha$. Therefore,
\[
\begin{split}
    \Pr\!\left(R_{N+1}(\hat{\lambda})\!=\!1\right)
&\!\le\!
p_{\mathrm{fail}}(M)\!+\!\alpha\bigl(1-p_{\mathrm{fail}}(M)\bigr)\\
&\!=\!
\alpha\!+\!(1\!-\!\alpha)p_{\mathrm{fail}}(M).
\end{split}
\]
Finally, Proposition~1 in the main text gives
\[
p_{\mathrm{fail}}(M)\le \mathbb{E}\!\left[\widetilde{\beta}(M)\right],
\]
which yields the second inequality in Eq.~\eqref{eq:app_tight_bound}.
\end{proof}

Compared with the conservative upper bound in Eq.~\eqref{eq:app_loose_bound}, the improvement is
\[
\begin{split}
  &  \bigl(\alpha+\mathbb{E}[\widetilde{\beta}(M)]\bigr)
-
\bigl(\alpha+(1-\alpha)\mathbb{E}[\widetilde{\beta}(M)]\bigr)
\\&\qquad \qquad \qquad =
\alpha\,\mathbb{E}[\widetilde{\beta}(M)].
\end{split}
\]
Hence, the gap is exactly the overlap that is double-counted by the loose upper bound. 
Importantly, this tightening is obtained by coupling the two error terms through the exact decomposition in Eq.~\eqref{eq:app_exact_decomp}; it does \emph{not} follow from independently replacing $\Pr(Z_{N+1}(M)=0)$ by $1-\mathbb{E}[\widetilde{\beta}(M)]$.

As an immediate corollary, we obtain a tighter population-level coverage guarantee
\begin{equation}
\begin{split}
    &\quad \Pr\!\left(
\exists \hat y\in\mathcal{C}_{\hat{\lambda}}(x_{N+1}), 
A(y_{N+1}^*,\hat y)=1
\right)\\
&\ge
1-\alpha-(1-\alpha)\mathbb{E}\!\left[\widetilde{\beta}(M)\right].
\end{split}
\label{eq:app_tight_coverage}
\end{equation}
Eq.~\eqref{eq:app_tight_bound} is viewed as a tighter population-level refinement of the main-text guarantee, while Eq.~\eqref{eq:app_loose_bound} remains the simpler and more transparent statement in the main development.

\section{Additional Related Work}
\label{sec: Additional Related Work}

\noindent \textbf{\textit{Necessity of Calibration Set Integrity.}} 
Previous SCP methods for open-ended QA typically discard calibration data points whose sampling sets fail to cover admissible candidates~\citep{wang-etal-2024-conu,wang2025sample,kaur2024addressing}. 
Under the i.i.d. assumption, however, this pruning contracts the empirical uncertainty distribution and restricts conformal validity to a smaller subset of test distribution~\citep{wang2025sconu}. 
Moreover, both TRON~\citep{wang2025sample} and SConU~\citep{wang2025sconu} suggest, through the calibration of sampling budget, that if every calibration instance contains admissible answers within the sampling set, then test points are guaranteed to produce admissible candidates via finite sampling. 
Such a premise conflicts with the model-agnostic nature of SCP~\citep{angelopoulos2023conformal}: for any deployed generative model, it is generally impossible to ensure that every input query is answerable or that an admissible answer lies within the model's finite-sample output space.

\noindent \textbf{\textit{Uncertainty Estimation for LLM Generation.}} 
Uncertainty estimation for free-form generation has been widely investigated as a way to detect hallucinations and unreliable model responses. 
Early approaches often exploit sampling-based consistency: if multiple stochastic generations are semantically consistent, the model is considered more reliable; otherwise, disagreement among generations can indicate uncertainty~\citep{manakul2023selfcheckgpt}. 
Because natural language admits many surface forms with the same meaning, later work proposes semantic-level uncertainty measures, such as semantic entropy~\citep{kuhn2023semantic}, relevance-aware token or sentence weighting~\citep{duan2024shifting}, and word-sequence-level uncertainty~\citep{wang2025word}, to better capture uncertainty in free-form answers. 
These methods provide useful uncertainty scores for open-ended QA, but they are typically heuristic and do not by themselves provide finite-sample risk guarantees. 
MiRD is complementary to this line of work: it can use such uncertainty measures as candidate scores, while calibrating them into risk-controlled set-valued predictions through the proposed miscoverage decomposition.

\noindent \textbf{\textit{Risk-Controlling Prediction and Conformal Generation.}} 
Beyond SCP, risk-controlling prediction frameworks calibrate prediction sets to satisfy user-specified loss constraints under distribution-free or finite-sample guarantees~\citep{bates2021distribution,jin2023selection,angelopoulos2025learn}. 
Recent work has also adapted conformal ideas to language generation, including conformal language modeling, conformal factuality guarantees, and surveys of SCP for NLP~\citep{quach2024conformal,mohri2024language,campos2024conformal,li2026set}. 
These studies indicate the promise of conformal methods for reliable generation, but many of them either focus on controlling the final generated output, calibrating a sampling or rejection rule, or constructing guarantees without explicitly separating the risk caused by finite sampling from the risk caused by post-hoc filtering. 
MiRD differs by decomposing overall miscoverage into marginal sampling failure and conditional selection failure, which makes the role of finite sampling explicit and enables full-calibration conformalized selection.

\paragraph{Selective Prediction and Abstention.}
Selective prediction studies the problem of abstaining from uncertain predictions to trade coverage for reliability~\citep{geifman2017selective,geifman2019selectivenet}. 
This idea has also influenced LLM reliability, where systems may abstain, defer, or filter outputs when uncertainty is high~\citep{jia2026balancerag,wang2025coin,wang2025lec}. 
However, selective prediction usually focuses on deciding whether to output a prediction, while MiRD focuses on producing a set-valued answer for open-ended QA. 
Moreover, MiRD distinguishes two different reasons why a set-valued predictor may fail: the admissible answer may never appear in the sampled candidate pool, or it may appear but be removed by the selection rule. 
This distinction is essential in open-ended generation, where the candidate space is not fixed and must be approximated by finite stochastic sampling.

\section{Details of Experimental Setup}
\label{app:Details of Experimental Setup}

\noindent \textbf{Details of utilized Datasets.} 
TriviaQA~\citep{joshi2017triviaqa} is a reading comprehension benchmark containing over 650K samples, which are authored by trivia enthusiasts and independently gathered evidence documents. 
CoQA~\citep{reddy2019coqa} is a large-scale dataset for building Conversational QA systems, with 127k questions with free-form answers, and each question is equipped with contextual information. 
Following \citet{lin2024generating}, we use the validation split of the \texttt{rc.nocontext} subset of TriviaQA, the development split of CoQA. 
We employ 8,000 question-answer pairs of TriviaQA (total 9,960 de-duplicated questions), and the full 7,983 questions of CoQA. 
For each sample, we adopt a one-shot prompt, as illustrated as follows:

\begin{tcolorbox}[
    colback=lightgraybg
]
\small{\texttt{$\#\#\#$ System:}\\
\texttt{This is a bot that correctly answers questions.}
\\

\texttt{$\#\#\#$ User:}\\
\texttt{In 1968, who did radical feminist Valerie Solanas shoot and wound as he entered his New York studio?}\\
\texttt{$\#\#\#$ Assistant:}\\
\texttt{Andy Warhol}\\

\texttt{$\#\#\#$ User:}\\
\texttt{Who was the man behind The Chipmunks?}\\
\texttt{$\#\#\#$ Assistant:\\}}
\end{tcolorbox}

\begin{tcolorbox}[
    colback=lightgraybg
]
\small{\texttt{$\#\#\#$ System:}\\
\texttt{This is a bot that correctly answers questions.}
\\
\texttt{Once upon a time, in a barn near a farm house, there lived a little white kitten named Cotton. Cotton lived high up in a nice warm place above the barn ...}

\texttt{$\#\#\#$ User:}\\
\texttt{What color was Cotton?}\\
\texttt{$\#\#\#$ Assistant:}\\
\texttt{white}\\

\texttt{$\#\#\#$ User:}\\
\texttt{Where did she live?}\\
\texttt{$\#\#\#$ Assistant:}\\}
\end{tcolorbox}

NQ~\citep{kwiatkowski2019natural} is a more realistic and challenging closed-book dataset, containing questions from real users, and it requires QA systems to read and comprehend an entire Wikipedia article that may or may not contain the answer to the question. 
We utilize 2,000 QA samples from the validation split with a total of 3,610 questions. 
The prompt adopted in NQ is illustrated as follows:

\begin{tcolorbox}[colback=lightgraybg]
\small{
\texttt{$\#\#\#$ System:}\\
\texttt{This is a bot that correctly answers questions.}
\\
\texttt{$\#\#\#$ User:}\\
\texttt{who breaks a tie in the us senate?}\\
\texttt{$\#\#\#$ Assistant:}\\
\texttt{The Vice President of the United States}\\

\texttt{$\#\#\#$ User:}\\
\texttt{which domain of life are humans members of?}\\
\texttt{$\#\#\#$ Assistant:}\\
}
\end{tcolorbox}

\noindent \textbf{Details of Correctness Criteria.} 
Following previous research~\citep{duan2024shifting,wang-etal-2024-conu,wang2025word,wang2026safer},  similarity is measured by the stsb-roberta-large~\citep{reimers2019sentence} model. 
We set the threshold as 0.6 by default, i.e., candidate answers having above 0.6 semantic similarities with the ground truth are deemed correct. 
For the bi-entailment criterion, we leverage a natural language inference (NLI) model with DeBERTa-v3-large-mnli-fever-anli-ling-wanli~\citep{he2023debertav} as the backbone. 
Here, given a QA pair, the NLI classifier predicts scores (logits) for three distinct classes: entailment, neutral, and contradiction. 
See Section 4.1 in \citet{lin2024generating} for more details. 
If and only if neither direction is labeled as contradiction and one direction is entailment, we consider the two QA instances to be equivalent. 

\noindent \textbf{Hyperparameters.} 
We fix the calibration-test split ratio at 0.5 by default. 
Following prior work~\citep{duan2024shifting,wang-etal-2024-conu,wang2025lec}, we use multinominal sampling for candidates with the generation \texttt{temperature} of 1.0 and \texttt{top-p} of 0.9.

\section{Additional Experimental Results}
\label{app:Additional Experimental Results}

\paragraph{Robustness of conditional selection risk across sampling budgets.}
Figure~\ref{fig:conditional_selection_budget} further evaluates the conditional selection risk of MiRD under different sampling budgets on TriviaQA, using Vicuna-7B-Instruct as the backbone. 
Across all sampling budgets from $M=4$ to $M=20$ and all target risk levels, MiRD consistently keeps the empirical conditional selection risk below the target level $\alpha$. 
This confirms that the Stage-II calibration remains valid not only at the default sampling budget used in the main experiments, but also across a broad range of finite-sampling regimes. 
As $\alpha$ increases, the empirical risk also increases smoothly, reflecting the expected coverage--set-size trade-off induced by conformal thresholding.

Compared with ConU, MiRD generally yields lower or comparable conditional selection risk across sampling budgets. 
This is consistent with our theoretical interpretation: by retaining sampling-failed calibration examples rather than discarding them, MiRD calibrates a weakly more conservative threshold over the full calibration distribution. 
The results show that this conservativeness is stable with respect to the sampling budget, suggesting that the benefit of full-calibration thresholding is not tied to a particular choice of $M$.

\begin{figure*}[!t]
    \centering
    \includegraphics[width=\linewidth]{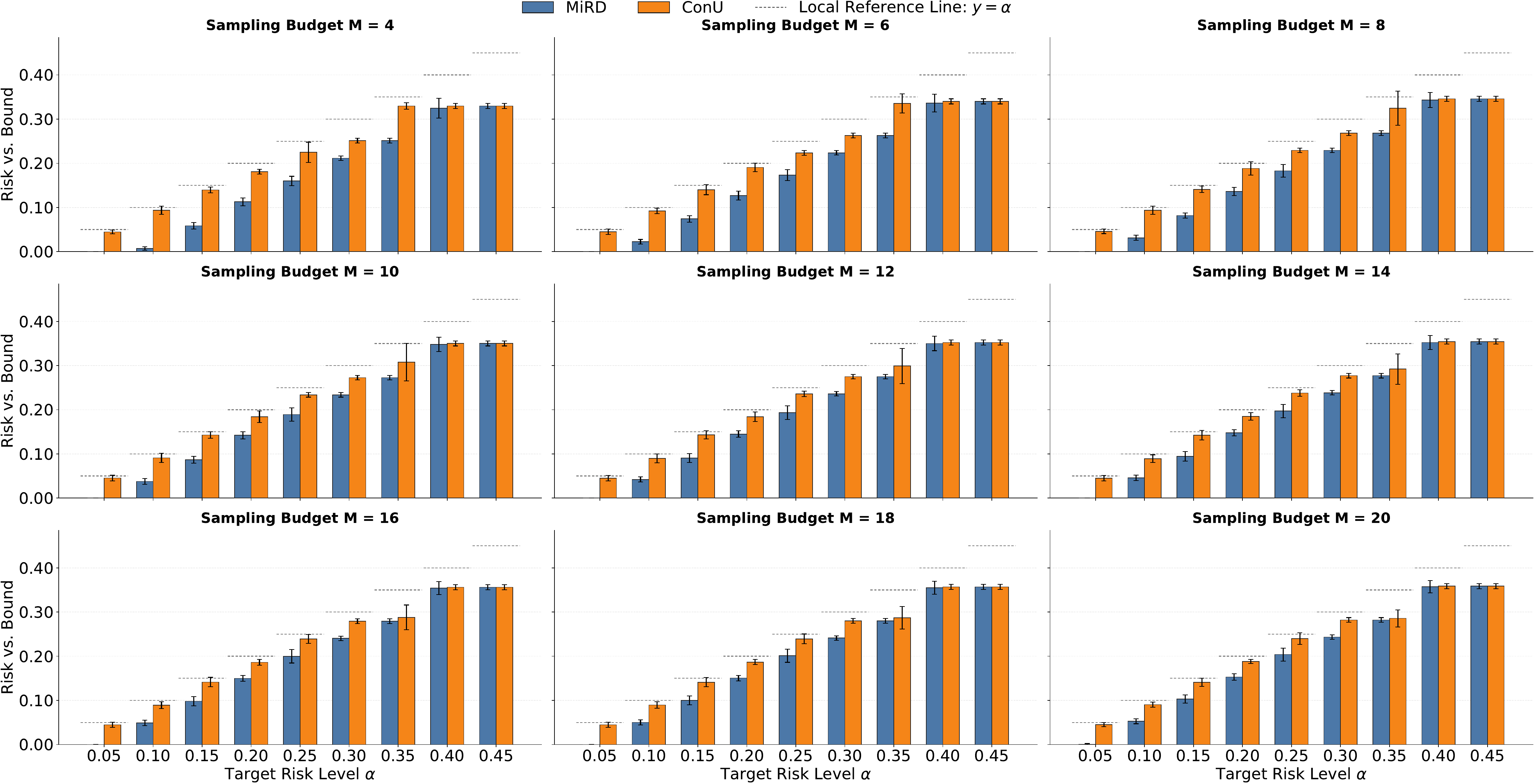}
    \caption{Conditional Selection Risk of MiRD vs. ConU across various sampling budgets and risk levels, using Vicuna-7B-Instruct on TriviaQA. Each subplot corresponds to one sampling budget. Within each $\alpha$ group, the dashed segment marks the target level $\alpha$; bars lower than the dashed segment indicate the mean risk is below $\alpha$.}
     \vspace{-2mm}\label{fig:conditional_selection_budget}
\end{figure*}

\begin{figure*}[!t]
  \centering
  \begin{subfigure}[b]{0.245\textwidth}
    \centering
    \includegraphics[width=\textwidth]{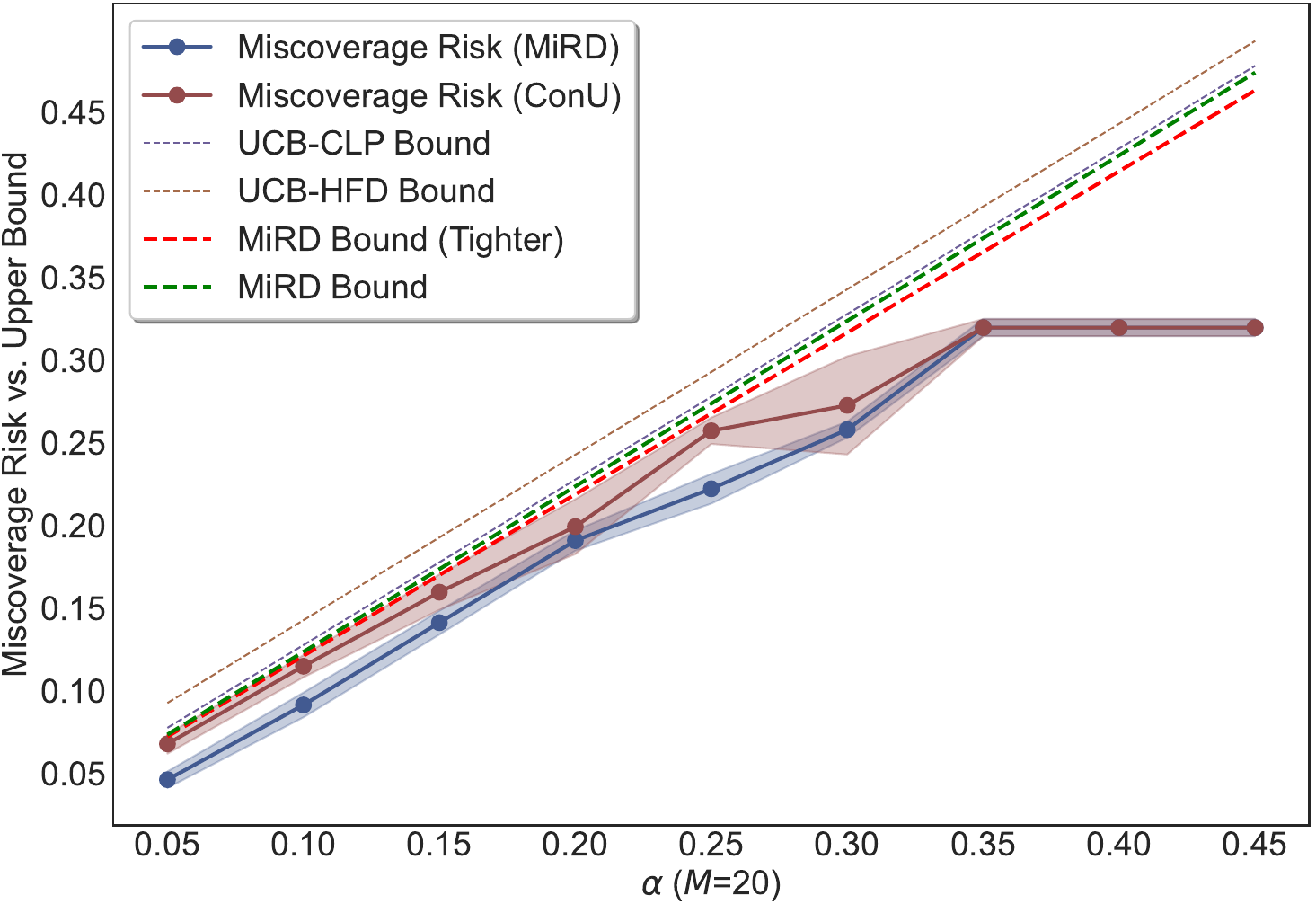}
    \caption{LLaMA-3.1-8B-Instruct.}
  \end{subfigure}
  \hfill
  \begin{subfigure}[b]{0.245\textwidth}
    \centering
    \includegraphics[width=\textwidth]{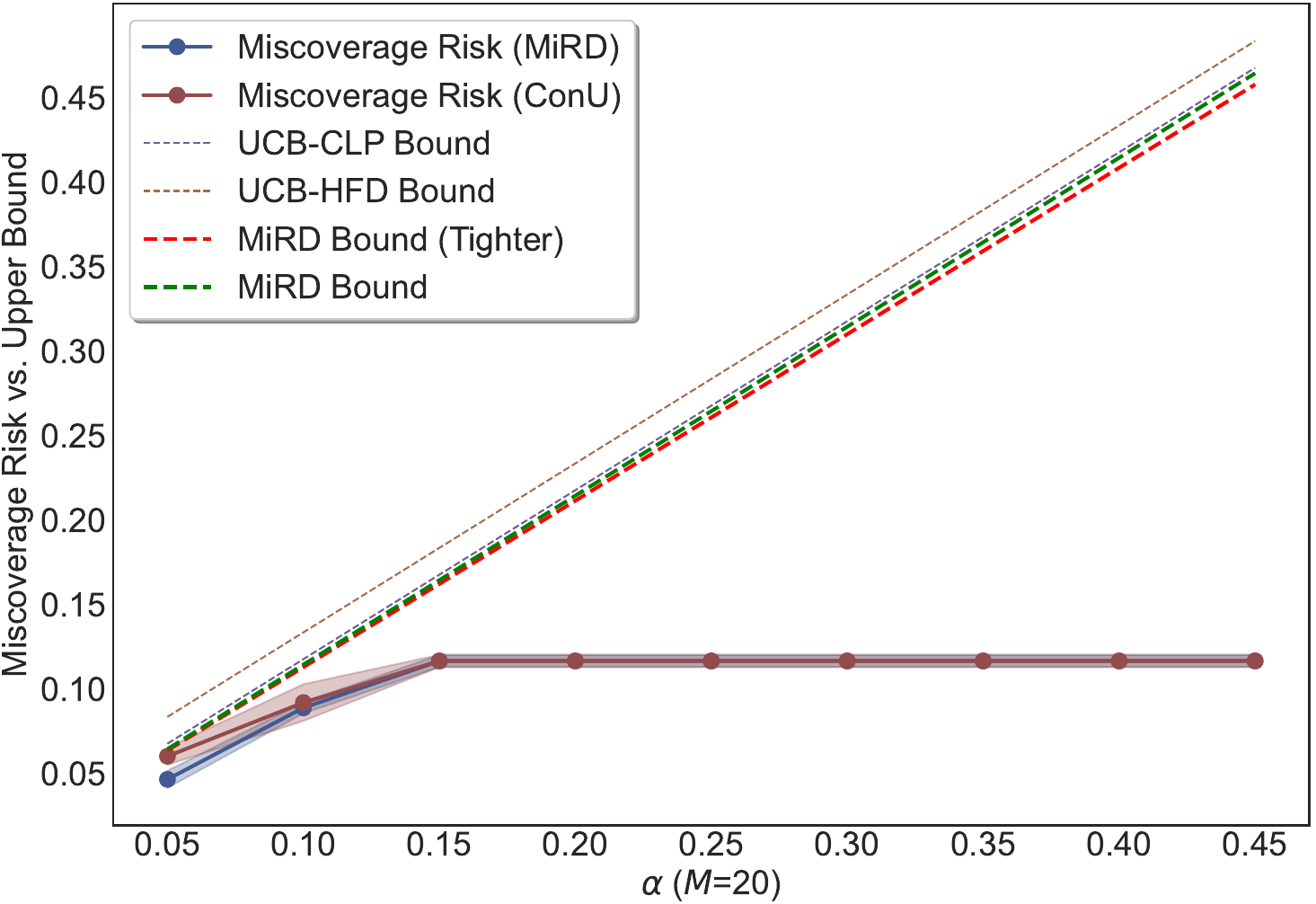}
    \caption{LLaMA-3.1-70B-Instruct.}
  \end{subfigure}
  \hfill
  \begin{subfigure}[b]{0.245\textwidth}
    \centering
    \includegraphics[width=\textwidth]{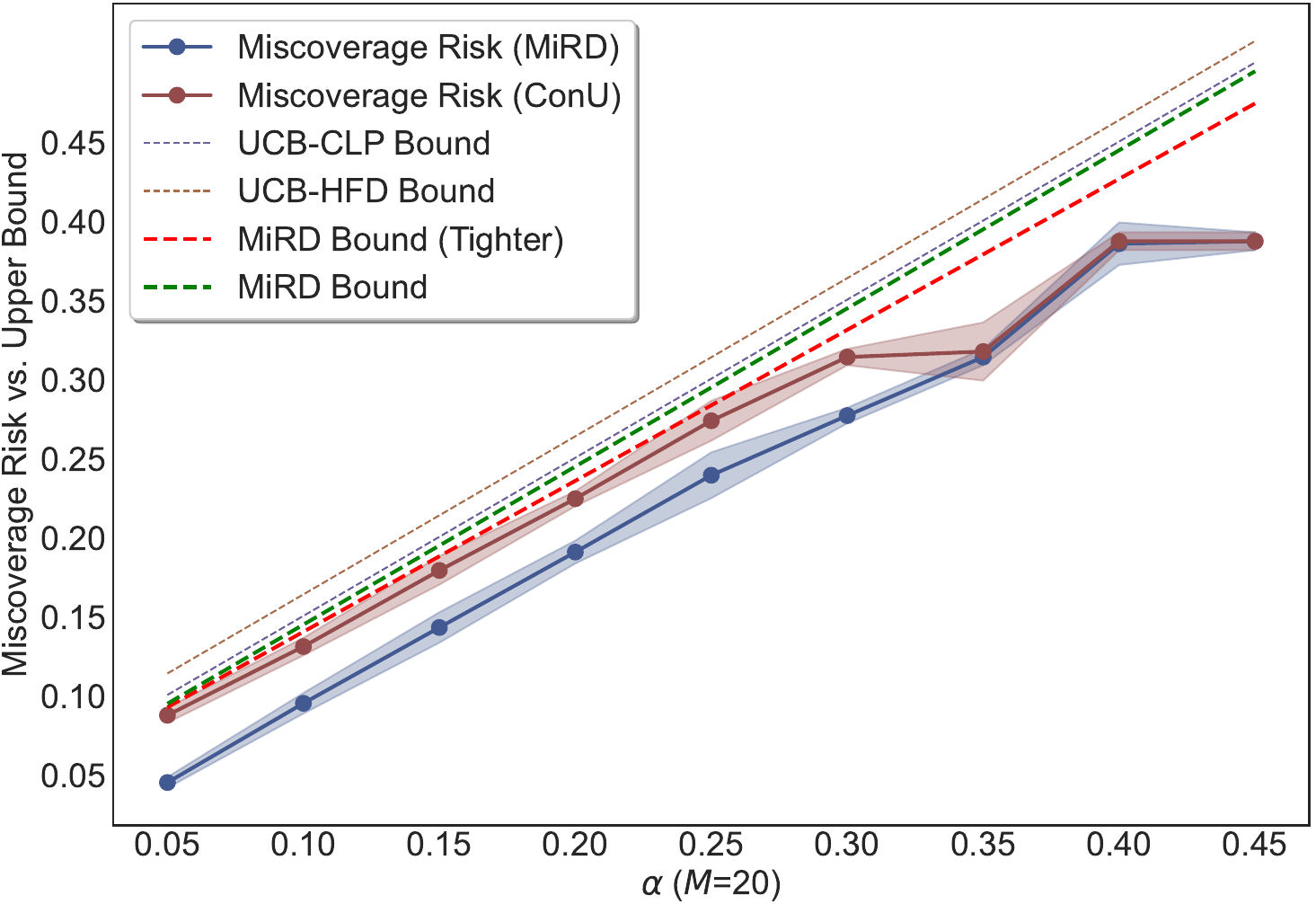}
    \caption{Vicuna-7B-Instruct.}
  \end{subfigure}
  \hfill
  \begin{subfigure}[b]{0.245\textwidth}
    \centering
    \includegraphics[width=\textwidth]{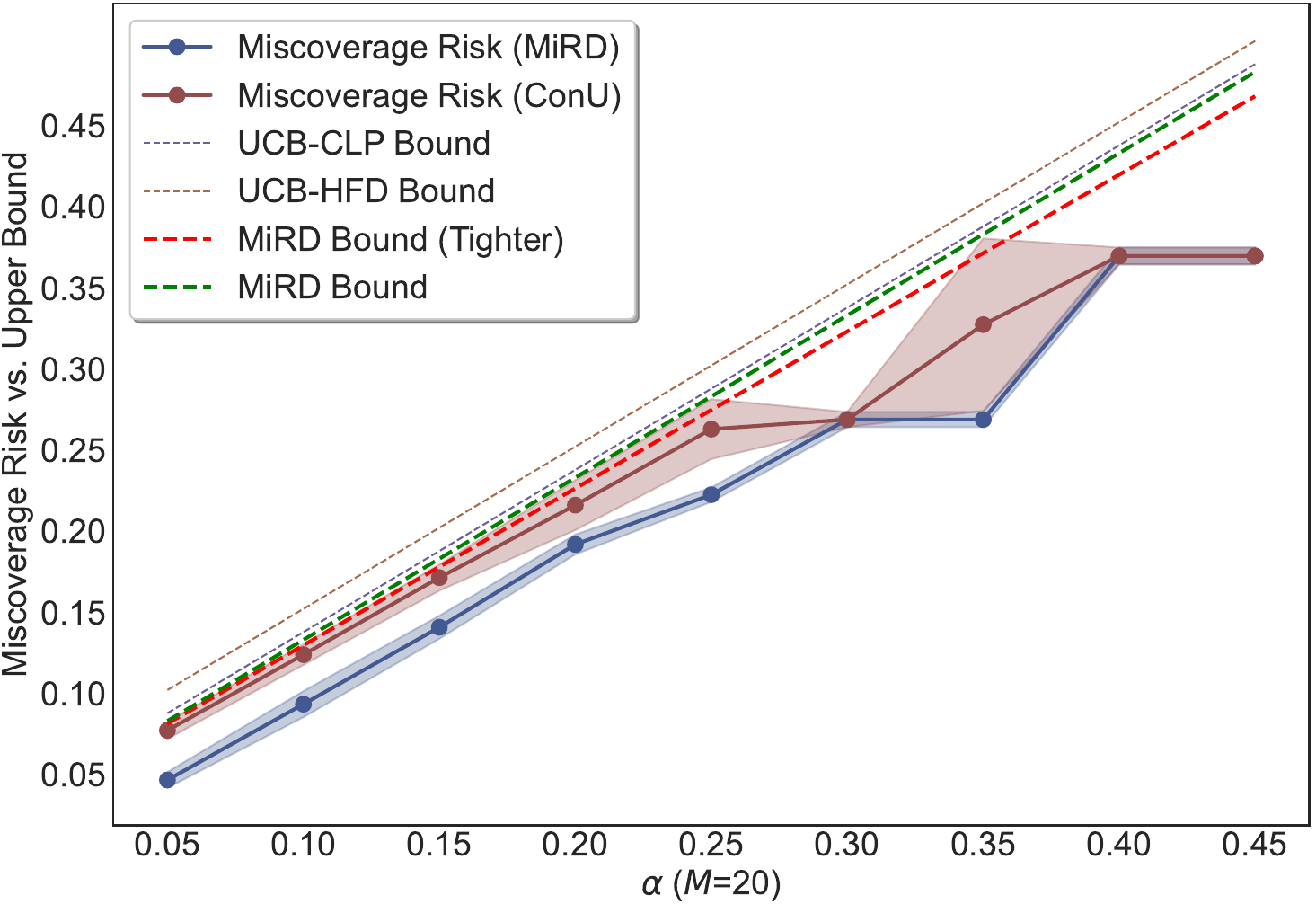}
    \caption{Vicuna-13B-Instruct.}
  \end{subfigure}

  \begin{subfigure}[b]{0.245\textwidth}
    \centering
            \includegraphics[width=\textwidth]{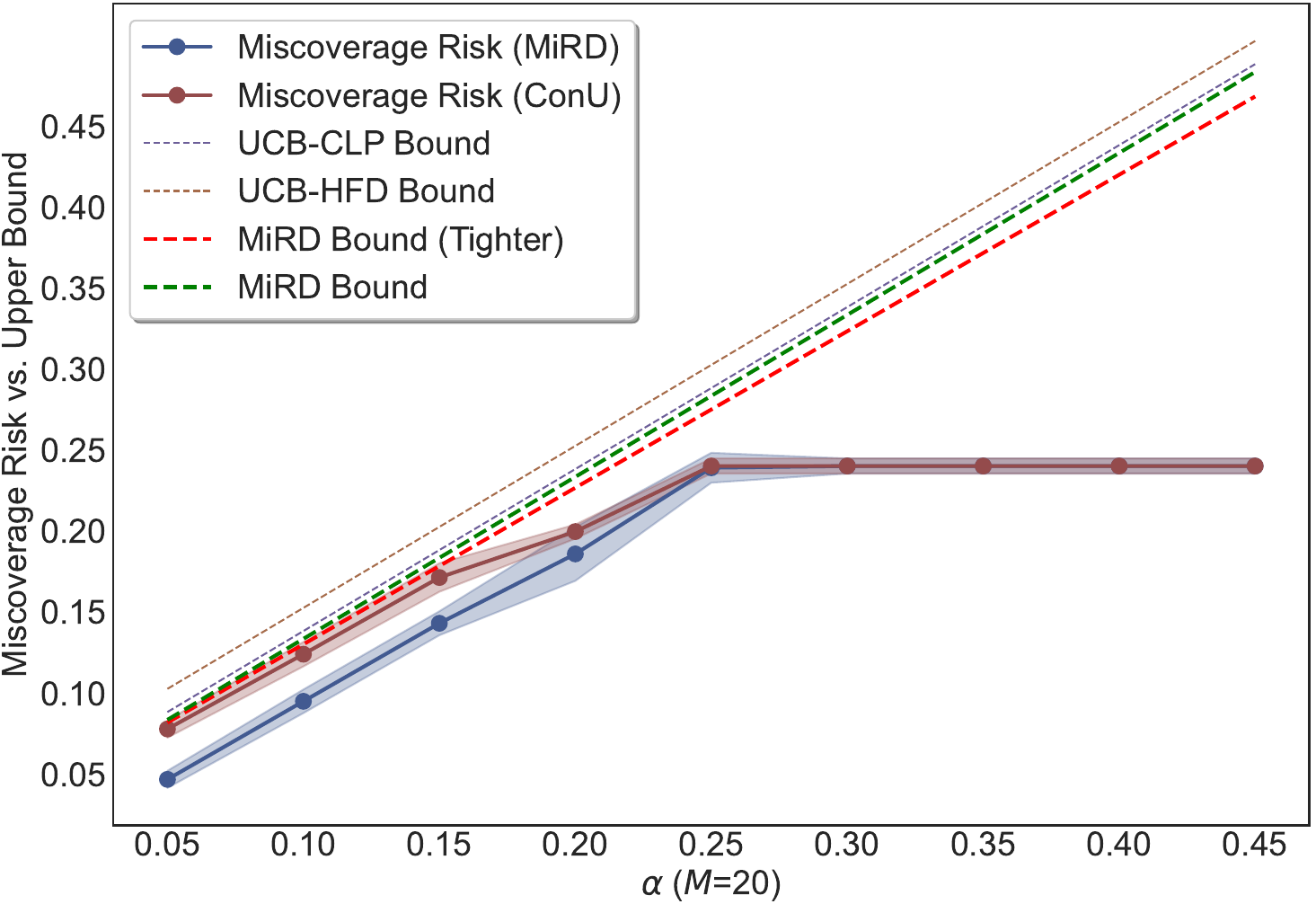}
    \caption{OpenChat-3.5 (7B).}
  \end{subfigure}
  \hfill
  \begin{subfigure}[b]{0.245\textwidth}
    \centering
    \includegraphics[width=\textwidth]{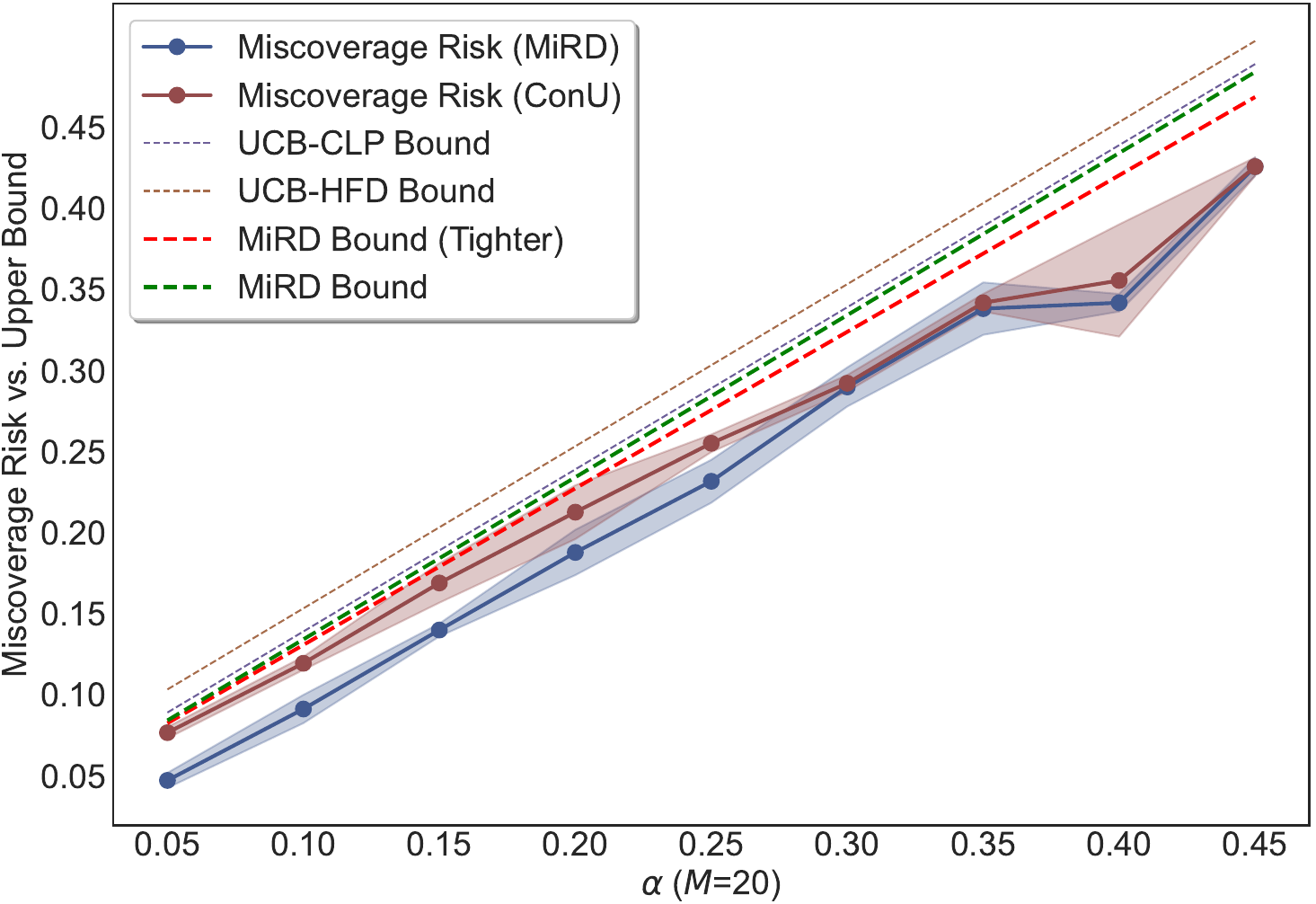}
    \caption{Qwen2.5-3B-Instruct.}
  \end{subfigure}
  \hfill
  \begin{subfigure}[b]{0.245\textwidth}
    \centering
    \includegraphics[width=\textwidth]{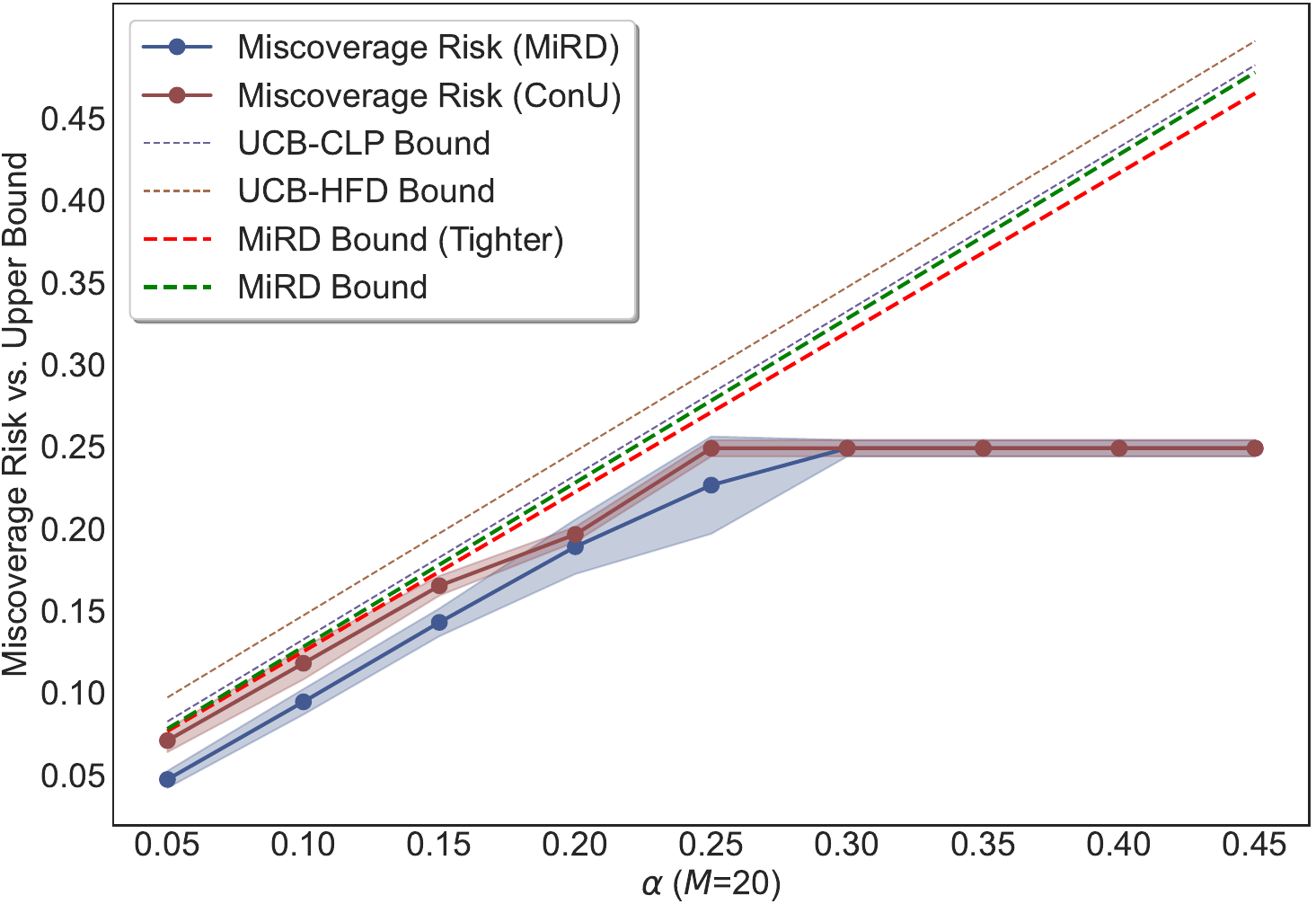}
    \caption{Qwen2.5-7B-Instruct.}
  \end{subfigure}
  \hfill
  \begin{subfigure}[b]{0.245\textwidth}
    \centering
    \includegraphics[width=\textwidth]{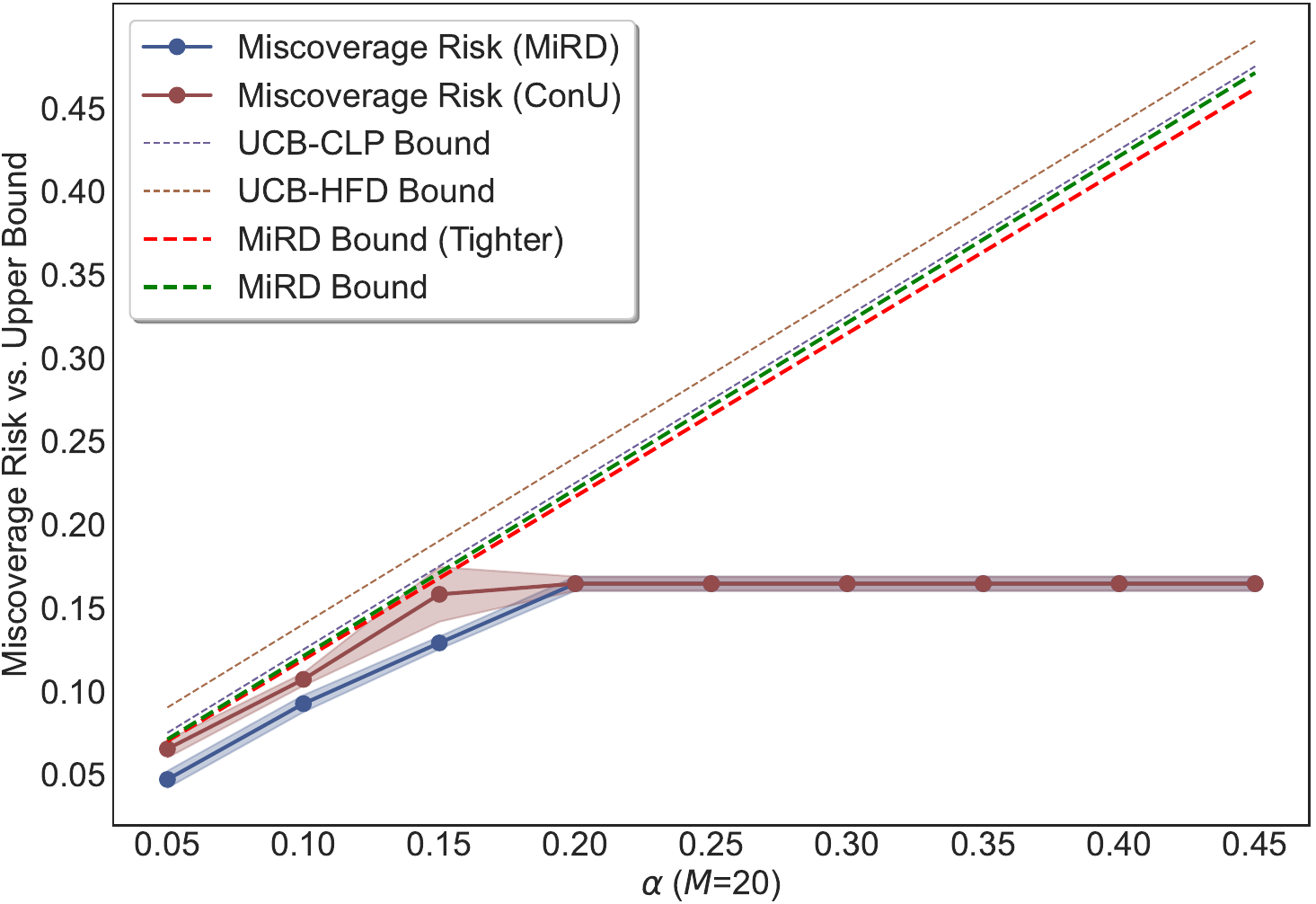}
    \caption{Qwen2.5-14B-Instruct.}
  \end{subfigure}

      \caption{Overall miscoverage risk vs. upper bound at various risk levels ($\alpha$) on TriviaQA with eight LLMs. Here, we set the sampling budget to 20.}
      \vspace{-4mm}
  \label{fig: miscoverage risk control triviaqa (similairty-0.6, M=20).}
\end{figure*}

\paragraph{Overall miscoverage control under a larger sampling budget.}
Figure~\ref{fig: miscoverage risk control triviaqa (similairty-0.6, M=20).} reports the end-to-end overall miscoverage risk on TriviaQA when the sampling budget is increased to $M=20$. 
Across all eight LLMs, the empirical overall miscoverage of MiRD remains below the theoretical upper bounds over the full range of target risk levels. 
Moreover, the tighter MiRD bound tracks the empirical miscoverage more closely than the conservative main-text bound, while both Clopper--Pearson and Hoeffding-style UCB bounds remain substantially looser. 
This further supports the usefulness of the refined decomposition bound in Appendix~\ref{app:tighter_bound}.

The comparison with ConU also shows that MiRD maintains competitive or lower end-to-end miscoverage while explicitly accounting for finite-sampling failure. 
This is important because increasing the sampling budget reduces, but does not conceptually remove, the possibility of sampling failure in open-ended QA. 
Therefore, even under a relatively large sampling budget, explicitly decomposing overall miscoverage into sampling failure and conditional selection failure remains beneficial for reliable set-valued prediction.

\begin{figure*}[!t]
  \centering
  \begin{subfigure}[b]{0.32\textwidth}
    \centering
    \includegraphics[width=\textwidth]{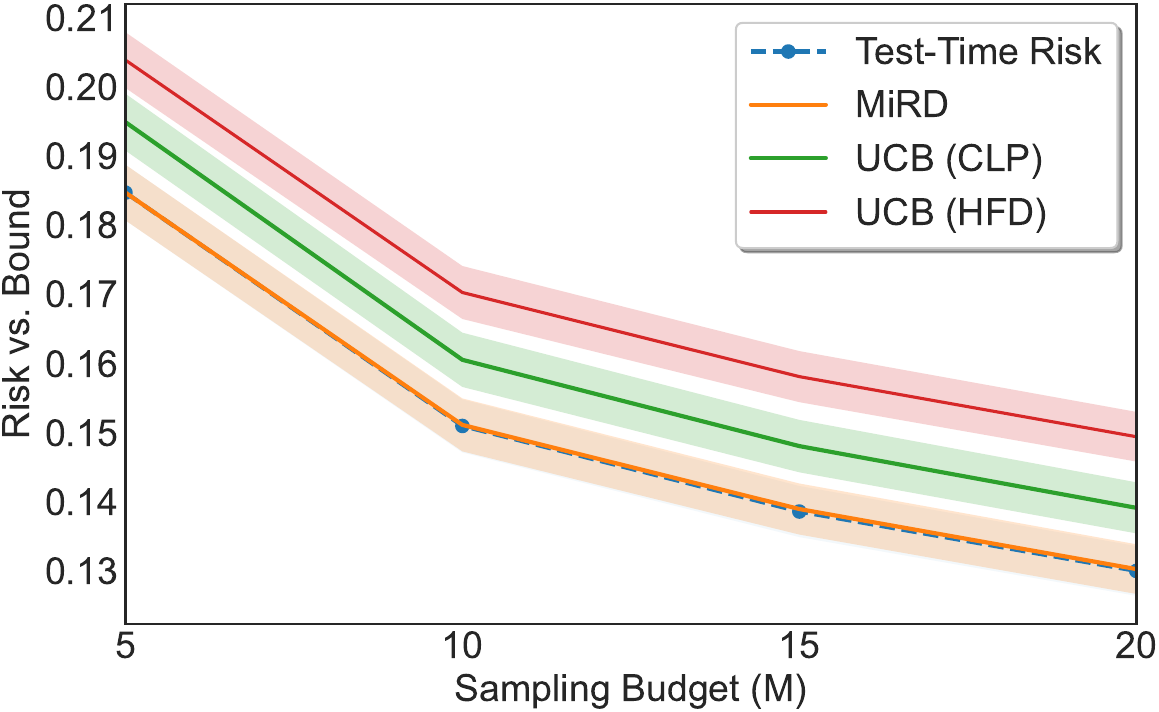}
    \caption{LLaMA-3.1-8B-Instruct.}
  \end{subfigure}
  \hfill
  \begin{subfigure}[b]{0.32\textwidth}
    \centering
    \includegraphics[width=\textwidth]{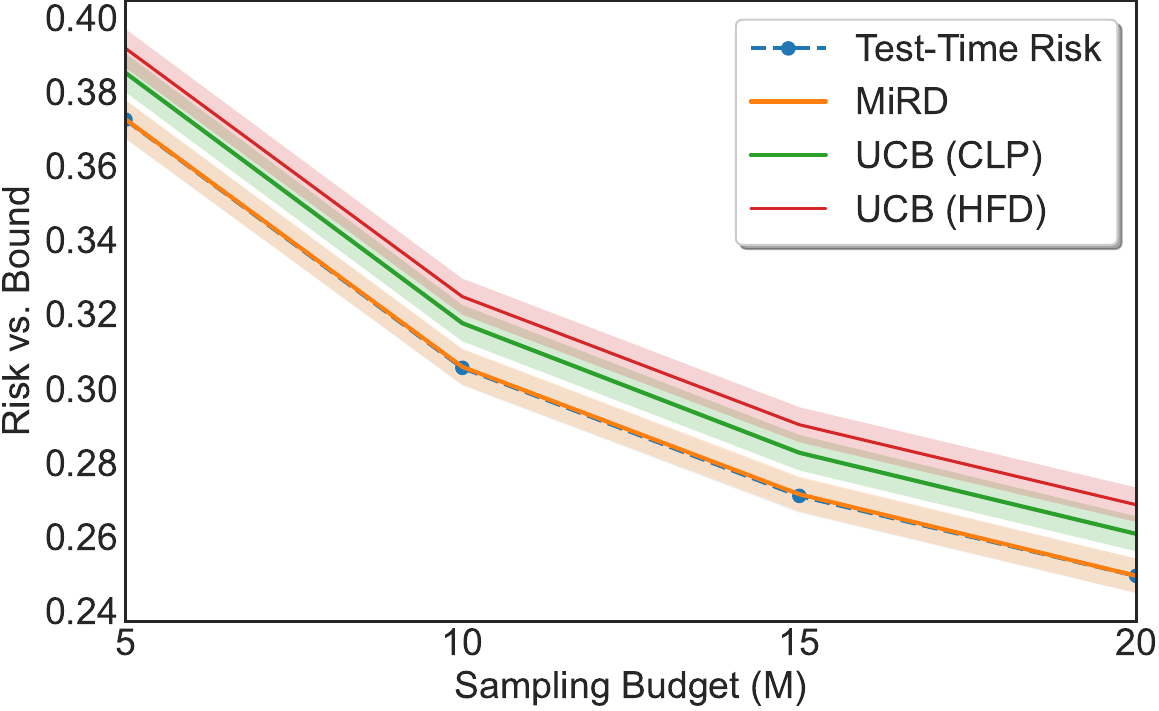}
    \caption{Qwen2.5-3B-Instruct.}
  \end{subfigure}
  \hfill
  \begin{subfigure}[b]{0.32\textwidth}
    \centering
    \includegraphics[width=\textwidth]{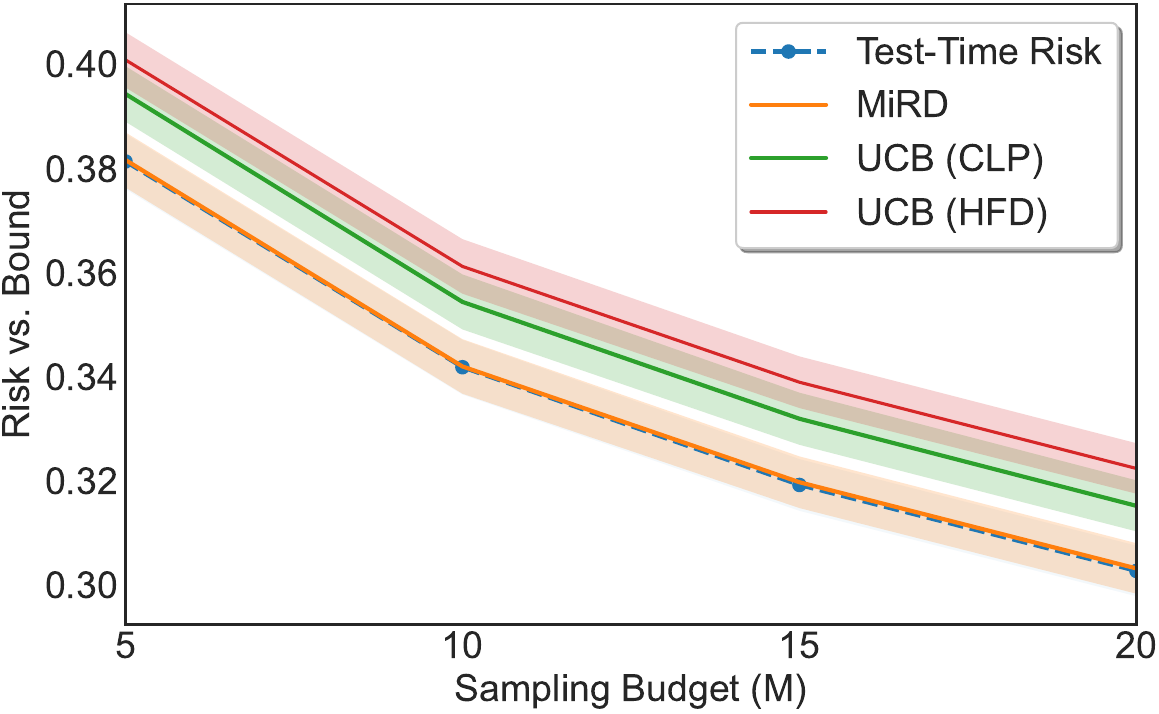}
    \caption{Qwen2.5-14B-Instruct.}
  \end{subfigure}

  \begin{subfigure}[b]{0.32\textwidth}
    \centering
    \includegraphics[width=\textwidth]{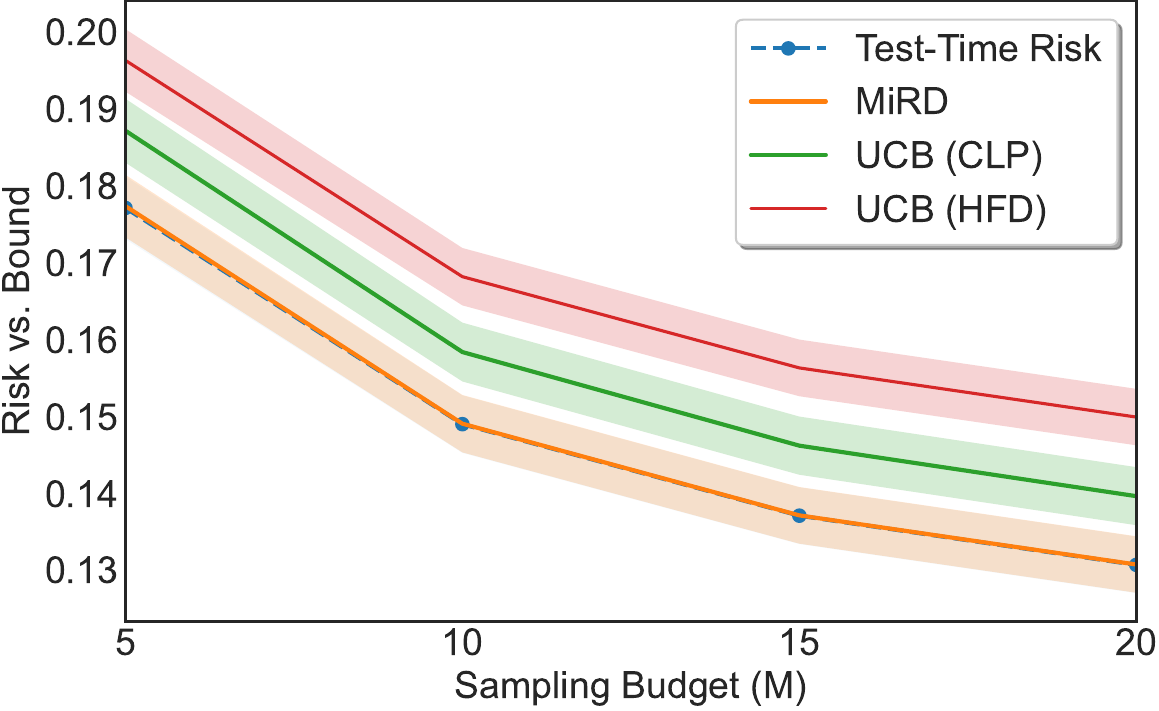}
    \caption{OpenChat-3.5 (7B).}
  \end{subfigure}
  \hfill
  \begin{subfigure}[b]{0.32\textwidth}
    \centering
    \includegraphics[width=\textwidth]{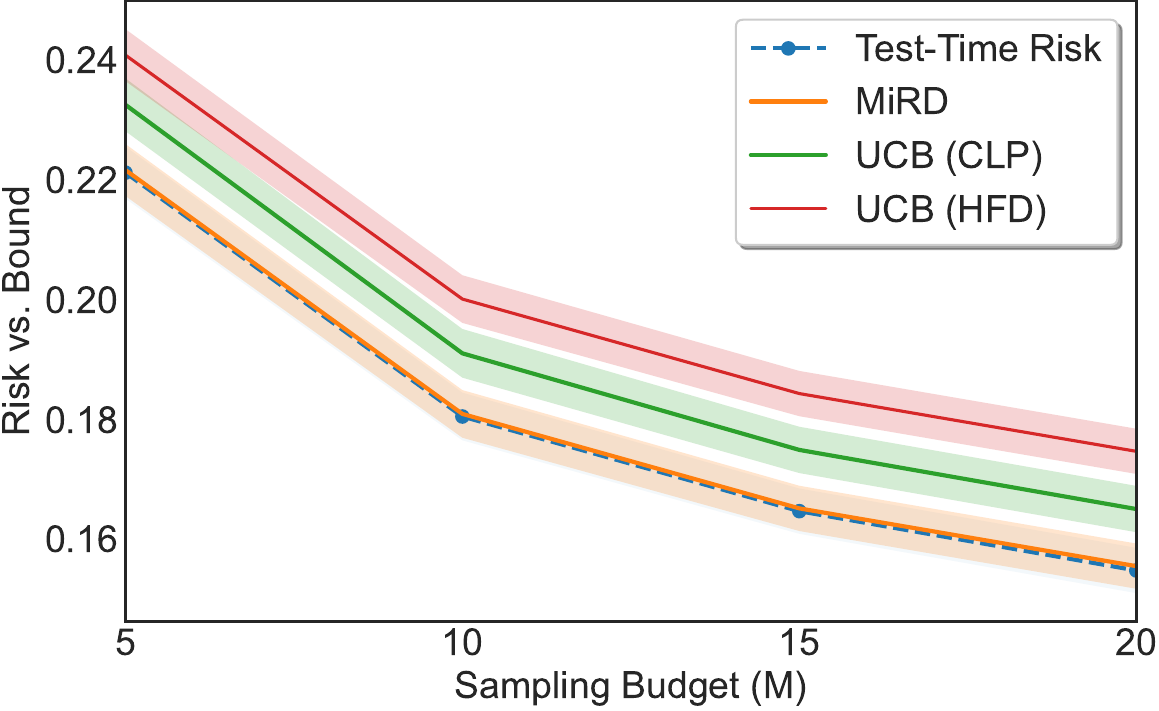}
    \caption{Vicuna-7B-V1.5.}
  \end{subfigure}
  \hfill
  \begin{subfigure}[b]{0.32\textwidth}
    \centering
    \includegraphics[width=\textwidth]{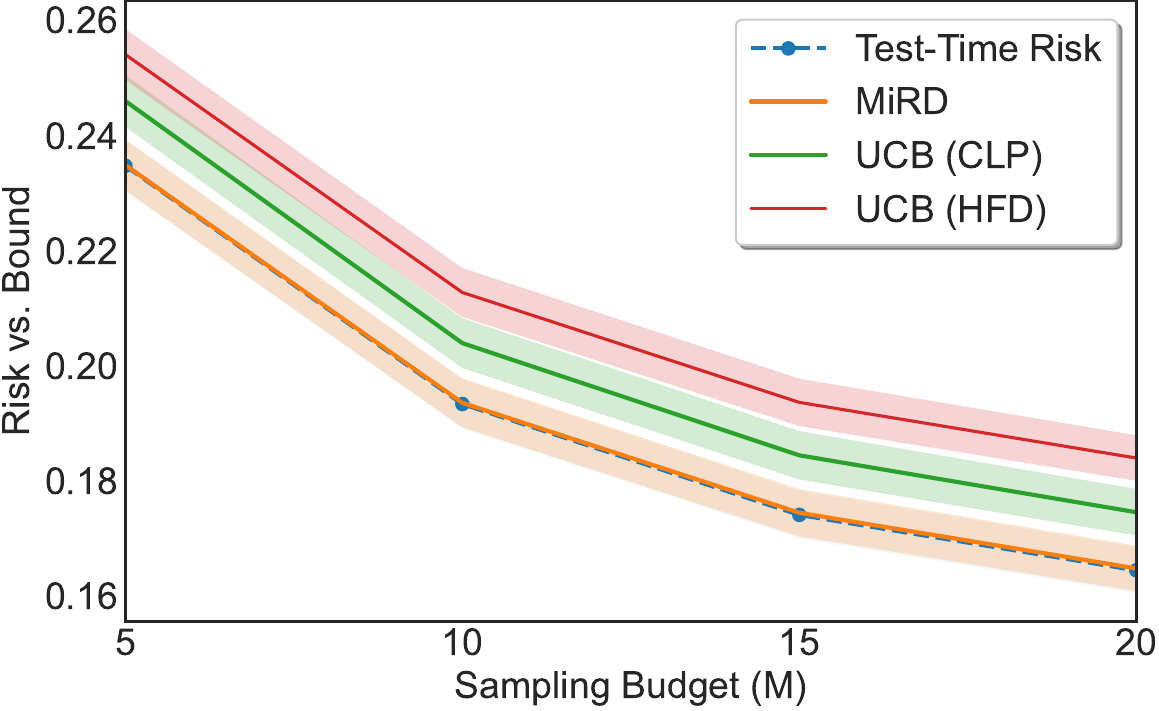}
    \caption{Vicuna-13B-V1.5.}
  \end{subfigure}

      \caption{Sampling risk vs. three upper bounds at various sampling budgets ($M$) on CoQA with six LLMs.}
  \label{fig: sampling risk control coqa (similairty-0.6).}
\end{figure*}

We further evaluate MiRD on CoQA, a context-supported open-ended QA benchmark. 
Compared with TriviaQA, CoQA provides supporting passages and therefore represents a complementary setting where answer generation is conditioned on explicit context rather than relying purely on parametric knowledge. 
The results show that the proposed miscoverage decomposition remains effective in this setting.

\begin{figure}[!t]
    \centering
    \includegraphics[width=\linewidth]{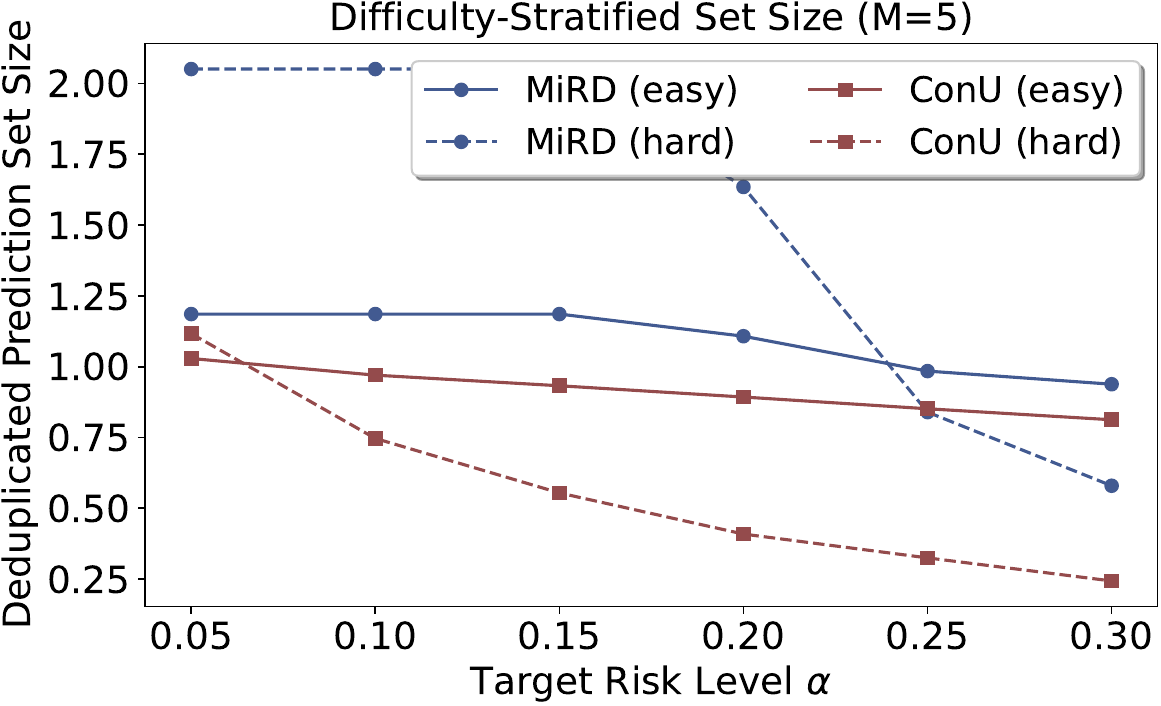}
    \caption{Difficulty-stratified deduplicated prediction set size on CoQA with LLaMA-3.1-8B-Instruct. Test samples are partitioned into easy and hard groups according to whether the top-1 candidate is admissible.}
    \label{fig:difficulty_stratified_set_size_5_coqa}
    \vspace{-4mm}
\end{figure}

\noindent \textbf{Sampling-failure risk control on CoQA.} 
Figure~\ref{fig: sampling risk control coqa (similairty-0.6).} illustrates the sampling-failure risk and its upper bounds on CoQA across six LLMs and multiple sampling budgets. 
Across all models and budgets considered, the MiRD bound consistently stays above the empirical test-time sampling-failure risk, validating the expectation-level marginal guarantee of Stage I. 
At the same time, the MiRD bound is uniformly tighter than both Clopper--Pearson and Hoeffding-style UCB bounds. 
This confirms that the advantage of the proposed expectation-level formulation is not specific to TriviaQA: even in a context-supported QA setting, MiRD provides a non-vacuous and empirically sharp characterization of finite-sampling failure.

\begin{figure}[!t]
    \centering
    \includegraphics[width=\linewidth]{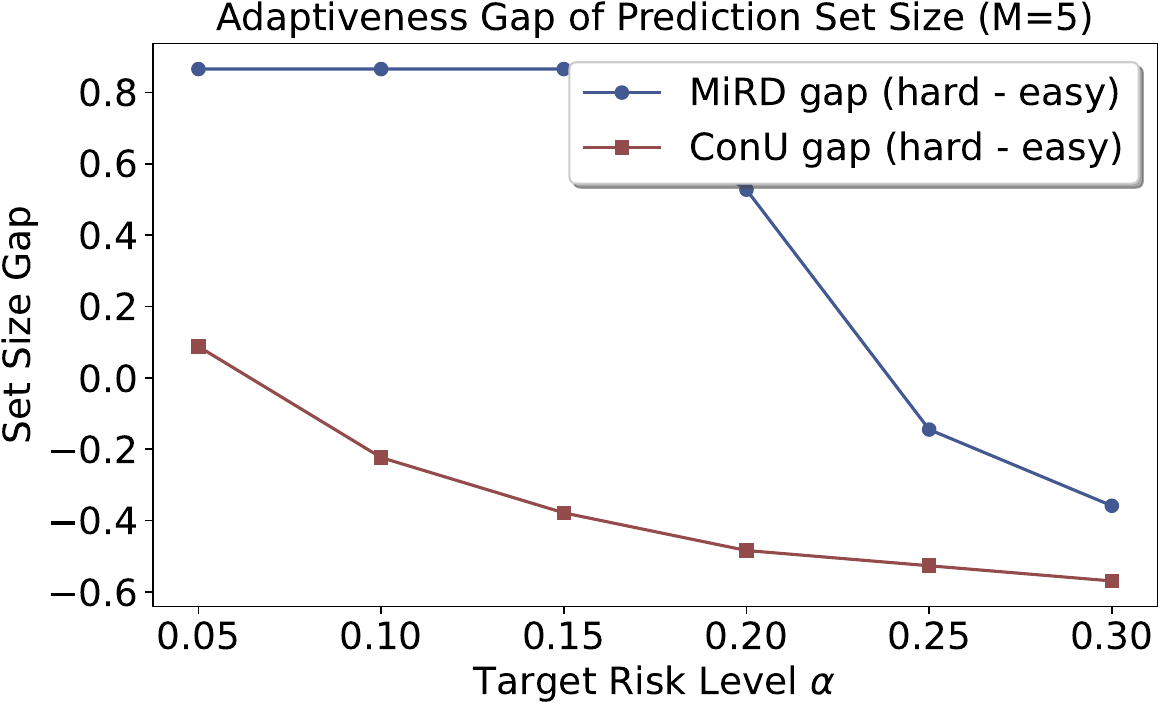}
    \caption{Adaptiveness gap of prediction set size on CoQA with LLaMA-3.1-8B-Instruct, defined as the difference between the average deduplicated prediction set sizes of hard and easy examples.}
    \label{fig:difficulty_stratified_gap_5_coqa}
    \vspace{-4mm}
\end{figure}

As expected, the sampling-failure risk decreases as the sampling budget $M$ increases. 
However, the risk remains non-negligible even at larger budgets for several backbones, showing that finite sampling cannot be ignored simply by increasing $M$. 
This supports the central motivation of MiRD: reliable open-ended set-valued prediction should explicitly account for the possibility that no admissible answer is sampled, rather than treating reliability only as a post-hoc filtering problem.

\noindent \textbf{Prediction-set adaptiveness on CoQA.} 
Figures~\ref{fig:difficulty_stratified_set_size_5_coqa} and~\ref{fig:difficulty_stratified_gap_5_coqa} analyze the deduplicated prediction set size on CoQA using LLaMA-3.1-8B-Instruct. 
We again divide sampling-successful test examples into easy and hard groups according to whether the top-1 candidate is admissible. 
MiRD produces larger prediction sets for hard examples than for easy examples over a broad range of risk levels, indicating that its set size reflects example difficulty. 
By contrast, ConU tends to shrink prediction sets more aggressively, especially on hard examples.

\begin{figure*}[!t]
  \centering
  \begin{subfigure}[b]{0.32\textwidth}
    \centering
    \includegraphics[width=\textwidth]{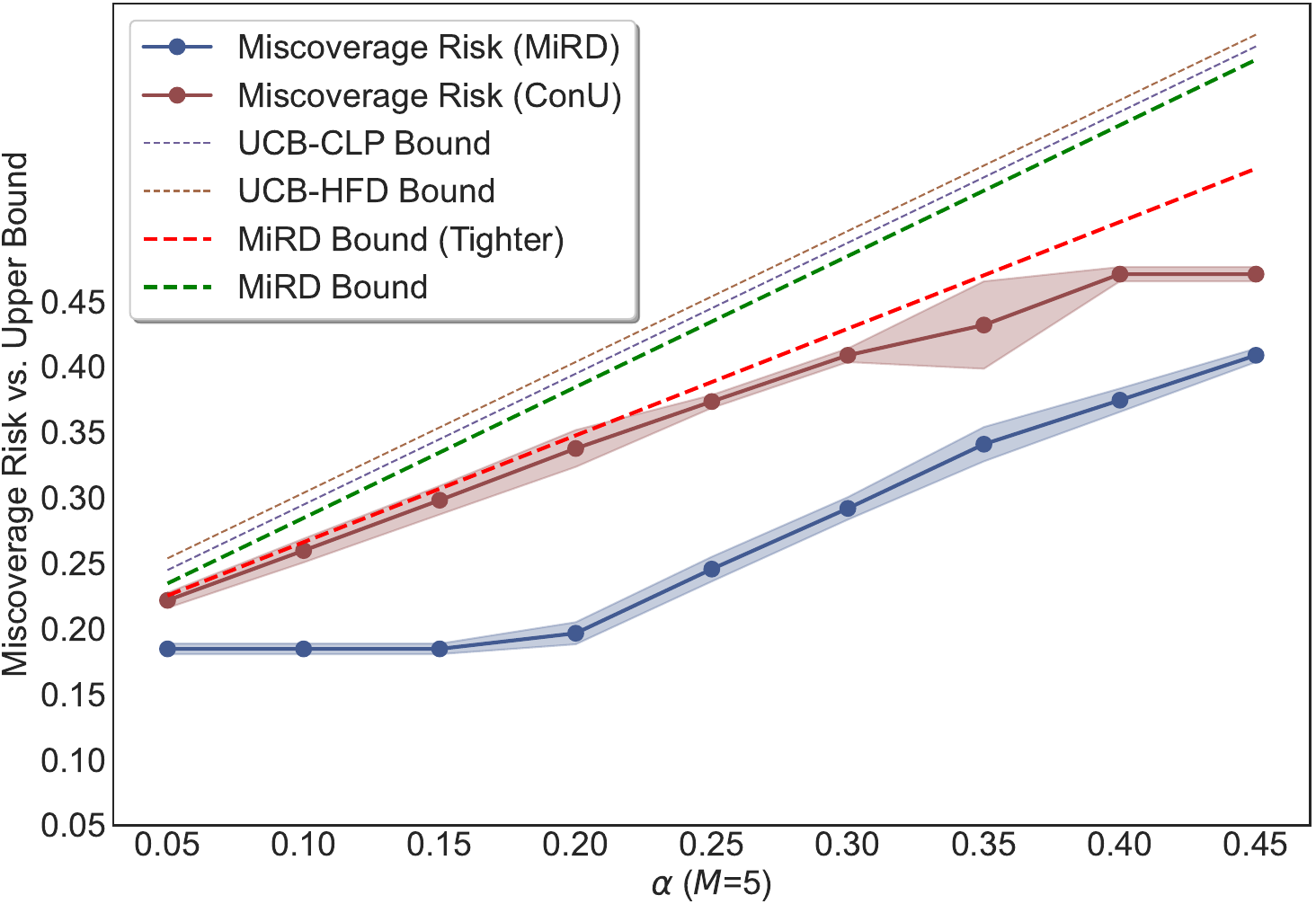}
    \caption{LLaMA-3.1-8B-Instruct.}
  \end{subfigure}
  \hfill
  \begin{subfigure}[b]{0.32\textwidth}
    \centering
    \includegraphics[width=\textwidth]{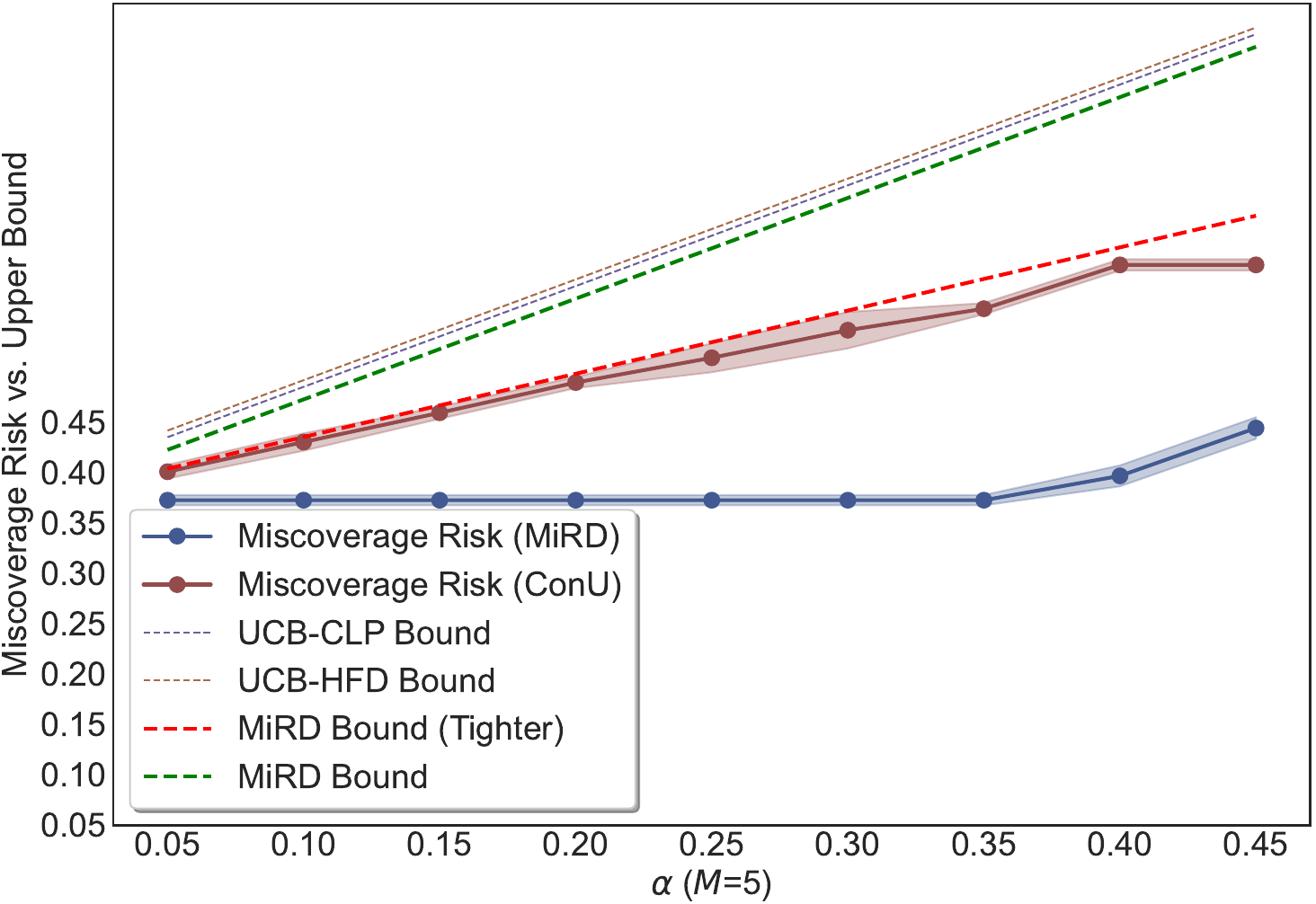}
    \caption{Qwen2.5-3B-Instruct.}
  \end{subfigure}
  \hfill
  \begin{subfigure}[b]{0.32\textwidth}
    \centering
    \includegraphics[width=\textwidth]{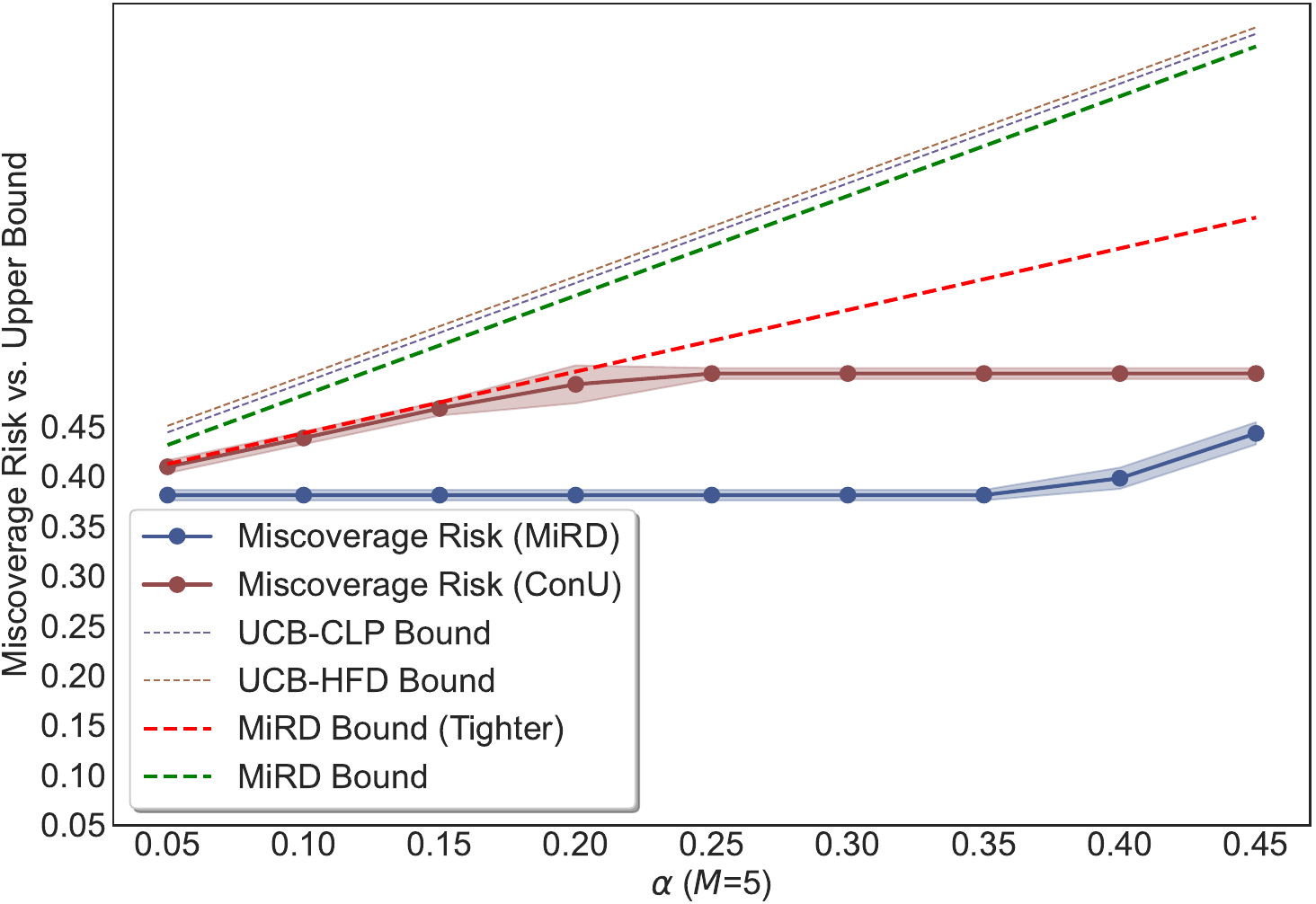}
    \caption{Qwen2.5-14B-Instruct.}
  \end{subfigure}

  \begin{subfigure}[b]{0.32\textwidth}
    \centering
    \includegraphics[width=\textwidth]{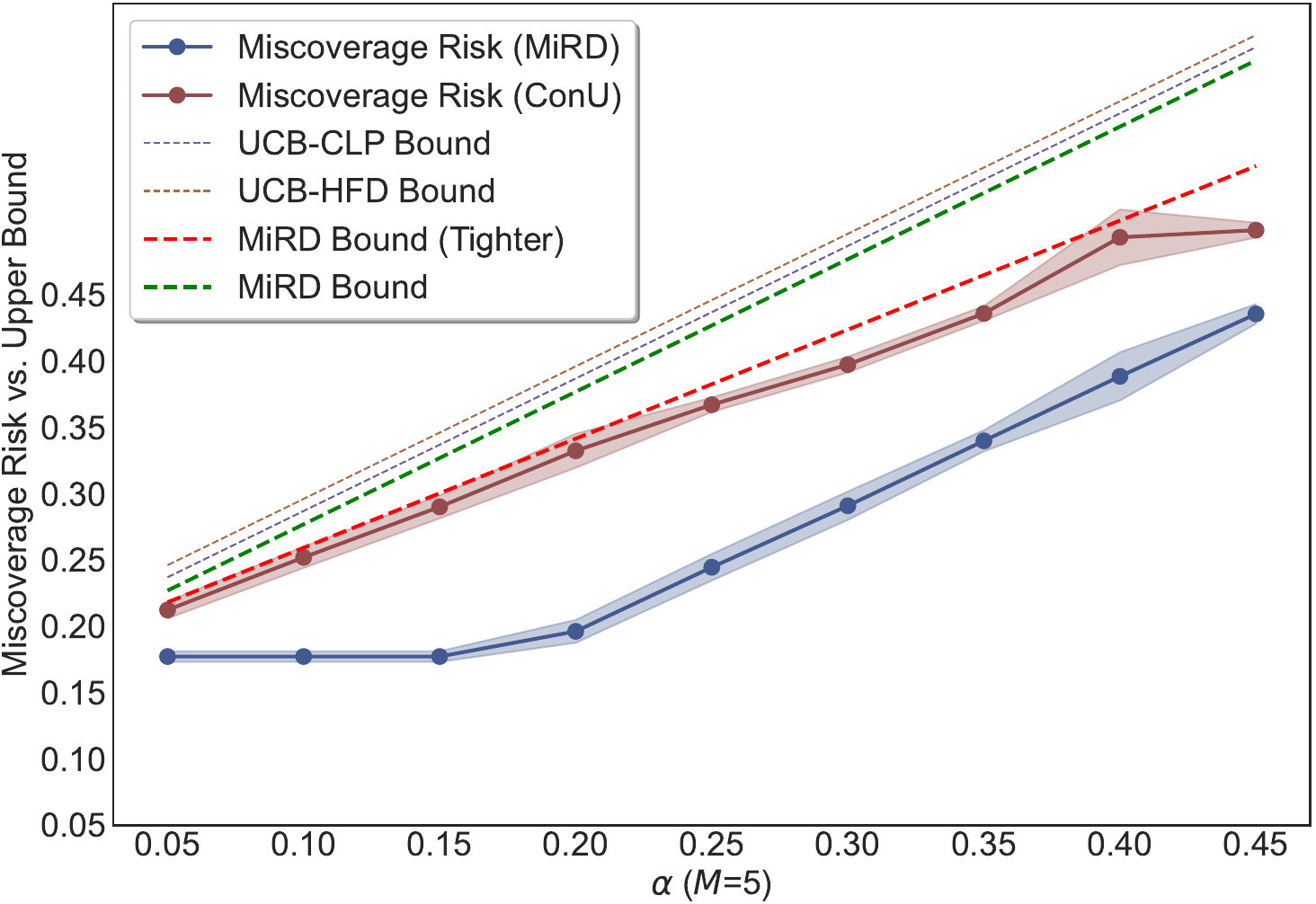}
    \caption{OpenChat-3.5 (7B).}
  \end{subfigure}
  \hfill
  \begin{subfigure}[b]{0.32\textwidth}
    \centering
    \includegraphics[width=\textwidth]{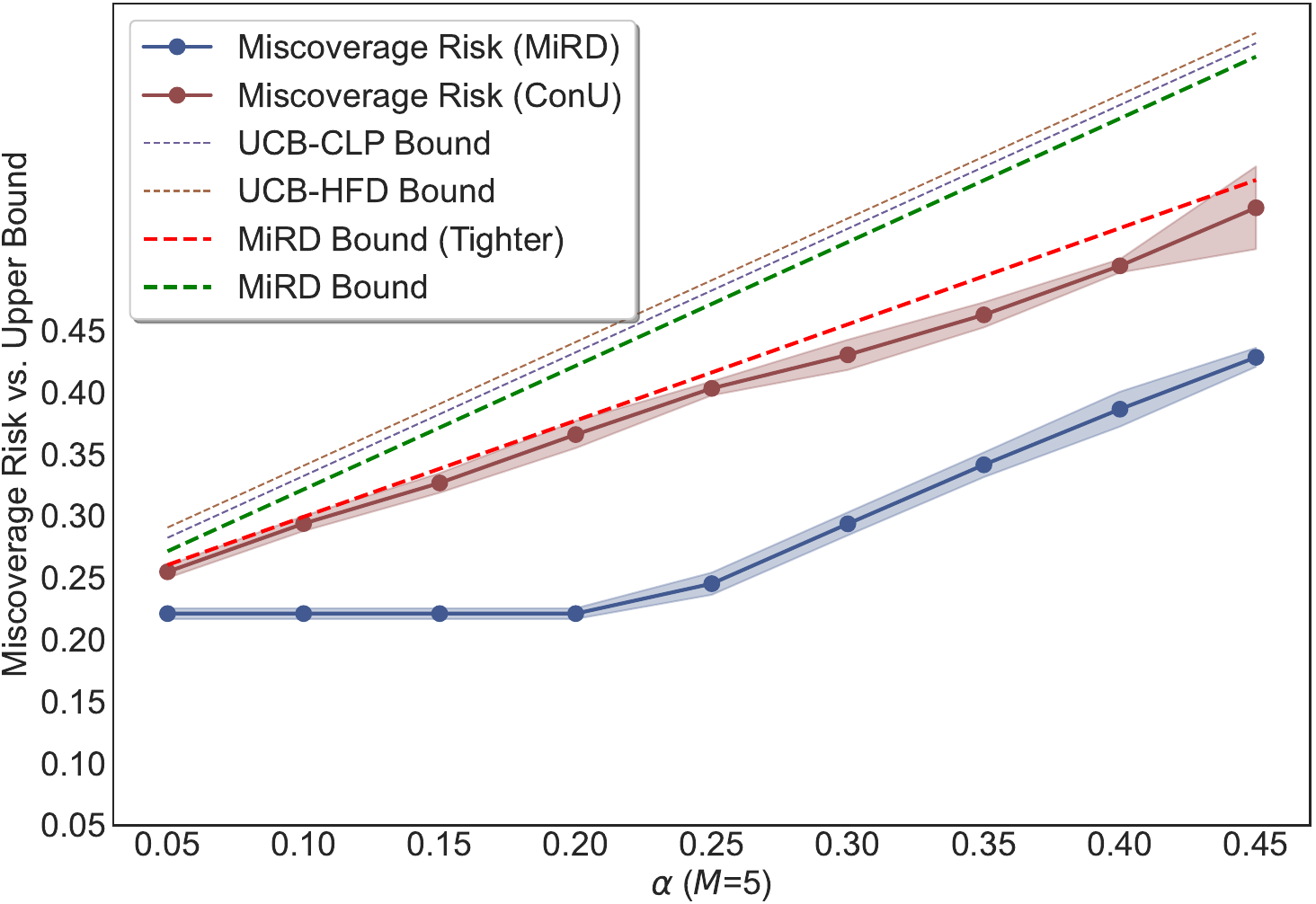}
    \caption{Vicuna-7B-V1.5.}
  \end{subfigure}
  \hfill
  \begin{subfigure}[b]{0.32\textwidth}
    \centering
    \includegraphics[width=\textwidth]{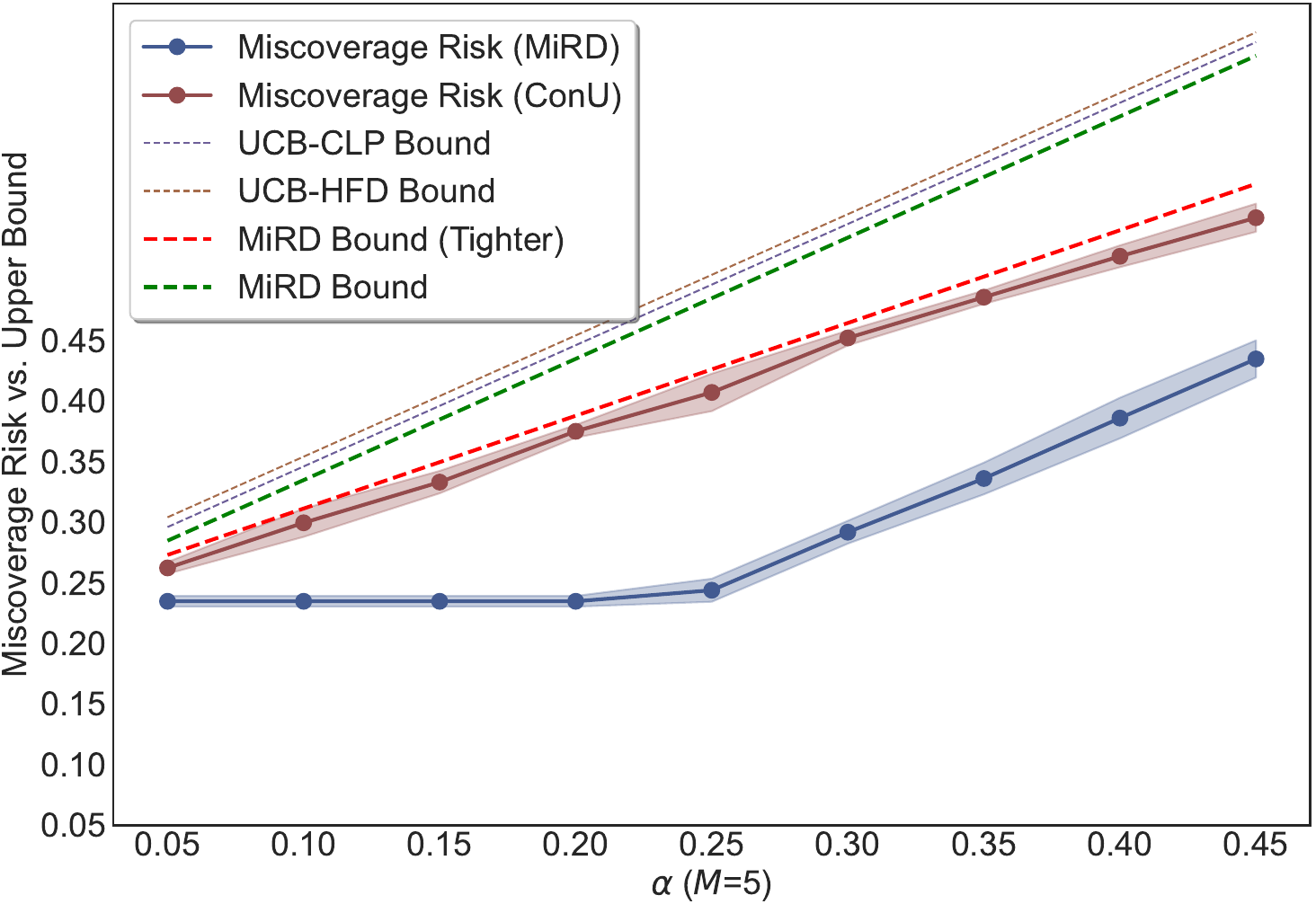}
    \caption{Vicuna-13B-V1.5.}
  \end{subfigure}

      \caption{Overall miscoverage risk vs. upper bound at various risk levels on CoQA with six LLMs ($M=5$).}
  \label{fig: miscoverage risk control coqa (similairty-0.6, M=5).}
  \vspace{-4mm}
\end{figure*}

The adaptiveness-gap plot further highlights this difference. 
MiRD maintains a positive hard--easy set-size gap at stricter risk levels, while ConU exhibits a much smaller or negative gap across most risk levels. 
This suggests that successful-only calibration may distort the relationship between set size and sample difficulty, whereas MiRD better preserves uncertainty-dependent variation by retaining the full calibration set. 
Thus, the additional prediction-set size introduced by MiRD is not merely uniform inflation; it is more aligned with the ambiguity of the test example.

\begin{figure*}[!t]
  \centering
  \begin{subfigure}[b]{0.48\textwidth}
    \centering
    \includegraphics[width=\textwidth]{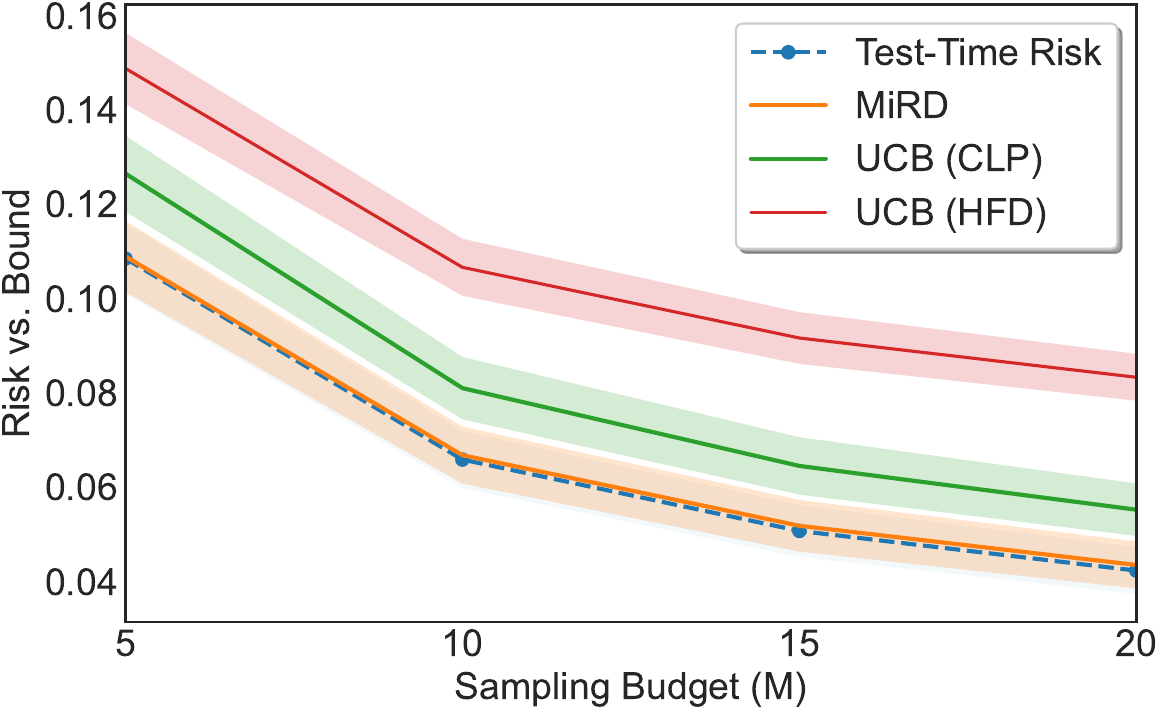}
    \caption{LLaMA-3.1-8B-Instruct (similarity).}
  \end{subfigure}
  \hfill
  \begin{subfigure}[b]{0.48\textwidth}
    \centering
    \includegraphics[width=\textwidth]{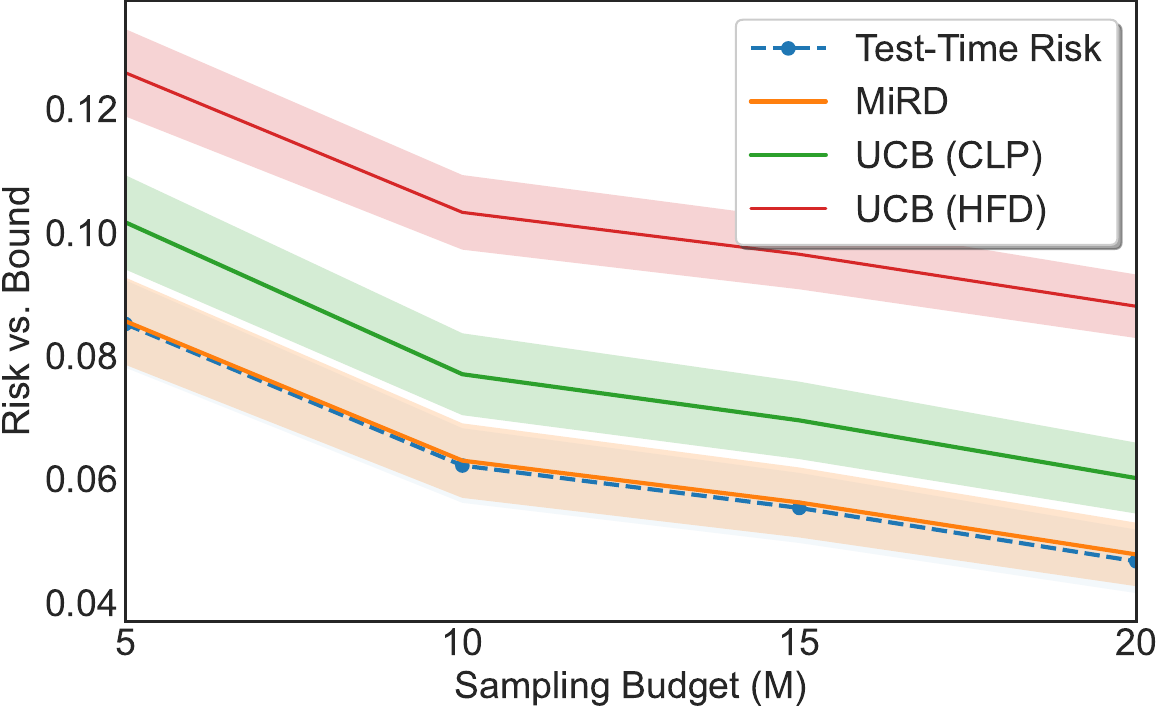}
    \caption{Qwen2.5-3B-Instruct (similarity).}
  \end{subfigure}

  \begin{subfigure}[b]{0.48\textwidth}
    \centering
    \includegraphics[width=\textwidth]{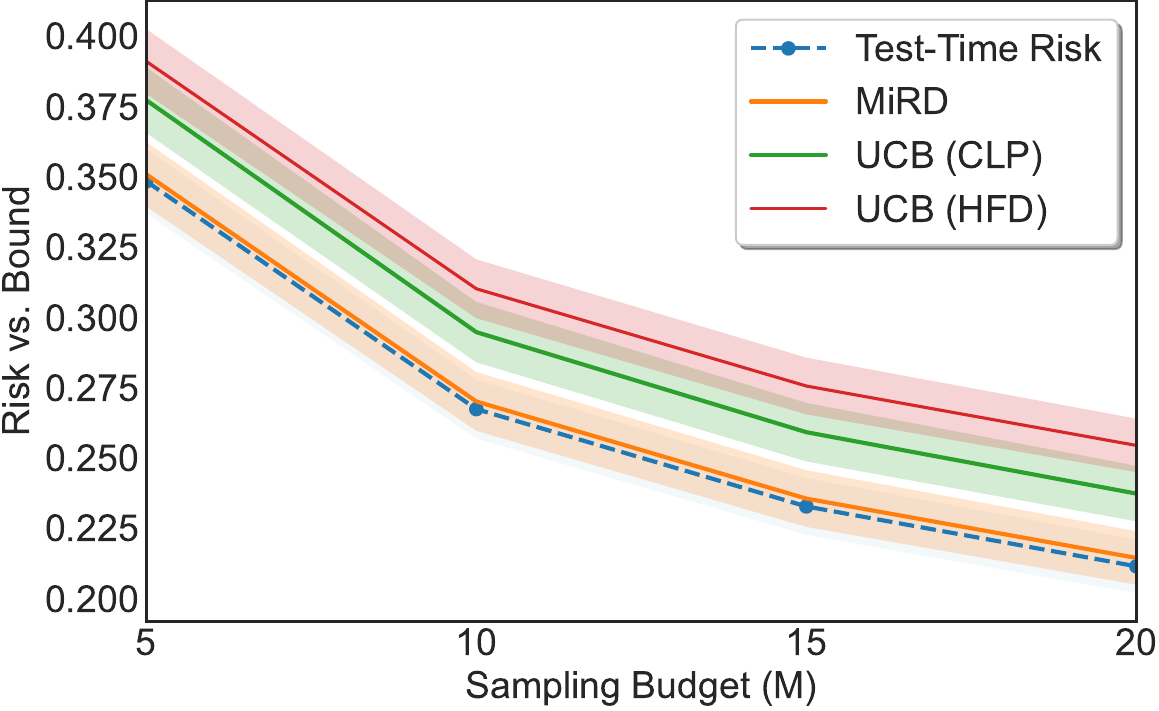}
    \caption{LLaMA-3.1-8B-Instruct (entailment).}
  \end{subfigure}
  \hfill
  \begin{subfigure}[b]{0.48\textwidth}
    \centering
    \includegraphics[width=\textwidth]{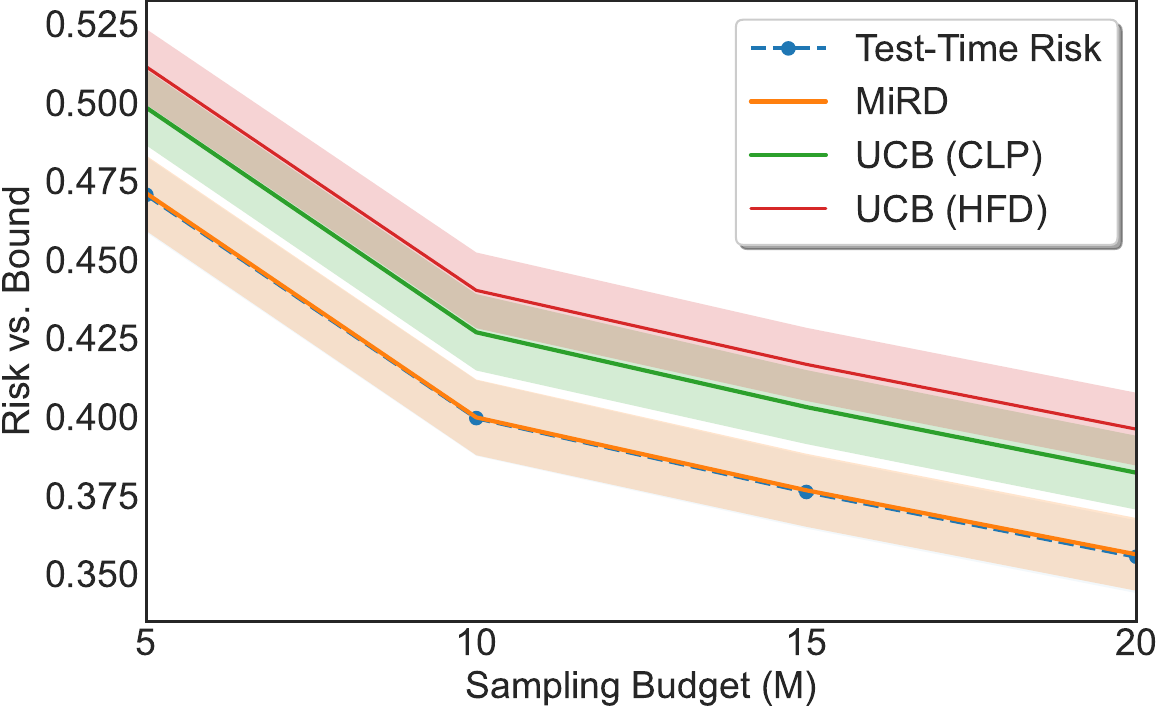}
    \caption{Qwen2.5-3B-Instruct (entailment).}
  \end{subfigure}
  
      \caption{Sampling risk vs. three upper bounds at various sampling budgets ($M$) on NQ with two LLMs.}
  \label{fig: sampling risk control nq (similairty-0.6 vs. entailment).}
\end{figure*}

\noindent \textbf{Overall miscoverage risk control on CoQA.} 
Figure~\ref{fig: miscoverage risk control coqa (similairty-0.6, M=5).} evaluates the end-to-end overall miscoverage risk on CoQA with $M=5$. 
Across all six LLMs, the empirical overall miscoverage of MiRD remains below the proposed upper bounds over the full range of target risk levels. 
Moreover, the tighter MiRD bound tracks the empirical risk more closely than the conservative bound, while the PAC-style UCB bounds remain substantially looser. 
This again confirms the practical value of the refined decomposition bound.

\begin{figure}[!t]
    \centering
    \includegraphics[width=\linewidth]{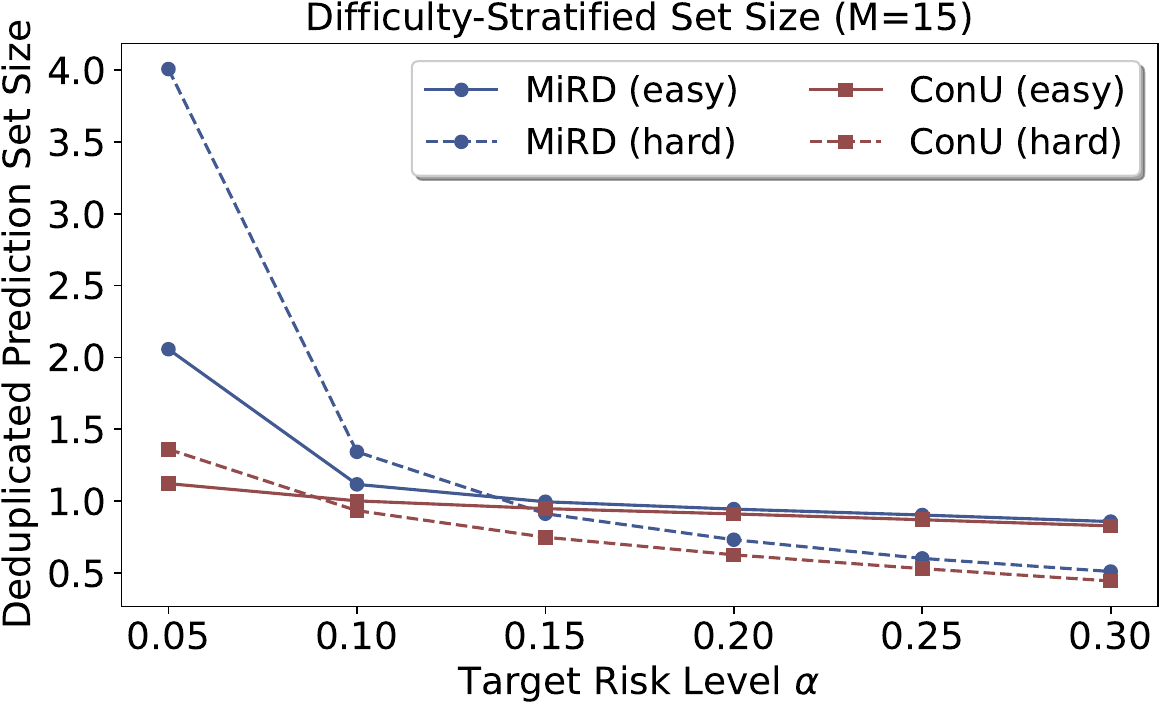}
    \caption{Difficulty-stratified deduplicated prediction set size on NQ with LLaMA-3.1-8B-Instruct. Test samples are partitioned into easy and hard groups according to whether the top-1 candidate is admissible.}
    \label{fig:difficulty_stratified_set_size_15_nq}
    \vspace{-4mm}
\end{figure}

\begin{figure}[!t]
    \centering
    \includegraphics[width=\linewidth]{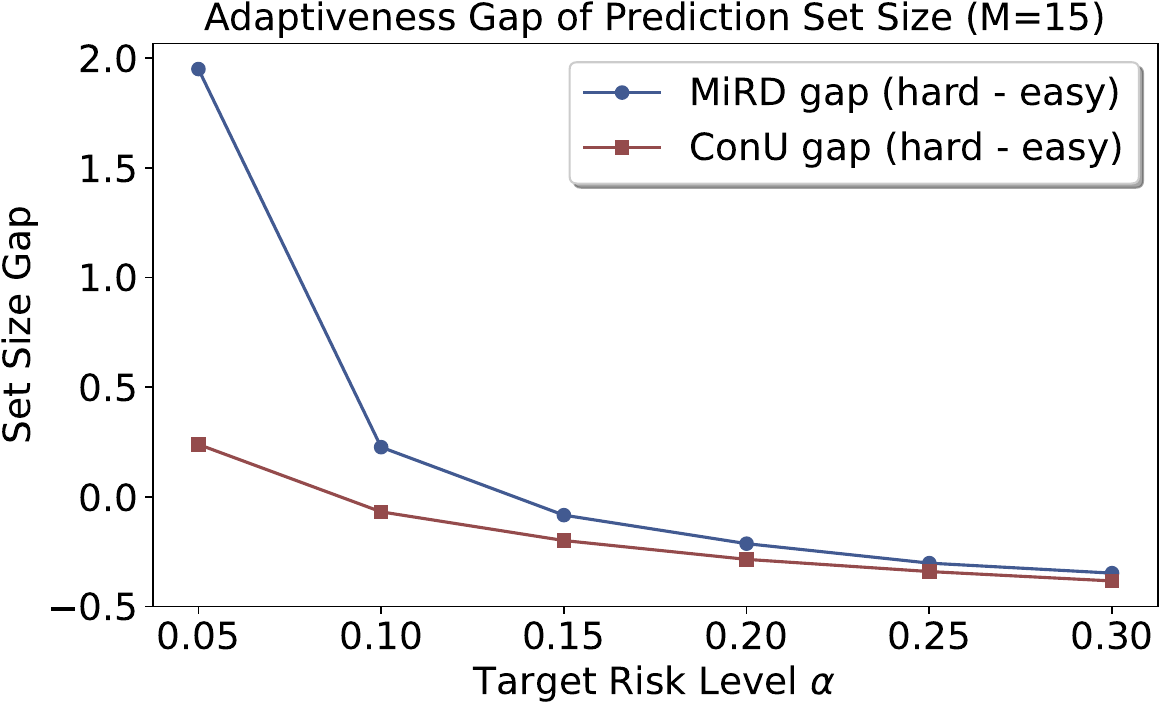}
    \caption{Adaptiveness gap of prediction set size on NQ with LLaMA-3.1-8B-Instruct, defined as the difference between the average deduplicated prediction set sizes of hard and easy examples.}
    \label{fig:difficulty_stratified_gap_15_nq}
    \vspace{-4mm}
\end{figure}

Compared with ConU, MiRD achieves competitive or lower overall miscoverage while explicitly accounting for finite-sampling failure. 
This result is particularly important on CoQA because the presence of supporting context does not eliminate sampling uncertainty: a model may still fail to generate an admissible answer within a finite budget, or may generate one but remove it during filtering. 
The CoQA results therefore reinforce the main conclusion that reliable open-ended QA requires separating finite-sampling failure from conditional post-selection failure, rather than collapsing both into a single filtering-stage error.

We finally evaluate MiRD on Natural Questions (NQ), a more challenging open-domain QA benchmark with real user questions. 
Compared with TriviaQA and CoQA, NQ poses a harder generation setting, making it a useful stress test for whether the proposed decomposition remains valid when finite sampling is more likely to fail. 
We report results under both sentence-similarity and bi-entailment correctness criteria.

\noindent \textbf{Sampling-failure risk control on NQ.} 
Figure~\ref{fig: sampling risk control nq (similairty-0.6 vs. entailment).} reports the sampling-failure risk and its upper bounds on NQ using two representative LLMs. 
Across both models, both correctness criteria, and all sampling budgets considered, the MiRD bound remains above the empirical test-time sampling-failure risk. 
This validates the expectation-level marginal guarantee of Stage I in a substantially more challenging QA setting. 
Moreover, MiRD continues to provide tighter bounds than both Clopper--Pearson and Hoeffding-style UCBs, showing that the advantage of the proposed marginal formulation is robust beyond the TriviaQA and CoQA settings.

The results also reveal that the choice of admissibility criterion substantially affects the magnitude of sampling failure. 
Under the stricter bi-entailment criterion, the empirical sampling-failure risk is noticeably higher than under sentence similarity, especially for weaker backbones. 
This observation further supports the need to treat finite-sampling failure as an explicit component of the reliability objective: when correctness is evaluated more conservatively, simply increasing the sampling budget may reduce but does not eliminate the risk that no admissible answer appears in the candidate set.

\noindent \textbf{Prediction-set adaptiveness on NQ.} 
Figures~\ref{fig:difficulty_stratified_set_size_15_nq} and~\ref{fig:difficulty_stratified_gap_15_nq} analyze the deduplicated prediction set size on NQ using LLaMA-3.1-8B-Instruct. 
As in the main experiments, we divide sampling-successful test examples into easy and hard groups according to whether the top-1 candidate is admissible. 
MiRD assigns larger prediction sets to hard examples than to easy examples across most risk levels, indicating that its prediction sets remain sensitive to example difficulty even on NQ.

\begin{figure*}[!t]
  \centering
  \begin{subfigure}[b]{0.48\textwidth}
    \centering
    \includegraphics[width=\textwidth]{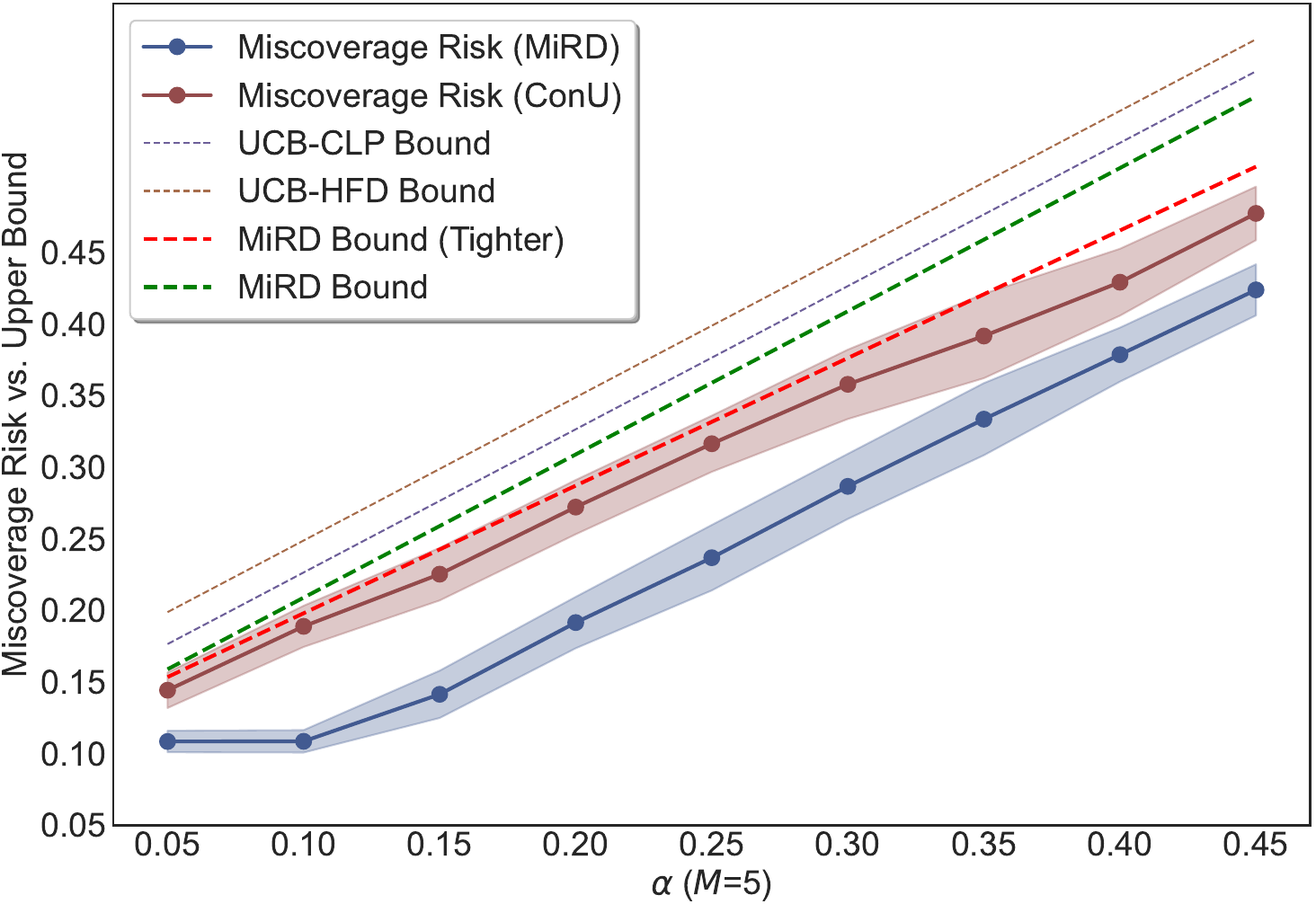}
    \caption{LLaMA-3.1-8B-Instruct (similarity).}
  \end{subfigure}
  \hfill
  \begin{subfigure}[b]{0.48\textwidth}
    \centering
    \includegraphics[width=\textwidth]{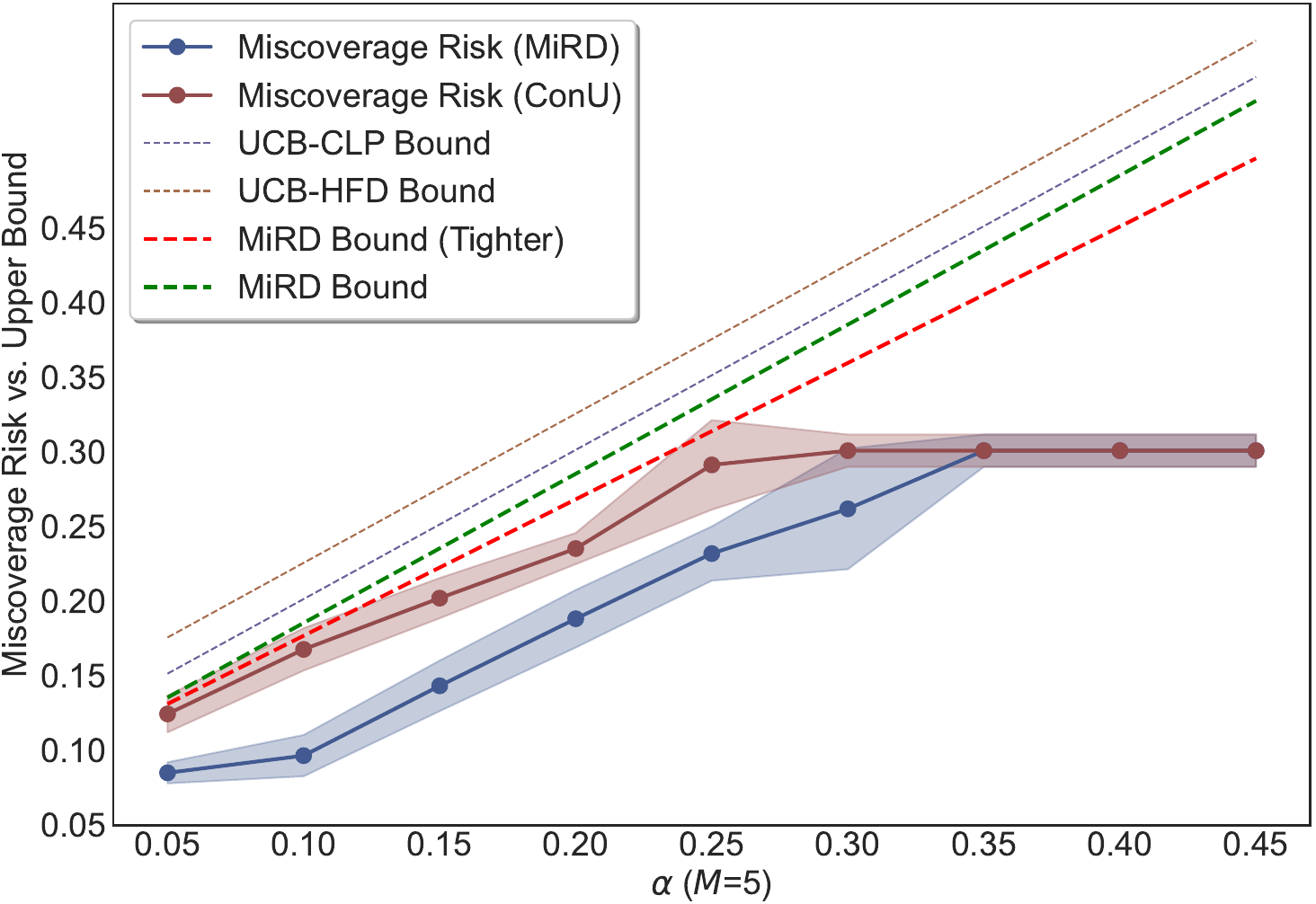}
    \caption{Qwen2.5-3B-Instruct (similarity).}
  \end{subfigure}

  \begin{subfigure}[b]{0.48\textwidth}
    \centering
    \includegraphics[width=\textwidth]{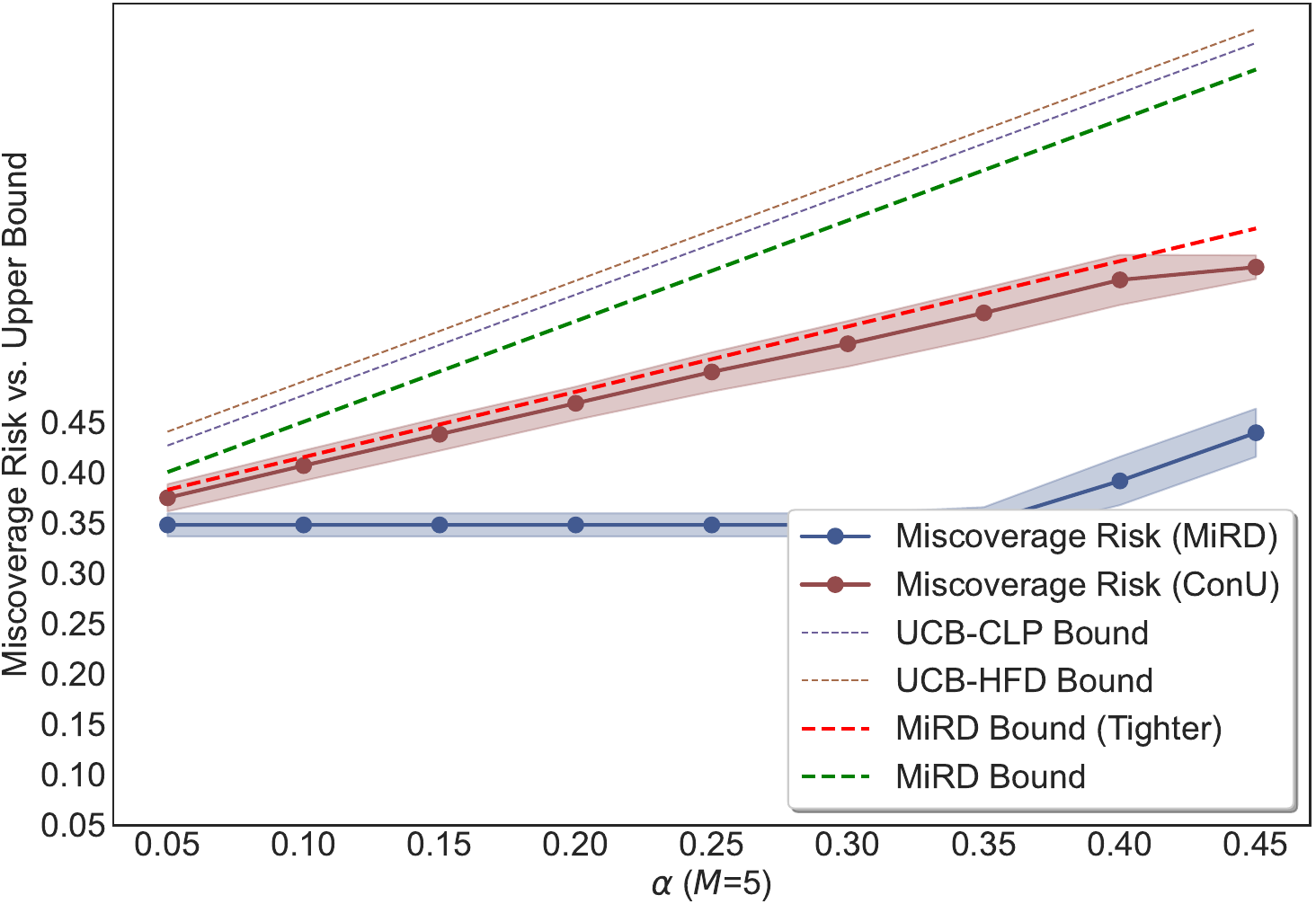}
    \caption{LLaMA-3.1-8B-Instruct (entailment).}
  \end{subfigure}
  \hfill
  \begin{subfigure}[b]{0.48\textwidth}
    \centering
    \includegraphics[width=\textwidth]{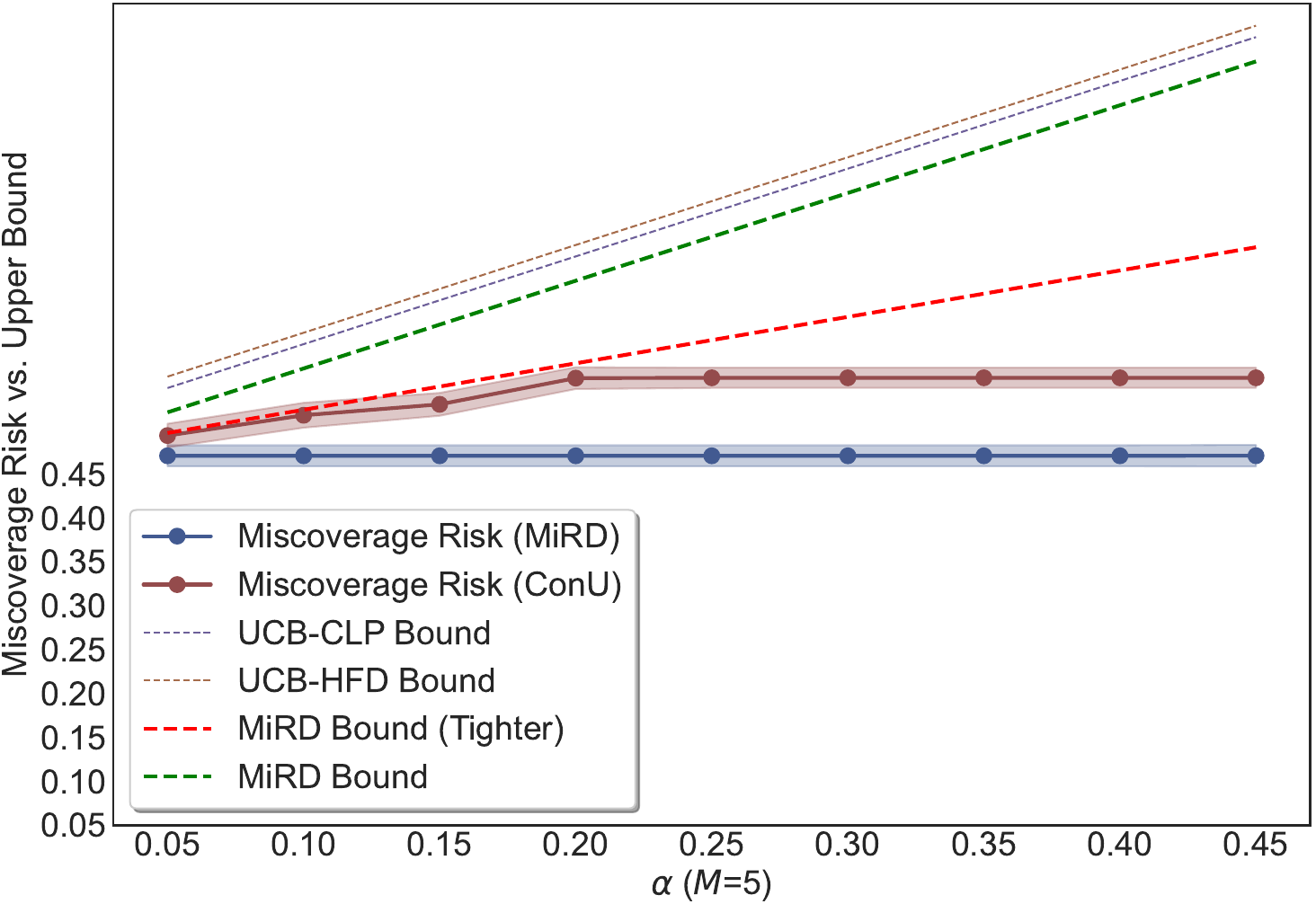}
    \caption{Qwen2.5-3B-Instruct (entailment).}
  \end{subfigure}

      \caption{Overall miscoverage risk vs. upper bound at various risk levels on NQ with two LLMs ($M=5$).}
  \label{fig: miscoverage risk control nq (similairty-0.6 vs. entail, M=5).}
  \vspace{-4mm}
\end{figure*}

The adaptiveness-gap plot further confirms this behavior. 
MiRD maintains a consistently larger hard--easy set-size gap than ConU, while ConU exhibits a much smaller gap and may even under-allocate set size to hard examples. 
This suggests that successful-only calibration can weaken the relationship between set size and sample difficulty, whereas MiRD better preserves uncertainty-dependent variation by calibrating over the full calibration distribution.

\noindent \textbf{Overall miscoverage risk control on NQ.} 
Figure~\ref{fig: miscoverage risk control nq (similairty-0.6 vs. entail, M=5).} evaluates the end-to-end overall miscoverage risk on NQ. 
Across both LLMs and both correctness criteria, the empirical overall miscoverage of MiRD remains below the proposed upper bounds over the range of target risk levels. 
The tighter MiRD bound again tracks the empirical miscoverage more closely than the conservative bound, while PAC-style UCB bounds remain looser. 
These results confirm that the refined decomposition bound remains useful even when the underlying sampling-failure risk is relatively high.

Compared with ConU, MiRD achieves competitive or lower overall miscoverage while explicitly accounting for finite-sampling failure. 
This is particularly important on NQ, where the harder question distribution and stricter admissibility criteria make sampling failure more pronounced. 
Together with the TriviaQA and CoQA results, the NQ experiments demonstrate that MiRD provides consistent risk control across knowledge-intensive, context-supported, and more challenging open-domain QA settings.


\end{document}